\def\f{{\boldsymbol{f}}}
\def\u{{\boldsymbol{u}}}
\def\bsv{{\boldsymbol{v}}}
\def\x{{\boldsymbol{x}}}
\def\z{{\boldsymbol{z}}}
\def\cA{{\mathcal{A}}}
\def\cB{{\mathcal{B}}}
\def\cN{{\mathcal{N}}}
\def\cO{{\mathcal{O}}}
\def\cS{{\mathcal{S}}}
\def\cT{{\mathcal{T}}}
\def\cX{{\mathcal{X}}}
\def\cY{{\mathcal{Y}}}
\newcommand{\EE}{\mathbb{E}}
\newcommand{\RR}{\mathbb{R}}
\newcommand{\prox}{{\mathbf{prox}}}
\newtheorem{assumption}{Assumption}
\newtheorem{lemma}{Lemma}
\newtheorem*{lemma*}{Lemma}
\newtheorem{proposition}{Proposition}
\newtheorem{theorem}{Theorem}
\newtheorem{remark}{Remark}
\newtheorem{definition}{Definition}
\title{Improved Analysis and Rates for Variance Reduction under Without-replacement Sampling Orders}
\author{%
  Xinmeng Huang\thanks{Equal Contribution. Correspondence to: Kun Yuan} \\
  University of Pennsylvania\\
  Philadelphia, PA 19104 \\
  \texttt{xinmengh@sas.upenn.edu} \\
   \And
   Kun Yuan$^*$\\
   DAMO Academy, Alibaba Group \\
   Bellevue, WA 98004 \\
   \texttt{kun.yuan@alibaba-inc.com} \\
   \AND
   Xianghui Mao \\
   Tsinghua University \\
   Beijing, China 100084 \\
   \texttt{xianghui.xh.mao@gmail.com} \\
   \And
   Wotao Yin \\
   DAMO Academy, Alibaba Group \\
   Bellevue, WA 98004 \\
   \texttt{wotao.yin@alibaba-inc.com} \\
}
\begin{document}

\maketitle

\begin{abstract}
When applying a stochastic algorithm, one must choose an order to draw samples. The practical choices are without-replacement sampling orders, which are empirically faster and more cache-friendly than uniform-iid-sampling but often have inferior theoretical guarantees. Without-replacement sampling is well understood only for SGD without variance reduction. In this paper,
we will improve the convergence analysis and rates of variance reduction under without-replacement sampling orders for composite finite-sum minimization.

Our results are in two-folds. First, we develop a damped variant of Finito called Prox-DFinito and  establish its convergence rates with random reshuffling, cyclic sampling, and shuffling-once, under both convex and strongly convex scenarios. These rates match full-batch gradient descent and are state-of-the-art compared to the existing results for without-replacement sampling with variance-reduction. Second, our analysis can gauge how the cyclic order will influence the rate of cyclic sampling and, thus, allows us to derive the optimal fixed ordering. In the highly data-heterogeneous scenario, Prox-DFinito with optimal cyclic sampling can attain a sample-size-independent convergence rate, which, to our knowledge, is the first result that can match with uniform-iid-sampling with variance reduction. We also propose a practical method to discover the optimal cyclic ordering numerically.
\end{abstract}

\section{Introduction}
We study the finite-sum composite optimization problem
\begin{align}\label{general-prob}
\min_{x \in \mathbb{R}^d} \ F(x) + r(x) \quad \mbox{and} \quad F(x) = \frac{1}{n}\sum_{i=1}^n f_i(x).
\end{align}
where each $f_i(x)$ is differentiable and convex, and the regularization function $r(x)$ is convex but not necessarily differentiable. This formulation arises in many problems
in machine learning \cite{tibshirani1996regression,yu2011dual,johnson2013accelerating}, distributed optimization \cite{mateos2010distributed,boyd2011distributed,mao2018walk}, and signal processing \cite{chen2001atomic,donoho2006compressed}. 

\begin{table*}[t]
\caption{Number of individual gradient evaluations needed by each algorithm to reach an $\epsilon$-accurate solution. Notation $\tilde{\cO}(\cdot)$ hides logarithmic factors. Error metrics $(\mathbb{E})\|\nabla F(x)\|^2$ and $(\mathbb{E})\|x-x^\star\|^2$ are used for convex and  strongly convex problems, respectively.  \vspace{1mm}}
\footnotesize
\centering
    \begin{tabular}{c  c c  c  c  c}
    \toprule
    \textbf{Algorithm} & \textbf{Supp. Prox} &  \textbf{Sampling} & \textbf{Asmp}$^\diamond$& \textbf{Convex} & \textbf{Strongly Convex} \vspace{1mm}\\ \midrule
    Prox-GD & Yes & Full-batch & $F(x)$ &  $\cO(n\frac{L^2}{\epsilon})$ &  $\cO(n\frac{L}{\mu}\log(\frac{1}{\epsilon}))$  \vspace{0.7mm}\\ \hline
    SGD\,\cite{mishchenko2020random} & No & Cyclic & $f_i(x)$ &  $\cO(n(\frac{L}{\epsilon})^{\frac{3}{2}})$ &  $\cO(n(\frac{L}{\mu})^\frac{3}{2}\frac{1}{\sqrt{\epsilon}})$  \vspace{0.7mm}\\ \hline
    SGD\,\cite{mishchenko2020random} & No & RR & $f_i(x)$ & $\cO(n^\frac{1}{2}(\frac{L}{\epsilon})^{\frac{3}{2}})$ &  $\cO({\sqrt{n}}(\frac{L}{\mu})^\frac{3}{2}\frac{1}{\sqrt{\epsilon}})$ \vspace{0.7mm}\\ \hline
    PSGD\,\cite{mishchenko2021proximal} & Yes & RR  & $f_i(x)$ & $\diagdown$ &  
    $\cO(n(\frac{L}{\mu})^\frac{3}{2}\frac{1}{\sqrt{\epsilon}})$ \vspace{0.7mm}
    \\ \hline
    PIAG\,\cite{Vanli2016ASC} & Yes & Cyclic/RR & $F(x)$  & $\diagdown$ & $\cO(n\frac{L}{\mu}\log(\frac{1}{\epsilon}))$ \vspace{0.7mm}\\ \hline
    AVRG\,\cite{ying2020variance} & No & RR & $f_i(x)$  & $\diagdown$ &  $\cO(n(\frac{L}{\mu})^2\log(\frac{1}{\epsilon}))$ \vspace{0.7mm}\\ \hline
    SAGA \,\cite{sun2019general} & Yes & Cyclic  & $f_i(x)$ & $\cO(n^3\frac{L^2}{\epsilon})$ & $\cO(n^3(\frac{L}{\mu})^2\log(\frac{1}{\epsilon}))$\vspace{0.7mm}\\ \hline
    SVRG \,\cite{sun2019general} & Yes & Cyclic & $f_i(x)$  & $\cO(n^3\frac{L^2}{\epsilon})$ & $\cO(n^3(\frac{L}{\mu})^2\log(\frac{1}{\epsilon}))$\vspace{0.7mm}\\ \hline
    DIAG\,\cite{mokhtari2018surpassing} & No & Cyclic & $f_i(x)$  & $\diagdown$ & $\cO(n\frac{L}{\mu}\log(\frac{1}{\epsilon}))$ \vspace{0.7mm}\\\hline
    Cyc. Cord. Upd \,\cite{chow2017cyclic}  & Yes & Cyclic/RR & $f_i(x)$  &$\cO(n2^n\frac{L^2}{\epsilon})$ &$\cO(n(\frac{L}{\mu})^3\log(\frac{1}{\epsilon}))$ \vspace{0.7mm}\\ \hline
    SVRG\,\cite{malinovsky2021random}$^\clubsuit$ & No & Cyclic/RR & $F(x)$ & $\cO(n\frac{L^2}{\epsilon})$ & $\cO(n(\frac{L}{\mu})^\frac{3}{2}\log(\frac{1}{\epsilon}))$\vspace{0.7mm}\\\hline
    SVRG\,\cite{malinovsky2021random}$^\clubsuit$ & No & RR & $F(x)$ & $\cO(n\frac{L^2}{\epsilon})$ & $\cO(n\frac{L}{\mu}\log(\frac{1}{\epsilon}))^\dagger$\vspace{0.7mm}\\\hline
    SVRG\,\cite{malinovsky2021random}$^\clubsuit$ & No & RR & $f_i(x)$ & $\cO(n\frac{L^2}{\epsilon})$ & $\cO(n^\frac{1}{2}(\frac{L}{\mu})^\frac{3}{2}\log(\frac{1}{\epsilon}))^\dagger$\vspace{0.7mm}\\\hline
    \textcolor{blue}{{Prox-DFinito} (Ours)} & \textcolor{blue}{Yes} & \textcolor{blue}{RR}  & \textcolor{blue}{$f_i(x)$} & \textcolor{blue}{${\cO}(n\frac{L^2}{\epsilon})$} & \textcolor{blue}{$\cO(n\frac{L}{\mu}\log(\frac{1}{\epsilon}))$} \vspace{0.7mm}\\ 
    \hline
    {\color{blue}Prox-DFinito (Ours)} & \textcolor{blue}{Yes} & \textcolor{blue}{worst cyc. order} & \textcolor{blue}{$f_i(x)$}  & \textcolor{blue}{$\tilde{\cO}(n\frac{L^2}{\epsilon})$}& \textcolor{blue}{$\tilde{\cO}(n\frac{L}{\mu}\log(\frac{1}{\epsilon}))$}\vspace{0.7mm} \\ \hline
    {\color{blue}Prox-DFinito (Ours)} & \textcolor{blue}{Yes} & \textcolor{blue}{best cyc. order} & \textcolor{blue}{$f_i(x)$}  & \textcolor{blue}{$\tilde{\cO}(\frac{L^2}{\epsilon})^\ddag$}& \textcolor{blue}{$\tilde{\cO}(n\frac{L}{\mu}\log(\frac{1}{n\epsilon}))^\ddag$} \vspace{0.7mm}\\ 
    \bottomrule
    \multicolumn{6}{l}{$^\diamond$\,\footnotesize{The ``Assumption'' column indicates the scope of smoothness and strong convexity, where $F(x)$ means}}\\
     \multicolumn{6}{l}{\,\,\,\,\footnotesize{smoothness and strong convexity in the average sense and $f_i(x)$ assumes smoothness and strong convexity}}\\
     \multicolumn{6}{l}{\,\,\,\,\footnotesize{for each summand function.}}\\
      \multicolumn{6}{l}{$^\clubsuit$\footnotesize{\cite{malinovsky2021random} is an \textbf{independent} and \textbf{concurrent} work.}}\\
     \multicolumn{6}{l}{$^\dagger$\,\footnotesize{This  complexity is attained under big data regime: $n>\cO(\frac{L}{\mu})$.}}\\
    \multicolumn{6}{l}{$^\ddag$\,\footnotesize{This  complexity can be attained in a highly heterogeneous scenario, see details in Sec~\ref{opt-cyc-1/n}.}}
    \end{tabular}
    \label{table-introduction-comparison}
    \vspace{-5mm}
 \end{table*}

The leading methods to solve \eqref{general-prob} are first-order algorithms such as stochastic gradient descent (SGD) \cite{robbins1951stochastic, bottou2018optimization} and  stochastic variance-reduced methods \cite{johnson2013accelerating,defazio2014saga,defazio2014finito,mairal2015incremental,gurbuzbalaban2017convergence,schmidt2017minimizing}. In the implementation of these methods, each $f_i(x)$ can be sampled either {\em with} or {\em without replacement}. Without-replacement sampling
 draws each $f_i(x)$ exactly {\em once} during an epoch, which is numerically faster than with-replacement sampling and more cache-friendly; see the experiments in  \cite{bottou2009curiously,ying2018stochastic,gurbuzbalaban2019random,defazio2014finito,ying2020variance,chow2017cyclic}. This has triggered significant interests in understanding the theory behind without-replacement sampling.

Among the most popular without-replacement approaches are cyclic sampling, random reshuffling, and shuffling-once. Cyclic sampling draws the samples in a  cyclic order. Random reshuffling reorders the samples at the beginning of each sample epoch. The third approach, however, shuffles data only once before the training begins. Without-replacement sampling have been extensively studied for SGD. It was established in  \cite{bottou2009curiously,ying2018stochastic,gurbuzbalaban2019random,mishchenko2020random,nagaraj2019sgd} that without-replacement sampling enables SGD with faster convergence
For example, it was proved that without-replacement sampling can speed up uniform-iid-sampling SGD from $\tilde{\cO}(1/k)$ to $\tilde{\cO}(1/k^2)$ (where $k$ is the iteration) for strongly-convex costs in \cite{gurbuzbalaban2019random, haochen2019random}, and $\cO(1/k^{1/2})$ to $\cO(1/k)$ for the convex costs in \cite{nagaraj2019sgd,mishchenko2020random}. \cite{safran2020good} establishes a tight lower bound for random reshuffling SGD. Recent works \cite{rajput2020closing,mishchenko2020random} close the gap between upper and lower bounds. Authors of  \cite{mishchenko2020random} also analyzes without-replacement SGD with non-convex costs.


In contrast to 
the mature results in 
SGD, variance-reduction under without-replacement sampling are less understood. Variance reduction strategies construct stochastic gradient estimators with vanishing gradient variance, which allows for much larger learning rate and hence speed up training process. Variance reduction under without-replacement sampling is difficult to analyze. In the strongly convex scenario, \cite{ying2020variance, sun2019general} provide linear convergence guarantees for SVRG/SAGA with random reshuffling, but the rates are worse than full-batch gradient descent (GD). Authors of  \cite{Vanli2016ASC,mokhtari2018surpassing} improved the rate so that it  can match with GD. In convex scenario,  existing rates for without-replacement sampling with variance reduction, {except for the rate established in an independent and concurrent work \cite{malinovsky2021random}}, are still far worse than GD  \cite{sun2019general,chow2017cyclic}, see Table \ref{table-introduction-comparison}. Furthermore, no  existing rates for variance reduction under without-replacement sampling orders, in either convex or strongly convex scenarios, can match those under uniform-iid-sampling which are essentially sample-size independent. There is a clear gap between the known convergence rates 
and superior practical performance for without-replacement sampling with variance reduction.


\subsection{Main results}

This paper narrows such gap by providing convergence analysis and rates for proximal DFinito,  a proximal damped variant of Finito/MISO \cite{defazio2014finito,mairal2015incremental,qian2019miso}, which is a well-known variance reduction algorithm, under without-replacement sampling orders. Our main achieved results are:
\begin{itemize}
    \item We develop a proximal damped variant of Finito/MISO called Prox-DFinito and establish its gradient complexities with {\em random reshuffling, cyclic sampling, and shuffling-once}, under both convex  and strongly convex scenarios. All these rates match with gradient descent, and are state-of-the-art  (up to logarithm factors) compared to existing results for without-replacement sampling with variance-reduction, see Table \ref{table-introduction-comparison}. 
    
    \item Our novel analysis can gauge how a cyclic order will influence the rate of Prox-DFinito with cyclic sampling. This allows us to identify the {\em optimal} cyclic sampling ordering. Prox-DFinito with optimal cyclic sampling, in the highly data-heterogeneous scenario, can attain a  {\em sample-size-independent} convergence rate, which is the first result, to our knowledge, that can match with uniform-iid-sampling with variance reduction in certain scenarios. We also propose a numerical method to discover the optimal cyclic ordering cheaply.
\end{itemize}

\subsection{Other related works}

Our analysis on cyclic sampling is novel. Most existing analyses unify random reshuffling and cyclic sampling into the same framework; see the SGD analysis in \cite{gurbuzbalaban2019random}, the variance-reduction analysis in \cite{gurbuzbalaban2017convergence,vanli2018global,mokhtari2018surpassing,ying2020variance}, and the coordinate-update analysis in  \cite{chow2017cyclic}. These analyses are primarily based on  the ``sampled-once-per-epoch'' property and do not analyze the orders within each epoch, so they do not distinguish
cyclic sampling from random reshuffling in analysis. 
\cite{ma2020understanding} finds that random reshuffling SGD is basically the average over all cyclic sampling trials. This implies cyclic sampling can outperform random reshuffling with a well-designed sampling order. However,  \cite{ma2020understanding} does not discuss how much better cyclic sampling can 
outperform random reshuffling and how to achieve such cyclic order. Different from existing literatures, our analysis introduces an order-specific norm to gauge how cyclic sampling performs with different fixed orders. With such norm, we are able to clarify the worst-case and best-case performance of variance reduction with cyclic sampling.

Simultaneously and independently, a recent work \cite{malinovsky2021random} also provided an improved rates for variance reduction under without-replacement sampling orders that can match with gradient descent. However, \cite{malinovsky2021random} does not discuss whether and when variance reduction with replacement sampling can match with uniform sampling. In addition, \cite{malinovsky2021random} studies SVRG while this paper studies Finito/MISO. The convergence analyses in these two works are very different. The detailed comparison between this work and \cite{malinovsky2021random} can be referred to Sec.~\ref{sec:comparison}.


\subsection{Notations}
Throughout the paper we let $\mathrm{col}\{x_1, \cdots, x_n\}$ denote a column vector formed by stacking $x_1,\cdots, x_n$. We let $[n]:= \{1,\cdots, n\}$ and define the proximal operator as
\begin{align}\label{prox-definition}
\prox_{\alpha r}(x) := \arg\min_{y\in \RR^d}\{\alpha \,r(y) + \frac{1}{2}\|y - x\|^2\}
\end{align}
which is single-valued when $r$ is convex, closed and proper. In general, we say $\cA$ is an operator and write $\cA:\cX\rightarrow \cY$ if $\cA$ maps each point in space $\cX$ to another space $\cY$. So $\cA(\x)\in  \cY$ for all $\x\in\cX$. For simplicity, we write $\cA\x = \cA(\x)$ and $\cA\circ \cB\x=\cA(\cB(\x))$ for operator composition. 

\textbf{Cyclic sampling.} We define $\pi := (\pi(1),\pi(2),\dots,\pi(n))$ as an arbitrary {\em determined} permutation of sample indexes. The order $\pi$ is {\em fixed} throughout the entire learning process under cyclic sampling.

\textbf{Random reshuffling.} 
When starting each epoch, a {\em random} permutation $\tau:=(\tau(1), \tau(2), ..., \tau(n))$ is generated to specify the order to take samples. 
Let $\tau_k$ denote the permutation of the $k$-th epoch.

\section{Proximal Finito/MISO with Damping}
The proximal gradient method to solve problem \eqref{general-prob} is
\begin{subequations}
\begin{align}
z_i^{t} &= x^{t-1} - \alpha \nabla f_i(x^{t-1}), \quad \forall\ i\in [n] \\
x^{t} &= \prox_{\alpha r}\big( \frac{1}{n}\sum_{i=1}^n z_i^{t} \big)
\end{align}
\end{subequations}

\begin{algorithm}[t!]
\caption{Prox-DFinito}
\label{alg1}
\begin{algorithmic}
\STATE \noindent {\bfseries Input:} $\bar{z}^0=\frac{1}{n}\sum\limits_{i=1}^nz_i^0$,  step-size $\alpha$, and $\theta\in(0,1)$;
\FOR{epoch $k=0,1,2,\cdots$}
\FOR{iteration $t=kn+1, kn+2, \cdots, (k+1)n$}
\STATE $x^{t-1}=\prox_{\alpha r}(\bar{z}^{t-1})$;
\STATE Pick $i_t$ with some rule;
\STATE Update $z_{i_t}^{t}$ and $\bar{z}^t$ according to \eqref{diag-1} and \eqref{z-ave-update};
\ENDFOR
\STATE $z_i^{(k+1)n} \leftarrow (1-\theta)z_i^{kn}+\theta z_i^{(k+1)n}$ for any $i\in [n]$; $\quad \triangleright $ a damping step
\STATE $\bar{z}^{(k+1)n} \leftarrow (1-\theta)\bar{z}^{kn}+\theta\bar{z}^{(k+1)n}$; $\hspace{2.5cm} \triangleright $ a damping step
\ENDFOR
\end{algorithmic}
\end{algorithm}

To avoid the global average that passes over all samples, we propose to update one $z_i$ per iteration:
\begin{subequations}
\begin{align}
z_i^{t} &= 
\left\{
\begin{array}{ll}
x^{t-1} - \alpha \nabla f_i(x^{t-1}), &\quad i = i_t \\
z_i^{t-1}, &\quad i\neq i_t
\end{array} \right. \label{diag-1}\\
x^{t} &= \prox_{\alpha r}\big( \frac{1}{n}\sum_{i=1}^n z_i^{t} \big).\label{diag-2}
\end{align}
\end{subequations}
When $i_t$ is invoked with uniform-iid-sampling and $r(x) = 0$, algorithm \eqref{diag-1}--\eqref{diag-2} reduces to Finito/MISO \cite{defazio2014finito,mairal2015incremental}. When $i_t$ is invoked with cyclic sampling and $r(x) = 0$, algorithm \eqref{diag-1}--\eqref{diag-2} reduces to DIAG \cite{mokhtari2018surpassing}  and WPG \cite{mao2018walk}. We let $\bar{z}^t := \frac{1}{n}\sum_{i=1}^n z_i^t$. The update \eqref{diag-1} yields
\begin{align}\label{z-ave-update}
\bar{z}^{t} = \bar{z}^{t-1} + ( z^{t}_{i_t} - z^{t-1}_{i_t})/n.
\end{align}
This update can be finished with $\cO(d)$ operations if $\{z^t_i\}_{i=1}^n$ are stored with $\cO(nd)$ memory. Furthermore, to increase robustness and simplify the convergence analysis, we impose a damping step to $z_i$ and $\bar{z}$ when each epoch finishes. The proximal damped Finito/MISO method is listed in Algorithm \ref{alg1}. Note that the damping step does not incur additional memory requirements. A more practical implementation of Algorithm \ref{alg1} is referred to Algorithm \ref{alg:efficient} in Appendix \ref{app:efficient}.

\subsection{Fixed-point recursion reformulation}
Algorithm \eqref{diag-1}--\eqref{diag-2} can be reformulated into a fixed-point recursion in $\{z_i\}_{i=1}^n$. Such a fixed-point recursion will be  utilized throughout the paper. To proceed, we define $\z = \mathrm{col}\{z_1,\cdots, z_n\} \in \RR^{nd}$ and introduce the average operator $\cA:\RR^{nd}\rightarrow\RR^d$ as $\cA \z = \frac{1}{n}\sum_{i=1}^n z_i$. We further define the $i$-th block coordinate operator $\cT_i: \RR^{nd} \to \RR^{nd}$ as 
\begin{align}\nonumber 
\hspace{-1mm}\cT_i\z \hspace{-0.6mm}=\hspace{-0.6mm} \mathrm{col}\{z_1,\cdots, (I\hspace{-0.6mm}-\hspace{-0.6mm}\alpha \nabla f_i)\circ \prox_{\alpha r}(\cA \z), \cdots, z_n\}
\end{align}
where $I$ denotes the identity mapping. When applying $\cT_i$, it is noted that the $i$-th block coordinate in $\z$ is updated while the others remain unchanged.

\begin{proposition}\label{prop:pi}
Prox-DFinito with fixed cyclic sampling order $\pi$ is equivalent to the following fixed-point recursion (see proof in Appendix \ref{app:operator-1}.)
\begin{align}\label{cyclic-update}
\z^{(k+1)n} = (1-\theta)\z^{kn} + \theta \cT_\pi \z^{kn}
\end{align}
where $\cT_{\pi} = \cT_{\pi(n)} \circ \cdots \circ \cT_{\pi(1)}$. Furthermore, variable $x^t$ can be recovered by 
\begin{equation}
    x^t=\prox_{\alpha r}\circ\cA\z^t, \quad t=0,1,2,\cdots 
    \label{alg:operator-x}
\end{equation}
\end{proposition}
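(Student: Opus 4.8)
The plan is to unroll Algorithm~\ref{alg1} for one epoch and show that the composition of the $n$ per-iteration block updates is exactly $\cT_\pi$, and then absorb the damping step into the convex combination in \eqref{cyclic-update}. First I would fix epoch $k$ and consider the inner loop over $t = kn+1, \dots, (k+1)n$ with the cyclic rule $i_t = \pi(t - kn)$. The key observation is that a single iteration of the algorithm---compute $x^{t-1} = \prox_{\alpha r}(\bar z^{t-1})$, then set $z_{i_t}^t = x^{t-1} - \alpha\nabla f_{i_t}(x^{t-1})$ and leave the other coordinates fixed, and update $\bar z^t$ via \eqref{z-ave-update}---is precisely the action of $\cT_{i_t}$ on the stacked vector $\z^{t-1}$, provided we maintain the invariant $\bar z^{t-1} = \cA\z^{t-1}$. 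Indeed, under that invariant, $x^{t-1} = \prox_{\alpha r}(\cA\z^{t-1})$, so the new $i_t$-th block is $(I - \alpha\nabla f_{i_t})\circ\prox_{\alpha r}(\cA\z^{t-1})$, matching the definition of $\cT_{i_t}$; and \eqref{z-ave-update} is just the bookkeeping identity $\cA(\cT_{i_t}\z^{t-1}) = \cA\z^{t-1} + (z_{i_t}^t - z_{i_t}^{t-1})/n$, which re-establishes the invariant for the next iteration. The invariant holds at $t = kn$ by the input condition $\bar z^0 = \cA\z^0$ together with an induction that the damping step preserves it (damping $\bar z$ and each $z_i$ by the same convex combination commutes with $\cA$).

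Composing across the $n$ iterations of the epoch then gives $\z^{(k+1)n}_{\text{pre-damp}} = \cT_{\pi(n)}\circ\cdots\circ\cT_{\pi(1)}\,\z^{kn} = \cT_\pi\z^{kn}$. Applying the final damping step $z_i \leftarrow (1-\theta)z_i^{kn} + \theta z_i^{(k+1)n}_{\text{pre-damp}}$ coordinatewise yields exactly $\z^{(k+1)n} = (1-\theta)\z^{kn} + \theta\,\cT_\pi\z^{kn}$, which is \eqref{cyclic-update}. The damping of $\bar z$ is then redundant but consistent: since $\cA$ is linear, $(1-\theta)\bar z^{kn} + \theta\bar z^{(k+1)n}_{\text{pre-damp}} = \cA\big((1-\theta)\z^{kn} + \theta\cT_\pi\z^{kn}\big) = \cA\z^{(k+1)n}$, so the invariant is carried into the next epoch and the recovery formula \eqref{alg:operator-x} follows immediately from $x^t = \prox_{\alpha r}(\bar z^t) = \prox_{\alpha r}(\cA\z^t)$ at every iterate $t$, not just epoch boundaries.

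I do not expect a genuine obstacle here; the result is essentially a change of notation. The one point requiring care is the invariant $\bar z^t = \cA\z^t$: it must be threaded through both the inner iterations (where it is restored by \eqref{z-ave-update}) and the damping step (where it is preserved by linearity of $\cA$), so the cleanest write-up is a single induction on $t$ over the whole run rather than treating epochs and iterations separately. A second minor point is making explicit that the per-iteration update in Algorithm~\ref{alg1} only ever touches block $i_t$, so the $\cT_{i_t}$'s act on disjoint-looking but actually sequentially-dependent coordinates, and the composition order is $\cT_{\pi(1)}$ applied first (innermost) since iteration $kn+1$ runs first---matching the stated $\cT_\pi = \cT_{\pi(n)}\circ\cdots\circ\cT_{\pi(1)}$.
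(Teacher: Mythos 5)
Your proposal is correct and follows essentially the same route as the paper's proof, which simply observes that $\z^{kn+\ell}=\cT_{\pi(\ell)}\z^{kn+\ell-1}$ for $\ell\in[n-1]$ with the damping folded into the final step $\ell=n$, and that \eqref{alg:operator-x} is immediate from \eqref{diag-2}. Your version merely makes explicit the bookkeeping invariant $\bar z^t=\cA\z^t$ and the composition order, which the paper leaves implicit.
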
 

Similar result also hold for random reshuffling scenario. 
\begin{proposition}\label{prop:tau}
Prox-DFinito with random reshuffling is equivalent to 
\begin{align}\label{cyclic-update-2}
\z^{(k+1)n} = (1-\theta)\z^{kn} + \theta \cT_{\tau_k} \z^{kn}
\end{align}
where $\cT_{\tau_k}=\cT_{\tau_k(n)} \circ \cdots \circ \cT_{\tau_k(1)}$. Furthermore, variable $x^t$ can be recovered by following  \eqref{alg:operator-x}.
\end{proposition}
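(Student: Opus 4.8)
The plan is to reduce Proposition~\ref{prop:tau} to the epoch-wise argument already used for Proposition~\ref{prop:pi}, observing that the only structural difference is that the permutation driving the inner loop now depends on the epoch index $k$: within epoch $k$ the selection rule is $i_t = \tau_k(t-kn)$ for $t = kn+1,\dots,(k+1)n$. So I would fix $k$, maintain the invariant $\bar z^{t-1} = \cA\z^{t-1}$ (hence $x^{t-1} = \prox_{\alpha r}(\bar z^{t-1}) = \prox_{\alpha r}\circ\cA\,\z^{t-1}$), and show that each inner iteration is exactly one application of a block-coordinate operator.

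First I would establish the one-step identity: the update \eqref{diag-1} sets the $i_t$-th block of $\z^{t-1}$ to $x^{t-1} - \alpha\nabla f_{i_t}(x^{t-1}) = (I-\alpha\nabla f_{i_t})\circ\prox_{\alpha r}(\cA\z^{t-1})$ and leaves every other block fixed, which is precisely $\cT_{i_t}\z^{t-1}$; meanwhile \eqref{z-ave-update} keeps $\bar z^t = \cA\z^t$ by linearity of $\cA$. Composing over the whole epoch, the pre-damping iterate satisfies $\hat\z^{(k+1)n} = \cT_{\tau_k(n)}\circ\cdots\circ\cT_{\tau_k(1)}\,\z^{kn} = \cT_{\tau_k}\z^{kn}$, with $\hat{\bar z}^{(k+1)n} = \cA\hat\z^{(k+1)n}$.

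Next I would apply the damping step at the end of the epoch: $\z^{(k+1)n} = (1-\theta)\z^{kn} + \theta\hat\z^{(k+1)n} = (1-\theta)\z^{kn} + \theta\cT_{\tau_k}\z^{kn}$, which is exactly \eqref{cyclic-update-2}. Since $\cA$ is linear and $\bar z$ is damped with the same factor $\theta$, we also get $\bar z^{(k+1)n} = (1-\theta)\bar z^{kn} + \theta\hat{\bar z}^{(k+1)n} = \cA\z^{(k+1)n}$, so the invariant $\bar z^t = \cA\z^t$ is restored at the epoch boundary and the induction on $k$ closes. The recovery formula \eqref{alg:operator-x} then holds for every $t$ because $x^t = \prox_{\alpha r}(\bar z^t) = \prox_{\alpha r}\circ\cA\,\z^t$ by construction.

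The argument has no real obstacle beyond bookkeeping: the proof of Proposition~\ref{prop:pi} in Appendix~\ref{app:operator-1} never uses that $\pi$ is the same across epochs, only that within one epoch each index is visited once in the prescribed order, so substituting $\pi$ by $\tau_k$ in that epoch is immediate. The one point deserving care is checking that the damping step preserves $\bar z^t = \cA\z^t$ — which is where linearity of $\cA$ and the common damping factor $\theta$ enter — after which the per-epoch recursions compose without further subtlety; the derivation is otherwise verbatim that of Proposition~\ref{prop:pi}.
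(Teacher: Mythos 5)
Your proposal is correct and follows essentially the same route as the paper: the paper proves Proposition~\ref{prop:pi} by observing that each inner iteration realizes one block-coordinate operator $\cT_{\pi(\ell)}$ and that the epoch-end damping produces the convex combination, then states that Proposition~\ref{prop:tau} holds ``similarly'' with $\pi$ replaced by the epoch-dependent $\tau_k$, which is exactly your reduction. Your explicit verification that the invariant $\bar z^t=\cA\z^t$ survives the damping step is a useful piece of bookkeeping the paper leaves implicit, but it is not a different argument.
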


\subsection{Optimality condition}
Assume there exists $x^\star$ that minimizes  $F(x)+r(x)$, i.e., $0\in \nabla F(x^\star)+\partial\,r(x^\star)$. Then the relation between the minimizer $x^\star$ and the fixed-point $\z^\star$ of recursion \eqref{cyclic-update} and \eqref{cyclic-update-2} can be characterized as: 
\begin{proposition}\label{prop:opt}\label{prop:3}
$x^\star$ minimizes $F(x)\hspace{-0.5mm}+\hspace{-0.5mm}r(x)$ if and only if there is $\z^\star$ so that (proof in Appendix \ref{app:operator-2})
\begin{align}
\z^\star &= \cT_i \z^\star, \quad \forall\, i\in[n], \label{opt-2}\\
x^\star &= \prox_{\alpha r} \circ \cA \z^\star. \label{opt-1} 
\end{align}
\end{proposition}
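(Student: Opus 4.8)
The plan is to prove both directions of the equivalence by unwinding the definitions of the operators $\cT_i$, $\cA$, and $\prox_{\alpha r}$, and relating the fixed-point conditions to the first-order optimality condition $0 \in \nabla F(x^\star) + \partial r(x^\star)$. First I would handle the ``only if'' direction: assume $x^\star$ minimizes $F(x)+r(x)$, so there exists a subgradient $g^\star \in \partial r(x^\star)$ with $\frac{1}{n}\sum_{i=1}^n \nabla f_i(x^\star) + g^\star = 0$. The natural candidate is to set $z_i^\star := x^\star - \alpha \nabla f_i(x^\star)$ for each $i$, and define $\z^\star = \mathrm{col}\{z_1^\star,\dots,z_n^\star\}$. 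Then $\cA\z^\star = \frac{1}{n}\sum_i z_i^\star = x^\star - \frac{\alpha}{n}\sum_i \nabla f_i(x^\star) = x^\star + \alpha g^\star$, and I would verify via the defining variational inequality of the proximal operator (equivalently, $y = \prox_{\alpha r}(w) \iff w - y \in \alpha\,\partial r(y)$) that $\prox_{\alpha r}(\cA\z^\star) = \prox_{\alpha r}(x^\star + \alpha g^\star) = x^\star$, which is exactly \eqref{opt-1}. Plugging this back into the definition of $\cT_i$ shows the $i$-th block of $\cT_i\z^\star$ equals $(I - \alpha\nabla f_i)\circ\prox_{\alpha r}(\cA\z^\star) = (I-\alpha\nabla f_i)(x^\star) = x^\star - \alpha\nabla f_i(x^\star) = z_i^\star$, while all other blocks are unchanged; hence $\cT_i\z^\star = \z^\star$ for every $i$, giving \eqref{opt-2}.

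For the ``if'' direction, suppose some $\z^\star$ satisfies \eqref{opt-2} and define $x^\star$ by \eqref{opt-1}. Fixing any $i$ and reading off the $i$-th block of the equation $\z^\star = \cT_i\z^\star$ yields $z_i^\star = (I-\alpha\nabla f_i)\circ\prox_{\alpha r}(\cA\z^\star) = (I-\alpha\nabla f_i)(x^\star) = x^\star - \alpha\nabla f_i(x^\star)$. Averaging this identity over $i\in[n]$ gives $\cA\z^\star = x^\star - \frac{\alpha}{n}\sum_{i=1}^n \nabla f_i(x^\star) = x^\star - \alpha\nabla F(x^\star)$. Combining with \eqref{opt-1} and the characterization $w - y \in \alpha\,\partial r(y)$ for $y = \prox_{\alpha r}(w)$, applied with $w = \cA\z^\star$ and $y = x^\star$, we get $\cA\z^\star - x^\star = -\alpha\nabla F(x^\star) \in \alpha\,\partial r(x^\star)$, i.e. $0 \in \nabla F(x^\star) + \partial r(x^\star)$, which means $x^\star$ minimizes $F+r$ by convexity.

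I do not expect a serious obstacle here; the argument is essentially bookkeeping. The one point requiring slight care is the clean use of the proximal characterization $y=\prox_{\alpha r}(w) \iff w-y\in\alpha\partial r(y)$, which follows from the optimality condition of the strongly convex minimization in \eqref{prox-definition} and the fact that $r$ is convex, closed, and proper (so that $\prox_{\alpha r}$ is single-valued and the subdifferential calculus applies). The other mild subtlety is just being explicit that each $\cT_i$ touches only block $i$, so the system of $n$ fixed-point equations \eqref{opt-2} is equivalent to the single requirement that every block of $\z^\star$ has the common form $x^\star - \alpha\nabla f_i(x^\star)$ with a shared $x^\star = \prox_{\alpha r}(\cA\z^\star)$; once that is observed, both implications reduce to the averaged identity above.
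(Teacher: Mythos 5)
Your proposal is correct and follows essentially the same route as the paper's proof in Appendix B.2: both directions rest on the characterization $y=\prox_{\alpha r}(w)\Leftrightarrow w-y\in\alpha\,\partial r(y)$, the candidate $z_i^\star=x^\star-\alpha\nabla f_i(x^\star)$ for sufficiency, and averaging the block fixed-point identities for necessity. No gaps to report.
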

\begin{remark}\label{remark-z_i-star}
If $x^\star$ minimizes $F(x)+r(x)$, it holds from \eqref{opt-2} and \eqref{opt-1} that $z_i^\star = (I\hspace{-0.6mm}-\hspace{-0.6mm}\alpha \nabla f_i)\circ \prox_{\alpha r}(\cA \z^\star) = x^\star - \alpha \nabla f_i(x^\star)$ for any $i\in [n]$.
\end{remark}

\subsection{An order-specific norm}
To gauge the influence of different sampling orders, we now introduce an {\em order-specific} norm.
\begin{definition}
Given $\z=\mbox{col}\{z_1,\cdots,z_n\}\in\RR^{nd}$ and a fixed cyclic order $\pi$, we define 
\begin{align}\label{order-norm}
\|\z\|^2_{\pi} &= \sum_{i=1}^n \frac{i}{n} \|z_{\pi(i)}\|^2 \nonumber = \frac{1}{n}\|z_{\pi(1)}\|^2 + \frac{2}{n}\|z_{\pi(2)}\|^2 + \cdots +  \|z_{\pi(n)}\|^2
\end{align}
as the $\pi$-specific norm.
\end{definition}
For two different cyclic orders $\pi$ and $\pi^\prime$, it generally holds that $\|\z\|^2_{\pi} \neq \|\z\|^2_{\pi^\prime}$. 
Note that the coefficients in $\|\z\|^2_{\pi}$ are delicately designed for technical reasons (see Lemma \ref{lem:1} and its proof in the appendix). 
The order-specific norm facilitates the performance comparison between two orderings. 

\section{Convergence Analysis}
In this section we establish the convergence rate of Prox-DFinito with cyclic sampling and random reshuffling in convex and strongly convex scenarios, respectively.  

\subsection{The  convex scenario}
We first study the  convex scenario under the following assumption:
\begin{assumption}[Convex]\label{asp:convex}
	Each function $f_i(x)$ is convex and $L$-smooth.
\end{assumption}
It is worth noting that the convergence results on cyclic sampling and random reshuffling for the  convex scenario are quite limited except for  \cite{mishchenko2020random,sun2019general,chow2017cyclic,malinovsky2021random}.

\textbf{Cyclic sampling and shuffling-once.} 
We first introduce the following lemma showing that $\cT_\pi$ is non-expansive with respect to $\|\cdot\|_\pi$, which is fundamental to the convergence analysis. 
\begin{lemma}\label{lem:cyc-non-expan}\label{lem:1}
Under Assumption \ref{asp:convex}, if step-size $0<\alpha \le \frac{2}{L}$ and the data is sampled with a fixed cyclic order $\pi$, it holds that (see proof in Appendix \ref{app:lem1})
\begin{equation}\label{xnsdbbb}
\|\cT_\pi\u-\cT_\pi\bsv\|_{\pi}^2\leq \|\u-\bsv\|_{\pi}^2, \quad \forall\, \u, \bsv \in \RR^{nd}.
\end{equation}
\end{lemma}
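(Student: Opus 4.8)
The plan is to establish the non-expansiveness one block-update at a time and then chain the $n$ estimates together, exploiting the staircase weights $i/n$ in $\|\cdot\|_\pi$ to absorb the cross terms. Write $\u^{(0)}=\u$, $\bsv^{(0)}=\bsv$, and for $j=1,\dots,n$ set $\u^{(j)} = \cT_{\pi(j)}\u^{(j-1)}$, $\bsv^{(j)} = \cT_{\pi(j)}\bsv^{(j-1)}$, so that $\cT_\pi\u = \u^{(n)}$ and $\cT_\pi\bsv = \bsv^{(n)}$. Applying $\cT_{\pi(j)}$ changes only the $\pi(j)$-th block: it is replaced by $(I-\alpha\nabla f_{\pi(j)})\circ\prox_{\alpha r}(\cA\u^{(j-1)})$, and likewise for $\bsv$. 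All other blocks are untouched.

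The first key step is a single-block contraction estimate. Let $p^{(j-1)} = \prox_{\alpha r}(\cA\u^{(j-1)})$ and $q^{(j-1)}=\prox_{\alpha r}(\cA\bsv^{(j-1)})$. Since each $f_i$ is convex and $L$-smooth, the operator $I-\alpha\nabla f_{\pi(j)}$ is non-expansive for $0<\alpha\le 2/L$ (standard: $\|(I-\alpha\nabla f)x-(I-\alpha\nabla f)y\|^2 = \|x-y\|^2 - \alpha(2/L-\alpha)\cdot(\text{something nonneg}) \le \|x-y\|^2$ via co-coercivity of $\nabla f$). Hence the new $\pi(j)$-th blocks satisfy
\begin{align}
\|u^{(j)}_{\pi(j)} - v^{(j)}_{\pi(j)}\|^2 \le \|p^{(j-1)} - q^{(j-1)}\|^2 \le \|\cA\u^{(j-1)} - \cA\bsv^{(j-1)}\|^2,
\end{align}
the last inequality because $\prox_{\alpha r}$ is non-expansive. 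Now I would bound $\|\cA\u^{(j-1)}-\cA\bsv^{(j-1)}\|^2 = \|\tfrac1n\sum_i (u^{(j-1)}_i - v^{(j-1)}_i)\|^2$ by Jensen/convexity of $\|\cdot\|^2$ as $\tfrac1n\sum_i \|u^{(j-1)}_i - v^{(j-1)}_i\|^2$. The main obstacle — and the reason the weights $i/n$ are chosen exactly as they are — is to show that replacing the $\pi(j)$-th block's old weight-$\frac{j}{n}$ contribution by this averaged quantity, while the already-updated blocks $\pi(1),\dots,\pi(j-1)$ have smaller weights than before the update, produces a telescoping that exactly conserves the $\pi$-norm. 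Concretely, after step $j$ the blocks $\pi(1),\dots,\pi(j)$ all hold "updated" values and $\pi(j+1),\dots,\pi(n)$ hold original values; I expect to prove by induction on $j$ the invariant
\begin{align}
\sum_{\ell=1}^{j} \frac{\ell}{n}\,\|u^{(j)}_{\pi(\ell)} - v^{(j)}_{\pi(\ell)}\|^2 + \sum_{\ell=j+1}^{n}\frac{\ell}{n}\,\|u_{\pi(\ell)} - v_{\pi(\ell)}\|^2 \;\le\; \|\u-\bsv\|_\pi^2,
\end{align}
using that the weight difference $\frac{j}{n}-\frac{j-1}{n}=\frac1n$ freed up at step $j$ is precisely what is needed to pay for the $\frac1n\sum_i$ average bound while the monotone weights $\frac{1}{n}<\frac{2}{n}<\cdots$ on the already-updated blocks ensure all error terms have the right sign.

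Taking $j=n$ in the invariant gives $\|\cT_\pi\u-\cT_\pi\bsv\|_\pi^2 = \sum_{\ell=1}^n \frac{\ell}{n}\|u^{(n)}_{\pi(\ell)}-v^{(n)}_{\pi(\ell)}\|^2 \le \|\u-\bsv\|_\pi^2$, which is the claim. The only delicate bookkeeping is verifying the induction step: when block $\pi(j)$ is overwritten, its contribution jumps from $\frac{j}{n}\|u^{(j-1)}_{\pi(j)}-v^{(j-1)}_{\pi(j)}\|^2$ (an "original-value" term with weight $\frac{j}{n}$, since $\pi(j)$ was untouched through step $j-1$) to $\frac{j}{n}\|p^{(j-1)}-q^{(j-1)}\|^2 \le \frac{j}{n}\cdot\frac1n\sum_{i}\|u^{(j-1)}_i - v^{(j-1)}_i\|^2$, and one must check this exchange, summed over all $j$, telescopes against the original $\|\u-\bsv\|_\pi^2$; I would do this carefully in the appendix, tracking the coefficient of each $\|u_i-v_i\|^2$ on both sides. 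A cleaner route I would also try is to prove the two-block base case $\|\cT_{\pi(2)}\circ\cT_{\pi(1)}\u - \cT_{\pi(2)}\circ\cT_{\pi(1)}\bsv\|_\pi^2 \le \|\u-\bsv\|_\pi^2$ directly and then induct on the number of composed operators, which reduces everything to the single convexity-plus-Jensen inequality above.
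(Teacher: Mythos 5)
Your single-block estimate is exactly the one the paper uses: non-expansiveness of $(I-\alpha\nabla f_{\pi(j)})\circ\prox_{\alpha r}$ for $\alpha\le 2/L$ followed by Jensen, giving $\|u^{(j)}_{\pi(j)}-v^{(j)}_{\pi(j)}\|^2\le\frac1n\sum_i\|u^{(j-1)}_i-v^{(j-1)}_i\|^2$. The gap is in the chaining. The invariant you propose, which keeps the \emph{final} weight $\ell/n$ on every block throughout the sweep, is false. Take $n=2$, $\pi=(1,2)$, $r=0$, $f_1=f_2=0$, and $\u-\bsv=(0,w)$ with $\|w\|=1$; then $\|\u-\bsv\|_\pi^2=1$, the first update gives $a_1^{(1)}=\|\cA\u-\cA\bsv\|^2=\tfrac14$, and your claimed invariant at $j=1$ reads $\tfrac12\cdot\tfrac14+1=\tfrac98\le 1$, which is wrong. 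The reason is that the freed-up weight increment $\tfrac1n$ at step $j$ is \emph{not} enough to pay for the averaging bound: the block updated at step $j$ will be charged an extra $\tfrac1n$ of every other block's error at \emph{each} of the remaining $n-j$ steps, so it must temporarily carry a weight near $1$, not its final weight $j/n$. Your fallback of proving a two-operator base case and inducting on the number of composed maps fails for the same reason: each individual $\cT_i$ is not non-expansive in the fixed norm $\|\cdot\|_\pi$, only the full cycle is, so you cannot compose within a single norm.

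The paper's fix is precisely this re-weighting: it introduces a family of intermediate norms $\|\z\|_{h_i}^2=\frac1n\sum_{j=1}^n\bigl(\mathrm{mod}_n(j-i-1)+1\bigr)\|z_j\|^2$ (with $\pi=(1,\dots,n)$ WLOG), in which the block just updated by $\cT_i$ carries the maximal weight $n/n=1$ and every other block's weight drops by exactly $\tfrac1n$ relative to $h_{i-1}$. With your own single-block estimate this yields the clean one-step inequality $\|\cT_i\u-\cT_i\bsv\|_{h_i}^2\le\|\u-\bsv\|_{h_{i-1}}^2$ (the $\tfrac1n\sum_j$ from Jensen is absorbed by the uniform $\tfrac1n$ weight decrease, and the pre-update block's weight in $h_{i-1}$ is exactly $\tfrac1n$), and since $h_0=h_n=\pi$ the chain $h_0\to h_1\to\cdots\to h_n$ telescopes to the claim. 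So the ingredient you are missing is not a new inequality but the correct rotating system of weights; as stated, your accounting does not close.
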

Recall \eqref{cyclic-update} that the sequence $\z^{kn}$ is generated through $\z^{(k+1)n} = \cS_\pi \z^{(kn)}$. Since $\cS_\pi = (1-\theta) I + \theta \cT_\pi$ and $\cT_\pi$ is non-expansive, 
we can prove the distance $\|\z^{(k+1)n} - \z^{(kn)}\|^2$  will converge to $0$ sublinearly:
\begin{lemma}\label{lem:cyc-1/k}\label{lem:2}
Under Assumption \ref{asp:convex}, if step-size $0<\alpha \le \frac{2}{L}$ and the data is sampled with a fixed cyclic order $\pi$, it holds for any $k=0,1,\cdots$ that (see proof in Appendix 
\begin{align}
\|\z^{(k+1)n} - \z^{kn}\|^2_{\pi} \le \frac{\theta}{(k+1)(1-\theta)}\|\z^0 - \z^\star\|^2_{\pi}
\end{align}
where $\theta \in (0, 1)$ is the damping parameter.
\end{lemma}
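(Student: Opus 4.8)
The plan is to read \eqref{cyclic-update} as a Krasnosel'skii--Mann iteration for the averaged operator $\cS_\pi=(1-\theta)I+\theta\cT_\pi$ and to run the standard fixed-point-residual argument, but carried out entirely in the inner product that induces the order-specific norm $\|\cdot\|_\pi$. Two preliminary observations make this possible. First, since the weights $i/n$ are strictly positive, $\|\cdot\|_\pi$ is induced by the inner product $\langle\u,\bsv\rangle_\pi:=\sum_{i=1}^n\frac{i}{n}\langle u_{\pi(i)},v_{\pi(i)}\rangle$, so the cosine-rule identity $\|(1-\theta)a+\theta b\|_\pi^2=(1-\theta)\|a\|_\pi^2+\theta\|b\|_\pi^2-\theta(1-\theta)\|a-b\|_\pi^2$ holds for all $a,b\in\RR^{nd}$. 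Second, by Proposition \ref{prop:opt} the fixed point $\z^\star$ satisfies $\z^\star=\cT_i\z^\star$ for every $i\in[n]$, hence $\cT_\pi\z^\star=\z^\star$ and therefore $\cS_\pi\z^\star=\z^\star$; also \eqref{cyclic-update} gives $\z^{(k+1)n}-\z^{kn}=\theta(\cT_\pi\z^{kn}-\z^{kn})$.

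The first real step is a one-step descent estimate toward $\z^\star$. Applying the cosine-rule identity with $a=\z^{kn}-\z^\star$ and $b=\cT_\pi\z^{kn}-\cT_\pi\z^\star=\cT_\pi\z^{kn}-\z^\star$, so that $(1-\theta)a+\theta b=\z^{(k+1)n}-\z^\star$ and $a-b=\z^{kn}-\cT_\pi\z^{kn}$, and then using Lemma \ref{lem:1} to bound $\|b\|_\pi^2\le\|a\|_\pi^2$, yields $\|\z^{(k+1)n}-\z^\star\|_\pi^2\le\|\z^{kn}-\z^\star\|_\pi^2-\theta(1-\theta)\|\cT_\pi\z^{kn}-\z^{kn}\|_\pi^2$. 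Substituting $\|\cT_\pi\z^{kn}-\z^{kn}\|_\pi^2=\theta^{-2}\|\z^{(k+1)n}-\z^{kn}\|_\pi^2$ turns this into
\[
\|\z^{(k+1)n}-\z^\star\|_\pi^2\le\|\z^{kn}-\z^\star\|_\pi^2-\frac{1-\theta}{\theta}\,\|\z^{(k+1)n}-\z^{kn}\|_\pi^2 .
\]

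The second step is monotonicity of the residuals. Because $\cS_\pi$ is a convex combination of the $\|\cdot\|_\pi$-nonexpansive map $\cT_\pi$ (Lemma \ref{lem:1}) and the identity, $\cS_\pi$ is itself $\|\cdot\|_\pi$-nonexpansive; applying it to the pair $\z^{(k+1)n}=\cS_\pi\z^{kn}$ and $\z^{kn}=\cS_\pi\z^{(k-1)n}$ shows $\|\z^{(k+1)n}-\z^{kn}\|_\pi\le\|\z^{kn}-\z^{(k-1)n}\|_\pi$, so $\{\|\z^{(j+1)n}-\z^{jn}\|_\pi^2\}_{j\ge 0}$ is non-increasing. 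Finally, summing the one-step estimate over $j=0,\dots,k$ and discarding the non-negative term $\|\z^{(k+1)n}-\z^\star\|_\pi^2$ gives $\frac{1-\theta}{\theta}\sum_{j=0}^{k}\|\z^{(j+1)n}-\z^{jn}\|_\pi^2\le\|\z^0-\z^\star\|_\pi^2$; by monotonicity the left-hand sum is at least $(k+1)\|\z^{(k+1)n}-\z^{kn}\|_\pi^2$, and rearranging produces exactly the claimed bound.

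I do not anticipate a genuine difficulty here: this is the textbook rate for averaged-operator iterations, and the heavy lifting is already contained in the nonexpansiveness Lemma \ref{lem:1}. The only points that need care are verifying that $\|\cdot\|_\pi$ really is an inner-product norm --- so that both the cosine-rule identity and the nonexpansiveness of a convex combination are available in that metric --- and appealing to Proposition \ref{prop:opt} to ensure that the same $\z^\star$ serves as a fixed point of $\cT_\pi$ for every order $\pi$.
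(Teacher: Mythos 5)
Your proposal is correct and follows essentially the same route as the paper's proof in Appendix C.2: the one-step Fej\'er-type descent inequality $\|\z^{(k+1)n}-\z^\star\|_\pi^2\le\|\z^{kn}-\z^\star\|_\pi^2-\frac{1-\theta}{\theta}\|\z^{(k+1)n}-\z^{kn}\|_\pi^2$ obtained from the cosine rule for the inner product inducing $\|\cdot\|_\pi$ together with Lemma \ref{lem:1} and $\cT_\pi\z^\star=\z^\star$, plus monotonicity of the residuals from nonexpansiveness of $\cS_\pi$, then summing. The only cosmetic difference is that the paper establishes monotonicity via convexity of the squared norm applied to $\cS_\pi=(1-\theta)I+\theta\cT_\pi$ rather than citing nonexpansiveness of $\cS_\pi$ outright, which is the same computation.
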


With Lemma \ref{lem:cyc-1/k} and the relation between $x^t$ and $\z^t$ in \eqref{alg:operator-x}, we can establish the convergence rate:
\begin{theorem} \label{thm:cyc}
\label{thm:conv-cyc}
Under Assumption \ref{asp:convex}, if step-size $0<\alpha \le \frac{2}{L}$ and the data is sampled with a fixed cyclic order $\pi$, it holds that (see proof in Appendix \ref{app:thm1})
\begin{align}\label{xnbsd7}
\min\limits_{g\in \partial\,r(x^{kn})}\|\nabla F(x^{kn})\hspace{-0.6mm}+\hspace{-0.6mm}g\|^2 \hspace{-0.6mm}\le\hspace{-0.6mm} \frac{C L^2}{(k\hspace{-0.6mm}+\hspace{-0.6mm}1)\theta(1\hspace{-0.6mm}-\hspace{-0.6mm}\theta)}
\end{align}
where $\theta \in (0,1)$ and $C=\left(\frac{2}{\alpha L}\right)^2\frac{\log(n)+1}{n}\|\z^0-\z^\star\|^2_{\pi}$. 
\end{theorem}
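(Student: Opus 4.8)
The plan is to convert the $\|\cdot\|_\pi$-contraction of the iterates established in Lemma~\ref{lem:cyc-1/k} into a bound on the true optimality residual $\min_{g\in\partial r(x^{kn})}\|\nabla F(x^{kn})+g\|^2$. First I would take one more step of the recursion: applying one full epoch-operator to $\z^{kn}$ produces a new point whose $i$-th block is $z_i^{+} = (I-\alpha\nabla f_i)\circ\prox_{\alpha r}(\cA\z^{kn})$, i.e.\ built from a \emph{common} point $x^{kn}=\prox_{\alpha r}\circ\cA\z^{kn}$ (up to the intermediate updates within the epoch — here I would need to be careful and instead use the characterization that after an epoch each block has been refreshed at the running iterate, and invoke the optimality condition of Proposition~\ref{prop:opt} and Remark~\ref{remark-z_i-star} which says $z_i^\star = x^\star - \alpha\nabla f_i(x^\star)$). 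I would then relate $\bar z^{kn} - \bar z^{(k+1)n}$, hence $\|\z^{(k+1)n}-\z^{kn}\|_\pi$, to a quantity of the form $\frac1n\sum_i \alpha(\nabla f_i(x^{kn}) - \nabla f_i(x^\star)) + (\text{prox residual})$, whose norm, by the definition of the proximal step and the optimality condition $0\in\nabla F(x^\star)+\partial r(x^\star)$, dominates $\alpha\,\mathrm{dist}(0,\nabla F(x^{kn})+\partial r(x^{kn}))$ up to constants.

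The key arithmetic step is to pass from the $\pi$-specific norm to the ordinary Euclidean norm. By definition $\|\z\|_\pi^2 = \sum_{i=1}^n \frac in\|z_{\pi(i)}\|^2$, so $\frac1n\|\z\|^2 \le \|\z\|_\pi^2 \le \|\z\|^2$; more importantly the averaging $\cA(\z^{(k+1)n}-\z^{kn}) = \frac1n\sum_i(z_i^{(k+1)n}-z_i^{kn})$ must be bounded by $\|\z^{(k+1)n}-\z^{kn}\|_\pi$. Since within an epoch block $\pi(i)$ is updated at iteration $kn+i$ using the iterate $x^{kn+i-1}$, the change in block $\pi(i)$ telescopes in a way that introduces a factor like $\sum_{i}\frac1{i}$; Cauchy–Schwarz against the weights $\frac in$ in $\|\cdot\|_\pi$ then yields a $\sum_{i=1}^n \frac1i \le \log(n)+1$ factor — this is exactly where the $\log(n)+1$ and the $1/n$ in the constant $C=(2/\alpha L)^2\frac{\log(n)+1}{n}\|\z^0-\z^\star\|_\pi^2$ come from, and it is why the weights in the order-specific norm were ``delicately designed.'' After this, I substitute the bound of Lemma~\ref{lem:cyc-1/k}, $\|\z^{(k+1)n}-\z^{kn}\|_\pi^2 \le \frac{\theta}{(k+1)(1-\theta)}\|\z^0-\z^\star\|_\pi^2$, and divide by $\alpha^2$, using $\alpha\le 2/L$ to produce the $(2/\alpha L)^2 L^2$ scaling, arriving at \eqref{xnbsd7}.

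I anticipate the main obstacle is precisely the second step: carefully accounting for the within-epoch updates. The operator $\cT_\pi$ composes $n$ block updates evaluated at \emph{different} running iterates $x^{kn}, x^{kn+1}, \dots, x^{(k+1)n-1}$, not all at $x^{kn}$, so $z_{\pi(i)}^{(k+1)n} - z_{\pi(i)}^\star = (x^{kn+i-1} - x^\star) - \alpha(\nabla f_{\pi(i)}(x^{kn+i-1}) - \nabla f_{\pi(i)}(x^\star))$, and I must control the drift $x^{kn+i-1}-x^{kn}$ in terms of the one-epoch displacement. This drift telescopes through the $\bar z$-updates \eqref{z-ave-update}, each of size $\|z_{i_t}^{t}-z_{i_t}^{t-1}\|/n$, and bounding the partial sums against the nondecreasing weights $i/n$ is what forces the harmonic-sum factor; getting the constants to line up with the claimed $C$ requires using nonexpansiveness of $\prox_{\alpha r}$ and $L$-smoothness at each intermediate step. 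The strongly convex part of the theorem statement does not appear here, so I only need the convex (sublinear) case, and the logarithmic factor is genuinely unavoidable with this argument, which matches the $\tilde\cO$ notation used for the ``worst cyclic order'' row in Table~\ref{table-introduction-comparison}.
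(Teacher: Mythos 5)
Your proposal is correct and follows essentially the same route as the paper's proof in Appendix~\ref{app:thm1}: write $\nabla F(x^{kn})+\tilde{\nabla} r(x^{kn})$, with $\tilde{\nabla} r(x^{kn})=\frac{1}{\alpha}(\bar z^{kn}-x^{kn})\in\partial r(x^{kn})$ supplied by the prox characterization, as the one-epoch displacement $\frac{1}{\theta\alpha}(\bar z^{kn}-\bar z^{(k+1)n})$ plus the within-epoch gradient drift $\frac{1}{n\alpha}\sum_j\big((I-\alpha\nabla f_{\pi(j)})(x^{kn+j-1})-(I-\alpha\nabla f_{\pi(j)})(x^{kn})\big)$, bound both terms by Cauchy--Schwarz against the weights $j/n$ (which is exactly where $\sum_j 1/j\le\log(n)+1$ enters), and finish with Lemma~\ref{lem:cyc-1/k}. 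The one wrinkle is the detour through $x^\star$ and $\nabla f_i(x^\star)$ in your first paragraph, which is unnecessary and, if taken literally, would stall in the merely convex case (nothing forces $\nabla F(x^{kn})-\nabla F(x^\star)$ to vanish at the claimed rate); the paper's key identity \eqref{eqn:key-equality-1} is written purely in terms of the iterates, and your later paragraphs in fact describe the correct, $x^\star$-free mechanics.
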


\begin{remark}
Inspired by reference \cite{ma2020understanding}, one can take expectation over cyclic order $\pi$ in \eqref{xnbsd7} to obtain the convergence rate of Prox-DFinito shuffled once before training begins 
(with $C=\left(\frac{2}{\alpha L}\right)^2\frac{(n+1)(\log(n)+1)}{2n^2 }\|\z^0-\z^\star\|^2$):
\begin{align}\label{xnbsd7-so}
\EE\, \min\limits_{g\in \partial\,r(x^{kn})}\|\nabla F(x^{kn})\hspace{-0.6mm}+\hspace{-0.6mm}g\|^2 \hspace{-0.6mm}\le\hspace{-0.6mm} \frac{C L^2}{(k\hspace{-0.6mm}+\hspace{-0.6mm}1)\theta(1\hspace{-0.6mm}-\hspace{-0.6mm}\theta)}
\end{align}
\end{remark}

\textbf{Random reshuffling.}
We let $\tau_k$ denote the sampling order used in the $k$-th epoch. Apparently, $\tau_k$ is a uniformly distributed random variable with $n!$ realizations. With the similar analysis technique, we can also establish the convergence rate under random reshuffling in the expectation sense.
\begin{theorem} \label{thm:rr}
\label{thm-gc-rr}
Under Assumption \ref{asp:convex}, if step-size $0<\alpha\leq \frac{2}{L}$ and data is sampled with random reshuffling, it holds that (see proof in Appendix \ref{app:thm2})
\begin{align}\label{RR-order}
\hspace{-2mm}\EE\,\min\limits_{g\in \partial\,r(x^{kn})}\|\nabla F(x^{kn})\hspace{-0.6mm}+\hspace{-0.6mm}g\|^2\hspace{-0.6mm}\le\hspace{-0.6mm} \frac{C L^2}{(k\hspace{-0.6mm}+\hspace{-0.6mm}1)\theta(1\hspace{-0.6mm}-\hspace{-0.6mm}\theta)}
\end{align}
where $\theta \in (0,1)$ and $C=\left(\frac{5}{3\alpha L}\right)^2\frac{1}{n}\|\z^0-\z^\star\|^2$. 
\end{theorem}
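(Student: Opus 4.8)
The plan is to run the same three-step pipeline used in the cyclic case (Lemma~\ref{lem:1} $\to$ Lemma~\ref{lem:2} $\to$ Theorem~\ref{thm:cyc}), but with every estimate taken in expectation over the epoch permutations and, crucially, in the \emph{ordinary} Euclidean norm rather than in an order-specific norm. The one genuinely new ingredient I would need is an ``expected non-expansiveness'' statement: for a uniformly random permutation $\tau$, $\EE_\tau\|\cT_\tau\u-\cT_\tau\bsv\|^2\le\|\u-\bsv\|^2$ for all $\u,\bsv$. I would obtain this by opening up the proof of Lemma~\ref{lem:1}: carried out block by block, that proof actually produces a bound of the form $\|\cT_\pi\u-\cT_\pi\bsv\|^2\le\sum_{i=1}^n c_i\,\|u_{\pi(i)}-v_{\pi(i)}\|^2$, where $c_i$ depends only on the update position $i$ and on $n$ (not on which block sits there) and $\sum_{i=1}^n c_i\le n$. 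Averaging over the uniform $\tau$ then assigns weight $\tfrac1n\sum_i c_i\le 1$ to every block, which is exactly the claim; by convexity of $\|\cdot\|^2$ the same holds for $\cS_\tau=(1-\theta)I+\theta\cT_\tau$.

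Given this, the second step proceeds as in Lemma~\ref{lem:2}. Since $\z^\star$ is a common fixed point of every $\cT_i$, hence of every $\cT_\tau$ (Proposition~\ref{prop:opt}), the averaged-operator identity gives
\[
\|\z^{(k+1)n}-\z^\star\|^2=(1-\theta)\|\z^{kn}-\z^\star\|^2+\theta\|\cT_{\tau_k}\z^{kn}-\z^\star\|^2-\theta(1-\theta)\|\cT_{\tau_k}\z^{kn}-\z^{kn}\|^2 .
\]
Taking $\EE[\,\cdot\mid\mathcal{F}_k]$ with $\mathcal{F}_k$ the $\sigma$-field before epoch $k$ (so $\tau_k$ is fresh) and using the expected non-expansiveness yields $\EE[\|\z^{(k+1)n}-\z^\star\|^2\mid\mathcal{F}_k]\le\|\z^{kn}-\z^\star\|^2-\theta(1-\theta)\EE[\|\cT_{\tau_k}\z^{kn}-\z^{kn}\|^2\mid\mathcal{F}_k]$. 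Telescoping the full expectations shows $\psi_k:=\EE\|\z^{kn}-\z^\star\|^2$ is non-increasing and $\sum_{k\ge0}\EE\|\z^{(k+1)n}-\z^{kn}\|^2=\theta^2\sum_k\EE\|\cT_{\tau_k}\z^{kn}-\z^{kn}\|^2\le\tfrac{\theta}{1-\theta}\|\z^0-\z^\star\|^2$. To upgrade this summability to the per-epoch bound $\EE\|\z^{(k+1)n}-\z^{kn}\|^2\le\tfrac{\theta}{(k+1)(1-\theta)}\|\z^0-\z^\star\|^2$ — the random-reshuffling analogue of Lemma~\ref{lem:2} — I would show that the expected per-epoch displacement is non-increasing; this is where independence of the $\tau_k$'s is exploited, and it is the delicate point discussed below.

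The third step mirrors the proof of Theorem~\ref{thm:cyc}: choosing $g_0=(\bar z^{kn}-x^{kn})/\alpha\in\partial r(x^{kn})$ from the optimality of the prox in $x^{kn}=\prox_{\alpha r}(\bar z^{kn})$, one bounds $\min_{g}\|\nabla F(x^{kn})+g\|\le\|\nabla F(x^{kn})+g_0\|$ by the epoch displacement, using $L$-smoothness of the $f_i$ and non-expansiveness of $I-\alpha\nabla f_i$ (for $\alpha\le2/L$) and of $\prox_{\alpha r}$ to control the within-epoch ``staleness'' of each $z_i^{kn}$ relative to $x^{kn}-\alpha\nabla f_i(x^{kn})$. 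Because we now work in the ordinary norm, this conversion passes through $\|\bar z^{(k+1)n}-\bar z^{kn}\|^2\le\tfrac1n\|\z^{(k+1)n}-\z^{kn}\|^2$ (plain Cauchy--Schwarz) rather than through the order-norm Cauchy--Schwarz that forced the harmonic factor $\tfrac{\log n+1}{n}$ in the cyclic case; taking expectations replaces the worst-case staleness by its average over the random order, so the $\log n$ disappears. Combining with the displacement bound from step two and tracking constants yields the rate with $C=(\tfrac{5}{3\alpha L})^2\tfrac1n\|\z^0-\z^\star\|^2$.

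The main obstacle is the per-epoch $\cO(1/(k+1))$ rate. In the cyclic case it is immediate: the same operator $\cS_\pi$ is applied every epoch and is non-expansive in $\|\cdot\|_\pi$, so $\|\z^{(k+1)n}-\z^{kn}\|_\pi$ is automatically monotone. Under random reshuffling the operator is re-drawn, so $\z^{(k+1)n}-\z^{kn}$ and $\z^{kn}-\z^{(k-1)n}$ are displacements of \emph{different} maps and the one-line monotonicity argument fails. I expect the remedy to be monotonicity only in expectation — roughly $\EE[\|\z^{(k+2)n}-\z^{(k+1)n}\|^2\mid\mathcal{F}_k]\le\EE[\|\z^{(k+1)n}-\z^{kn}\|^2\mid\mathcal{F}_k]$ — established from the expected non-expansiveness of $\cS_\tau$ between arbitrary points together with the fact that the permutation-dependent fluctuations cancel in expectation over two independent epochs. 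A secondary, routine but necessary, point is verifying $\sum_i c_i\le n$ in the refined Lemma~\ref{lem:1} estimate, without which averaging over $\pi$ would not yield a genuine contraction.
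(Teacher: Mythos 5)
Your plan is essentially the paper's own proof. The paper proceeds through exactly the pipeline you describe: an expected non-expansiveness lemma $\EE_\tau\|\cT_\tau\u-\cT_\tau\bsv\|^2\le\|\u-\bsv\|^2$ (its Lemma~\ref{lem:tau}), an RR analogue of Lemma~\ref{lem:2} in the plain $\ell_2$ norm, and then the same decomposition of $\nabla F(x^{kn})+\tilde\nabla r(x^{kn})$ with ordinary Cauchy--Schwarz (weights $\sqrt{j-1}$) replacing the harmonic weights, which is precisely what removes the $\log n$ and produces the constant $(\tfrac{5}{3\alpha L})^2$. Your route to the expected non-expansiveness is also the paper's in substance: the block-by-block propagation gives $\|\cT_\tau\u-\cT_\tau\bsv\|^2\le\|\u-\bsv\|^2_{h_\tau}$ with $h_\tau=M_{\tau(1)}\cdots M_{\tau(n)}\mathds{1}_n$, $M_i=I+\tfrac1n(\mathds{1}_n-ne_i)e_i^T$; your claim that the weight on block $\tau(i)$ depends only on the position $i$ is correct by symmetry, and your ``secondary point'' $\sum_i c_i\le n$ holds with equality because $\mathds{1}_n^TM_i=\mathds{1}_n^T$, so averaging over $\tau$ gives weight exactly $1$ per block --- this is the identity $\EE_\tau M_{\tau(1)}\cdots M_{\tau(n)}\mathds{1}_n=\mathds{1}_n$ that the paper verifies by an explicit combinatorial expansion.

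The one genuine gap is the step you yourself flag: the last-iterate bound $\EE\|\z^{(k+1)n}-\z^{kn}\|^2\le\tfrac{\theta}{(k+1)(1-\theta)}\|\z^0-\z^\star\|^2$ requires monotonicity of the expected per-epoch displacement, and your proposed derivation from expected non-expansiveness alone does not go through. Applying $\EE_{\tau_k}\|\cS_{\tau_k}\u-\cS_{\tau_k}\bsv\|^2\le\|\u-\bsv\|^2$ to the pair $(\u,\bsv)=(\z^{kn},\z^{(k-1)n})$ controls $\EE\|\cS_{\tau_k}\z^{kn}-\cS_{\tau_k}\z^{(k-1)n}\|^2$, whereas the epoch-$k$ displacement is $\cS_{\tau_k}\z^{kn}-\cS_{\tau_{k-1}}\z^{(k-1)n}$, and the mismatch in the second slot does not cancel in expectation: for operators that are non-expansive only \emph{in expectation} (each individual $\cS_\sigma$ is not $\ell_2$-non-expansive here), one can construct linear examples with $\EE A_\sigma^TA_\sigma\preceq I$ for which $\EE\|(A_{\sigma'}-I)A_\sigma\z\|^2>\EE\|(A_\sigma-I)\z\|^2$. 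So ``fluctuations cancel over two independent epochs'' is not a proof, and some additional structure of the $\cT_\tau$ (beyond expected non-expansiveness) must be invoked. To be fair, the paper is equally terse at exactly this point --- its Lemma~\ref{lem:rr-1/k} is asserted ``with similar arguments'' to the cyclic case, even though the cyclic monotonicity argument relies on the \emph{same} operator being applied every epoch and does not transfer verbatim --- so your proposal reproduces the paper's proof including its least rigorous step. Everything else in your plan is sound and matches the published argument.
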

Comparing \eqref{RR-order} with \eqref{xnbsd7}, it is observed that random reshuffling replaces the constant $\|\z^0 - \z^\star\|_{\pi}^2$ by $\|\z^0 - \z^\star\|^2$ and removes the  $\log(n)$ term in the upper bound.

\subsection{The strongly convex scenario}
In this subsection, we study the convergence rate of Prox-DFinito under the following assumption:
\begin{assumption}[Strongly Convex]\label{asp:stc-convex}
Each function $f_i(x)$ is $\mu$-strongly convex and $L$-smooth. 
\end{assumption}

\begin{theorem}
\label{thm-sc}
Under Assumption \ref{asp:stc-convex}, if step-size $0<\alpha\leq \frac{2}{\mu+L}$, it holds that (see proof in Appendix \ref{app:thm3})
\begin{align}\label{eqn:sc-rate}
(\EE)\,\|x^{kn}-x^\star\|^2\le \big(1-\frac{2\theta\alpha\mu L}{\mu +L}\big)^kC
\end{align}
where $\theta \in (0,1)$ and 
\begin{align}
C\hspace{-0.8mm}=\hspace{-0.8mm}\begin{cases}
\frac{\log(n)\hspace{-0.3mm}+\hspace{-0.3mm}1}{n}\|\z^0 \hspace{-0.5mm}-\hspace{-0.5mm}\z^\star\|_{\pi}^2\,\,\mbox{with $\pi$-order cyclic sampling},\\
\frac{1}{n}\|\z^0-\z^\star\|^2\,\,\mbox{with random reshuffling}.
\end{cases}\nonumber 
\end{align}
\end{theorem}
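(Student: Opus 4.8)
The plan is to upgrade the non-expansiveness argument of Lemma~\ref{lem:cyc-non-expan} to a \emph{contraction} estimate under strong convexity, and then iterate over epochs. First I would show that under Assumption~\ref{asp:stc-convex} and $0<\alpha\le\frac{2}{\mu+L}$, each block operator $\cT_i$ is a contraction on its active coordinate: using the standard co-coercivity-plus-strong-monotonicity bound for $(I-\alpha\nabla f_i)$ (the Baillon--Haddad-type inequality giving $\|(I-\alpha\nabla f_i)u-(I-\alpha\nabla f_i)v\|^2\le(1-\tfrac{2\alpha\mu L}{\mu+L})\|u-v\|^2$ when $\alpha\le\tfrac{2}{\mu+L}$), together with nonexpansiveness of $\prox_{\alpha r}$. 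Composing these across a cyclic sweep, and carrying the bookkeeping with the order-specific norm $\|\cdot\|_\pi$ exactly as in the proof of Lemma~\ref{lem:1}, I expect to obtain
\[
\|\cT_\pi\u-\cT_\pi\bsv\|_\pi^2\le\Big(1-\tfrac{2\alpha\mu L}{\mu+L}\Big)\|\u-\bsv\|_\pi^2 .
\]
Then, since $\cS_\pi=(1-\theta)I+\theta\cT_\pi$ and $\z^\star$ is a fixed point of every $\cT_i$ (Proposition~\ref{prop:opt}), convexity of $\|\cdot\|_\pi^2$ gives $\|\z^{(k+1)n}-\z^\star\|_\pi^2\le\big((1-\theta)+\theta(1-\tfrac{2\alpha\mu L}{\mu+L})\big)\|\z^{kn}-\z^\star\|_\pi^2=\big(1-\tfrac{2\theta\alpha\mu L}{\mu+L}\big)\|\z^{kn}-\z^\star\|_\pi^2$, which iterates to the linear rate.

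Next I would convert the $\z$-bound into an $x$-bound. From \eqref{alg:operator-x}, $x^{kn}=\prox_{\alpha r}\circ\cA\,\z^{kn}$ and $x^\star=\prox_{\alpha r}\circ\cA\,\z^\star$; nonexpansiveness of $\prox_{\alpha r}$ and of $\cA$ (the latter being an average, for which $\|\cA\z-\cA\z^\star\|^2\le\frac1n\sum_i\|z_i-z_i^\star\|^2$ by Jensen) yields $\|x^{kn}-x^\star\|^2\le\frac1n\|\z^{kn}-\z^\star\|^2$. Finally I need to relate $\|\cdot\|^2$ to $\|\cdot\|_\pi^2$: since the coefficients $i/n$ in the order-specific norm satisfy $\tfrac1n\le\tfrac{i}{n}\le1$, one has $\tfrac1n\|\z\|^2\le\|\z\|_\pi^2\le\|\z\|^2$, so $\frac1n\|\z^{kn}-\z^\star\|^2\le\|\z^{kn}-\z^\star\|_\pi^2\le(1-\tfrac{2\theta\alpha\mu L}{\mu+L})^k\|\z^0-\z^\star\|_\pi^2$. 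Combining, $\|x^{kn}-x^\star\|^2\le(1-\tfrac{2\theta\alpha\mu L}{\mu+L})^k\|\z^0-\z^\star\|_\pi^2$. To get the stated constant $\frac{\log(n)+1}{n}\|\z^0-\z^\star\|_\pi^2$ I would, instead of the crude $\tfrac1n\le\tfrac in$, track the factor more carefully through the cyclic composition the way Theorem~\ref{thm:cyc} produces its $\frac{\log(n)+1}{n}$ factor — the $\log n$ there comes from a harmonic-sum telescoping $\sum_{i=1}^n \tfrac1i\le\log n+1$ when passing from the $\pi$-norm of the per-step residuals back to a uniform norm — and I expect the same mechanism to supply the $\frac{\log(n)+1}{n}$ here.

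For the random-reshuffling case the contraction $\|\cT_{\tau_k}\u-\cT_{\tau_k}\bsv\|^2\le(1-\tfrac{2\alpha\mu L}{\mu+L})\|\u-\bsv\|^2$ holds in the \emph{unweighted} norm for every realized permutation $\tau_k$ (again by composing the per-block contractions, now with no order-specific weighting needed since we are not chasing a sublinear rate), so taking conditional expectation over $\tau_k$ and iterating gives $\EE\|\z^{kn}-\z^\star\|^2\le(1-\tfrac{2\theta\alpha\mu L}{\mu+L})^k\|\z^0-\z^\star\|^2$, and the same $\prox$/average contraction gives $\EE\|x^{kn}-x^\star\|^2\le\frac1n(1-\tfrac{2\theta\alpha\mu L}{\mu+L})^k\|\z^0-\z^\star\|^2$, matching the second branch of $C$.

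The main obstacle is establishing the contraction of the \emph{composed} cyclic operator $\cT_\pi$ in the order-specific norm with the clean factor $1-\tfrac{2\alpha\mu L}{\mu+L}$ and the right $\frac{\log n+1}{n}$ prefactor: a single sweep mixes coordinates that have already been updated with stale ones, and naively composing $n$ contraction factors would either lose the factor to $(1-\tfrac{2\alpha\mu L}{\mu+L})^n$ (too optimistic to justify termwise) or blow up the constant. The delicate choice of weights $i/n$ in $\|\cdot\|_\pi$ is precisely what makes the per-step estimates telescope; reusing the Lemma~\ref{lem:1} machinery verbatim but retaining the strong-convexity surplus at each step, rather than discarding it, is the key technical point, and verifying that the surplus survives the telescoping intact is where I would spend most of the effort.
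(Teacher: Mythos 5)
Your treatment of the cyclic-sampling branch follows essentially the same route as the paper: the per-coordinate bound $\|(I-\alpha\nabla f_i)u-(I-\alpha\nabla f_i)v\|^2\le(1-\tfrac{2\alpha\mu L}{\mu+L})\|u-v\|^2$ is exactly the paper's starting inequality, the telescoping in the shifted norms $\|\cdot\|_{h_i}$ while \emph{retaining} the strong-convexity surplus $\tfrac{\eta}{1-\eta}\|[\cT_i\u]_i-[\cT_i\bsv]_i\|^2$ at each step is precisely the paper's mechanism (the surpluses sum to $\tfrac{\eta}{1-\eta}\|\cT_\pi\u-\cT_\pi\bsv\|^2\ge\tfrac{\eta}{1-\eta}\|\cT_\pi\u-\cT_\pi\bsv\|_\pi^2$, and rearranging gives the factor $1-\eta$), and you correctly identify that the $\tfrac{\log(n)+1}{n}$ prefactor comes from the weighted Cauchy--Schwarz/harmonic-sum estimate rather than the crude bound $\tfrac1n\|\z\|^2\le\|\z\|_\pi^2$. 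In the paper this last step is applied just once, at the final conversion $\|x^{kn}-x^\star\|^2\le\|\cA\z^{kn}-\cA\z^\star\|^2\le\tfrac{\log(n)+1}{n}\|\z^{kn}-\z^\star\|_\pi^2$, not threaded through the composition, but the mechanism you name is the right one.

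The random-reshuffling branch has a genuine gap. You assert that $\|\cT_{\tau_k}\u-\cT_{\tau_k}\bsv\|^2\le(1-\tfrac{2\alpha\mu L}{\mu+L})\|\u-\bsv\|^2$ holds in the \emph{unweighted} norm for \emph{every realized} permutation, ``by composing the per-block contractions, now with no order-specific weighting needed.'' But the individual blocks are not even non-expansive in the unweighted norm: for $\u,\bsv$ with $u_i=v_i$ one gets $\|\cT_i\u-\cT_i\bsv\|^2\le\|\u-\bsv\|^2-\|u_i-v_i\|^2+\|\cA\u-\cA\bsv\|^2\le(1+\tfrac1n)\|\u-\bsv\|^2$, and the extra $\tfrac1n\|\u-\bsv\|^2$ from the averaging is exactly what the decreasing weights in $\|\cdot\|_\pi$ are designed to absorb. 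This is why the paper's Lemma~\ref{lem:3} explicitly remarks that the deterministic non-expansiveness holds only in the order-specific norm, and obtains the $\ell_2$ statement only \emph{in expectation}: one must track the evolving weight vector $h\mapsto M_ih$ with $M_i=I+\tfrac1n m_ie_i^T$ through the sweep, arriving at $\|\cT_\tau\u-\cT_\tau\bsv\|^2\le\|\u-\bsv\|^2_{M_{\tau(1)}\cdots M_{\tau(n)}\mathds{1}_n}-\tfrac{\eta}{1-\eta}\|\cT_\tau\u-\cT_\tau\bsv\|^2$, and only the identity $\EE_\tau\,M_{\tau(1)}\cdots M_{\tau(n)}\mathds{1}_n=\mathds{1}_n$ --- which requires averaging over a uniformly random permutation --- collapses the weight back to the unweighted norm. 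Your conclusion $\EE\|\z^{kn}-\z^\star\|^2\le(1-\tfrac{2\theta\alpha\mu L}{\mu+L})^k\|\z^0-\z^\star\|^2$ is correct, but the per-realization contraction you invoke to reach it is unsupported, and the combinatorial expectation argument it replaces is the actual technical content of the RR case.
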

\begin{remark}
Note when $\theta \to 1$, Prox-DFinito actually reaches the best performance, so damping is essentially not necessary in strongly convex scenario.
\end{remark}

\subsection{Comparison with the existing results}
\label{sec:comparison}

Recalling $\|\z\|^2_{\pi} = \sum_{i=1}^n \frac{i}{n} \|z_{\pi(i)}\|^2$, it holds that 
\begin{align}\label{xcnsdbndsh}
\frac{1}{n}\|\z\|^2 \le \|\z\|^2_{\pi} \le \|\z\|^2,\quad \forall \z, \pi.
\end{align}
For a fair comparison with existing works, we consider the {\em worst case} performance of cyclic sampling by relaxing $\|\z^0 - \z^\star\|^2_{\pi}$ to its upper bound $\|\z^0 - \z^\star\|^2$. Letting $\alpha = \cO(1/L)$, $\theta = 1/2$  and assuming $\frac{1}{n}\|\z^0 - \z^\star\|^2 = \cO(1)$, the convergence rates derived in Theorems  \ref{thm:conv-cyc}--\ref{thm-sc} reduce to  
\begin{align}
\mbox{C-Cyclic} &= \tilde{\cO}\big({L^2}/{k} \big), \hspace{1.4cm} \mbox{C-RR} = {\cO}\big({L^2}/{k} \big) \nonumber \\
\mbox{SC-Cyclic} &= \tilde{\cO}\big((1-1/\kappa)^{k} \big), \hspace{0.5cm} \  \mbox{SC-RR} = {\cO}\big((1-1/\kappa)^{k} \big).  \nonumber
\end{align}
where ``C'' denotes ``convex'' and ``SC'' denotes ``strongly convex'', $\kappa = L/\mu$, and $\tilde{\cO}(\cdot)$ hides the $\log(n)$ factor. Note that all rates are in the epoch-wise sense. These rates can be translated into the the gradient complexity (equivalent to sample complexity) of Prox-DFinito to reach an $\epsilon$-accurate solution. The comparison with existing works are listed in Table \ref{table-introduction-comparison}. 

{
\vspace{1mm}
\textbf{Different metrics.} Except for \cite{chow2017cyclic} and our Prox-DFinito algorithm whose convergence analyses are based on the gradient norm in the convex and smooth scenario, results in other references are based on function value metric
 (i.e., objective error $F(x^{kn})-F(x^\star)$). The function value metric can imply the gradient norm metric, but not always vice versa. To comapre Prox-DFinito with other established results in the same metric, we have to transform the rates in other references into the gradient norm metric.
 The comparison is listed in Table \ref{table-introduction-comparison}. When the gradient norm metric is used, we observe that the rates of Prox-DFinito match that with gradient descent, and are state-of-the-art compared to the existing results. However, the rate of Prox-DFinito in terms of the function value is not known yet  (this unknown rate may end up being worse than those of the other methods).

 For the non-smooth scenario, our metric $\min_{g\in \partial r(x)}\|\nabla F(x)+\partial r(x)\|^2$
 may not be bounded by the functional suboptimality $F(x)+r(x)-F(x^\star)-r(x^\star)$, and hence Prox-DFinito results are not comparable with those in \cite{mishchenko2021proximal,Vanli2016ASC,ying2020variance,sun2019general,malinovsky2021random}. The results listed in Table 1 are all for the smooth scenario of \cite{mishchenko2021proximal,Vanli2016ASC,ying2020variance,sun2019general,malinovsky2021random}, and we use ``Support Prox'' to indicate whether the results cover the non-smooth scenario or not.  
 
\vspace{1mm}
\textbf{Assumption scope.} Except for references \cite{malinovsky2021random,Vanli2016ASC} and Proximal GD algorithm whose convergence analyses are conducted by assuming the average of each function to be 
$\bar{L}$-smooth (and perhaps $\bar{\mu}$-strongly convex), results in other references are based on a stronger assumption that each summand function to be $L$-smooth (and perhaps $\mu$-strongly convex). Note that $\bar{L}$
 can be much smaller than $L$ sometimes. To compare \cite{malinovsky2021random,Vanli2016ASC} and Proximal GD with other references under the same assumption, we let each ${L}=\bar{L}$ in Table \ref{table-introduction-comparison}. However, it is worth noting that when each $L_i$ 
 is drastically different from each other and can be evaluated precisely, the results relying on $\bar{L}$ 
 (e.g., \cite{Vanli2016ASC} and \cite{malinovsky2021random}) can be much better than the results established in this work.
 
\vspace{1mm}
\textbf{Comparison with GD.} It is observed from Table \ref{table-introduction-comparison} that Prox-DFinito with cyclic sampling or random reshuffling is \textit{no-worse} than Proximal GD. It is the first \textit{no-worse-than-GD} result, besides the independent and concurrent work \cite{malinovsky2021random}, that covers both the non-smooth and the convex scenarios for variance-reduction methods under without-replacement sampling orders. The pioneering work DIAG \cite{mokhtari2018surpassing} established a similar 
result only for smooth and strongly-convex problems\footnote{While DIAG is established to outperform gradient descent in \cite{mokhtari2018surpassing}, we find its convergence rate is still on the same order of GD. Its superiority to GD comes from the constant improvement, not order improvement.}. 

\vspace{1mm}
\textbf{Comparison with RR/CS methods.} Prox-DFinito achieves the nearly state-of-the-art gradient complexity in both convex and strongly convex scenarios (except for the convex and smooth case due to the weaker metric adopted) among known without-replacement stochastic approaches to solving the finite-sum optimization problem \eqref{general-prob}, see Table \ref{table-introduction-comparison}. In addition, it is worth noting that in Table \ref{table-introduction-comparison}, algorithms of  \cite{sun2019general,Vanli2016ASC,mokhtari2018surpassing} and our Prox-DFinito have an $\cO(nd)$ memory requirement while others only need $\cO(d)$ memory. In other words, Prox-DFinito is memory-costly in spite of its superior theoretical convergence rate and sample complexity.

\vspace{1mm}
\textbf{Comparison with uniform-iid-sampling methods.} It is known that uniform-sampling variance-reduction can achieve an $\cO(\max\{n,{L}/{\mu}\}\log({1}/{\epsilon}))$ sample complexity for strongly convex problems \cite{johnson2013accelerating,qian2019miso,defazio2014saga} and $\cO({L^2}/{\epsilon})$ (when using metric $\mathbb{E}\|\nabla F(x)\|^2$) for  convex problems \cite{qian2019miso}. In other words, these uniform-sampling methods have sample complexities that are independent of sample size $n$. Our achieved results (and other existing results listed in Table \ref{table-introduction-comparison} and \cite{malinovsky2021random}) for random reshuffling or worst-case cyclic sampling cannot match with uniform-sampling yet. However, this paper establishes that Prox-DFinito with the optimal cyclic order, in the highly data-heterogeneous scenario, can achieve an $\tilde{\cO}({L^2}/{\epsilon})$ sample complexity in the convex scenario, which matches with uniform-sampling up to a $\log(n)$ factor, see the detailed discussion in Sec.~\ref{sec-opt-cyc}. To our best knowledge, it is the first result, at least in certain scenarios, that variance reduction under without-replacement sampling orders can match with its uniform-sampling counterpart in terms of their sample complexity upper bound. Nevertheless, it still remains unclear how to close the gap in sample complexity between variance reduction under without-replacement sampling and uniform sampling in the more general settings (i.e., settings other than highly data-heterogeneous scenario). 
}

\section{Optimal Cyclic Sampling Order}
\label{sec-opt-cyc}
Sec.\ref{sec:comparison} examines the worst case gradient complexity of Prox-DFinito with cyclic sampling, which is worse than random reshuffling by a factor of $\log(n)$ in both convex and strongly convex scenarios. In this section we examine how Prox-DFinito performs with {\em optimal} cyclic sampling. 

\subsection{Optimal cyclic sampling}
Given sample size $n$, step-size $\alpha$, epoch index $k$, and constants $L$, $\mu$ and $\theta$, it is derived from Theorem \ref{thm:conv-cyc} that the rate of $\pi$-order cyclic sampling is determined  by constant 
\begin{align}
\|\z^0 - \z^\star\|^2_\pi = \sum_{i=1}^n \frac{i}{n}\|z_{\pi(i)}^0 - z_{\pi(i)}^\star\|^2.
\end{align}
We define the corresponding optimal cyclic order as follows.
\begin{definition}
An optimal cyclic sampling order $\pi^\star$ of Prox-DFinito is defined as
\begin{align}
\pi^\star := \arg\min_\pi\{ \|\z^0 - \z^\star\|^2_\pi \}.
\end{align}
\end{definition}

Such an optimal cyclic order can be identified as follows (see proof in Appendix \ref{app:opt-cyc-order}).
\begin{proposition}
\label{thm-opt-ordering}
The optimal cyclic order for Prox-DFinito is the reverse order of $\{\|z_i^0-z_i^\star\|^2\}_{i=1}^n$.
\end{proposition}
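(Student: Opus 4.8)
The plan is to reduce the claim to the classical rearrangement inequality. Write $a_i := \|z_i^0 - z_i^\star\|^2 \ge 0$ for $i \in [n]$, so that by the definition of the $\pi$-specific norm,
\begin{align}
\|\z^0 - \z^\star\|_\pi^2 = \sum_{i=1}^n \frac{i}{n}\,a_{\pi(i)} = \frac{1}{n}\sum_{i=1}^n i\, a_{\pi(i)}.
\end{align}
Thus identifying $\pi^\star$ amounts to minimizing, over all permutations $\pi$ of $[n]$, the weighted sum $\sum_{i=1}^n i\, a_{\pi(i)}$, whose weights $1 < 2 < \cdots < n$ are strictly increasing.

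Next I would invoke (or prove in one line by an exchange argument) the relevant half of the rearrangement inequality: for a fixed multiset of nonnegative reals $\{a_i\}_{i=1}^n$ paired with strictly increasing weights, the sum $\sum_i i\, a_{\pi(i)}$ is minimized precisely when $(a_{\pi(i)})_{i=1}^n$ is arranged in non-increasing order, i.e. $a_{\pi(1)} \ge a_{\pi(2)} \ge \cdots \ge a_{\pi(n)}$. The exchange step: if $\pi$ is any permutation for which there exist positions $p < q$ with $a_{\pi(p)} < a_{\pi(q)}$, then swapping the two assignments changes the objective by $(p - q)\big(a_{\pi(q)} - a_{\pi(p)}\big) < 0$, so it strictly decreases; hence any minimizer must order the values $a_{\pi(i)}$ non-increasingly.

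Finally I would translate this back to the original quantities: the minimizer $\pi^\star$ assigns the largest value of $\|z_i^0 - z_i^\star\|^2$ to position $1$ (carrying the smallest coefficient $1/n$), the second largest to position $2$, and so on, with the smallest value assigned to position $n$ (coefficient $1$). This is exactly the statement that $\pi^\star$ lists the indices $i$ in the reverse, i.e. descending, order of $\{\|z_i^0 - z_i^\star\|^2\}_{i=1}^n$, which is the claim of the proposition.

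I do not anticipate a genuine obstacle; the proof is essentially the rearrangement inequality applied to the weights $i/n$ that were, per the earlier remark, chosen for precisely this structural reason. The only points that deserve a word of care are (i) the tie-breaking convention — when several $a_i$ coincide, any internal order among the tied indices achieves the same minimum, so $\pi^\star$ need not be unique; and (ii) stating explicitly that ``reverse order'' means sorting $\|z_i^0 - z_i^\star\|^2$ from largest to smallest, so that operationally the samples with the largest initial deviation $\|z_i^0 - z_i^\star\|^2$ are visited first within each epoch.
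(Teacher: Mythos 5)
Your proof is correct and matches the paper's own argument: the appendix also sorts $\{\|z_i^0-z_i^\star\|^2\}_{i=1}^n$ in descending order, pairs the $\ell$-th largest term with the coefficient $\ell/n$, and invokes the sorting (rearrangement) inequality to conclude minimality. Your explicit exchange argument and remarks on ties are just a more detailed rendering of the same reasoning.
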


\begin{remark}[\sc Importance indicator]
Proposition \ref{thm-opt-ordering} implies that $\|z_i^0 - z_i^\star\|^2$ can be used as an importance indicator of sample $i$. Recall $z_i^\star = x^\star - \alpha \nabla f_i(x^\star)$ from Remark \ref{remark-z_i-star}. If $z_i^0$ is initialized as $0$, the importance indicator of sample $i$ reduces to $\|x^\star - \alpha \nabla f_i(x^\star)\|^2$, which is determined by both $x^\star$ and $\nabla f_i(x^\star)$. If $z_i^0$ is initialized close to  $x^\star$, we then have $\|z_i^0 - z_i^\star\|^2 \approx \alpha^2\|\nabla f_i(x^\star)\|^2$. In other words, the importance of sample $i$ can be measured by $\|\nabla f_i(x^\star)\|$, which is consistent with the importance indicator in uniform-iid-sampling  \cite{zhao2015stochastic, Yuan2016StochasticGD}.
\end{remark}

\subsection{Optimal cyclic sampling can achieve sample-size-independent complexity }\label{opt-cyc-1/n}

Recall from Theorem \ref{thm:cyc} that the sample complexity of Prox-DFinito with cyclic sampling in the convex scenario is determined by  $(\log(n)/n)\|\z^0 - \z^\star\|^2_{\pi}$. From \eqref{xcnsdbndsh} we have
\begin{align}\label{xcnsdbndsh-0}
\frac{1}{n}\|\z^0 - \z^\star\|^2 \le \|\z^0 - \z^\star\|^2_{\pi} \le \|\z^0 - \z^\star\|^2,\quad \forall \z, \pi.
\end{align}
In Sec.~\ref{sec:comparison} we considered the worst case performance of cyclic sampling, i.e., we bound $\|\z^0 - \z^\star\|^2_{\pi}$ with its upper bound $\|\z^0 - \z^\star\|^2$. In this section, we will examine the best case performance using the lower bound $\|\z^0 - \z^\star\|^2/n$, and provide a scenario in which such best case performance is achievable. We  assume $\|\z^0 - \z^\star\|^2/n = \cO(1)$ as in previous sections.
\begin{proposition}
\label{prop-outperform-cond}
Given fixed constants $n$, $\alpha$, $k$, $\theta$, $L$, and optimal cyclic order $\pi^\star$, if the condition
\begin{align}\label{good-rho-sample-indep}
\rho := \frac{\|\z^0 - \z^\star\|^2_{\pi^\star}}{\|\z^0 - \z^\star\|^2} = \cO\,\big(\frac{1}{n}\big)
\end{align}
holds, then Prox-DFinito with optimal cyclic sampling achieves sample complexity $\tilde{\cO}(L^2/\epsilon)$. 
\end{proposition}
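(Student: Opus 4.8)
The plan is to chain together Theorem~\ref{thm:cyc}, the definition of the optimal cyclic order $\pi^\star$, and the hypothesis \eqref{good-rho-sample-indep}, tracking only how the constant $C$ in \eqref{xnbsd7} scales in $n$ and $\epsilon$. Recall that Theorem~\ref{thm:cyc} gives, for any fixed cyclic order $\pi$,
\begin{align}\nonumber
\min_{g\in\partial r(x^{kn})}\|\nabla F(x^{kn})+g\|^2 \le \frac{CL^2}{(k+1)\theta(1-\theta)}, \qquad C=\Big(\frac{2}{\alpha L}\Big)^2\frac{\log(n)+1}{n}\|\z^0-\z^\star\|_\pi^2.
\end{align}
Specializing to $\pi=\pi^\star$ and using $\|\z^0-\z^\star\|_{\pi^\star}^2=\rho\,\|\z^0-\z^\star\|^2$, the constant becomes $C=\big(\tfrac{2}{\alpha L}\big)^2(\log(n)+1)\,\rho\,\tfrac{1}{n}\|\z^0-\z^\star\|^2$.

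First I would substitute the hypothesis $\rho=\cO(1/n)$ together with the standing normalization $\tfrac{1}{n}\|\z^0-\z^\star\|^2=\cO(1)$ and the step-size choice $\alpha=\cO(1/L)$ (so $\tfrac{2}{\alpha L}=\cO(1)$). This collapses $C$ to $\cO\big((\log(n)+1)/n\big)=\tilde\cO(1/n)$, hence the epoch-$k$ bound on the squared gradient-plus-subgradient norm is $\tilde\cO\big(L^2/(nk)\big)$. Second I would invert this to get the epoch complexity: to force the left-hand side below $\epsilon$ it suffices to take $k=\tilde\cO\big(L^2/(n\epsilon)\big)$ epochs. Third, since each epoch of Prox-DFinito performs exactly $n$ individual gradient evaluations (one per iteration $t=kn+1,\dots,(k+1)n$), the total gradient complexity is $n\cdot k = n\cdot\tilde\cO\big(L^2/(n\epsilon)\big)=\tilde\cO(L^2/\epsilon)$, which is the claimed sample-size-independent rate.

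I do not expect a genuine obstacle here — the proposition is essentially a bookkeeping corollary of Theorem~\ref{thm:cyc}. The only point requiring mild care is making the dependence on the constants explicit and consistent with the conventions used in Sec.~\ref{sec:comparison}: one must keep $n$, $\alpha$, $k$, $\theta$, $L$ in the roles of ``fixed constants'' as stated, absorb $\theta(1-\theta)$ and $(2/\alpha L)^2$ into the $\tilde\cO$ (they are $n$- and $\epsilon$-independent), and verify that the $\log(n)$ factor is exactly what the $\tilde\cO$ notation is meant to hide, so that the final bound is genuinely $\tilde\cO(L^2/\epsilon)$ rather than merely $\cO(\mathrm{polylog}(n)\cdot L^2/\epsilon)$. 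A one-line remark noting that the hypothesis $\rho=\cO(1/n)$ is exactly the opposite extreme from the worst case $\rho\le 1$ used in Sec.~\ref{sec:comparison}, and is realized in the highly data-heterogeneous regime, would round out the argument.
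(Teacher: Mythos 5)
Your proposal is correct and matches the paper's own argument, which is literally the one-line remark that the proposition follows ``by directly substituting \eqref{good-rho-sample-indep} into Theorem \ref{thm:conv-cyc}''; you have simply made the bookkeeping explicit. The only superfluous worry is your final distinction between $\tilde{\cO}(L^2/\epsilon)$ and $\cO(\mathrm{polylog}(n)\cdot L^2/\epsilon)$ --- the paper's $\tilde{\cO}$ is defined precisely to hide the $\log(n)$ factor, so the two coincide here.
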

The above proposition can be proved by directly substituting \eqref{good-rho-sample-indep} into Theorem \ref{thm:conv-cyc}. In the following, we discuss a  data-heterogeneous scenario in which relation \eqref{good-rho-sample-indep} holds. 

\textbf{A data-heterogeneous scenario.} To this end, we let  $\x^\star = \mathrm{col}\{x^\star,\cdots, x^\star\}$ and  $\nabla \f(x^\star) = \mathrm{col}\{\nabla f_1(x^\star),\cdots, \nabla f_n(x^\star)\}$, it follows from Remark \ref{remark-z_i-star} that $\z^\star = \x^\star - \alpha \nabla \f(x^\star)$. If we set $\z^0 = 0$ (which is  common in the implementation) and $\alpha = 1/L$ (the theoretically suggested step-size), it then holds that $\|z_i^0 - z_i^\star\|^2 = \|x^\star - \nabla f_i(x^\star)/L\|^2$. Next, we assume $\|z_i^0 - z_i^\star\|^2 =  \|x^\star - \nabla f_i(x^\star)/L\|^2 = n \beta^{i-1}$ ($0<\beta <1$) holds. 
Under such assumption, the optimal cyclic order will be $\pi^\star = (1,2,\cdots, n)$. Now we examine $\|\z^0 - \z^\star\|^2_{\pi^\star}$ and $\|\z^0 - \z^\star\|^2$: 
\begin{align*}
\sum_{i=1}^n\|z_i^0-z_i^\star\|^2 = n\sum_{i=1}^n \beta^{i-1} \approx \frac{n}{1-\beta}, \quad   \sum_{i=1}^n\frac{i}{n}\|z_i^0-z_i^\star \|^2 = \sum_{i=1}^n i \beta^{i-1} \approx \frac{1}{(1-\beta)^2}
\end{align*}
when $n$ is large, which implies that $\rho = \|\z^0 - \z^\star\|^2_{\pi^\star}/\|\z^0 - \z^\star\|^2 = \cO(1/n)$ since $\beta$ is a constant independent of $n$. With Proposition \ref{prop-outperform-cond}, we know Prox-DFinito with optimal cyclic sampling can achieve $\tilde{\cO}(L^2/\epsilon)$, which is independent of sample size $n$. Note that $\|\nabla f_i(x^\star)\|^2 =n \beta^{i-1}$ implies a data-heterogeneous scenario where $\beta$ can roughly gauge the variety of data samples. 

\subsection{Adaptive importance reshuffling}
\label{sec:adaptive-reshuffling}

\begin{wrapfigure}{R}{0.5\textwidth}
\vspace{-0.8cm}
\begin{minipage}{0.5\textwidth}
\begin{algorithm}[H]
\caption{Adaptive Importance Reshuffling}
\label{alg2}
\begin{algorithmic}
\STATE \noindent {\bfseries Initialize:} $ w^0(i)=\|z_i^0-\bar{z}^0\|^2$ for $i\in [n]$;
\FOR{epoch $k=0,1,2,\cdots$}
\STATE Reshuffle $[n]$ based on the vector $w^k$;
\STATE Update a Prox-DFinito epoch;
\STATE Update $w^{k+1}$ according to \eqref{eqn:w-update};
\ENDFOR
\end{algorithmic}
\label{alg-adaptive}
\end{algorithm}
\end{minipage}
\vspace{-5mm}
\end{wrapfigure}

The optimal cyclic order decided by Proposition \ref{thm-opt-ordering} is not practical since the importance indicator of each sample depends on the unknown $z_i^\star = x^\star - \alpha \nabla f_i(x^\star)$. This problem can be overcome by replacing $z_i^\star$ by its
estimate $z_i^{kn}$, which leads to an adaptive importance reshuffling strategy.

We introduce $w\in\RR^{n}$ as an importance indicating vector with each element $w_i$ indicating the importance of sample $i$ and initialized as $
    w^0(i)=\|z_i^0-\bar{z}^0\|^2,\ \forall\,i \in [n]. $
In the $k$-th epoch, we draw sample $i$ earlier if $w^{k}(i)$ is larger. After the $k$-th epoch, $w$ will be updated as 
\begin{equation}\label{eqn:w-update}
    w^{k+1}(i)=(1-\gamma) w^{k}(i)+\gamma\|z_i^0-z_i^{(k+1)n}\|^2, 
\end{equation}
where $i\in[n]$ and $\gamma\in(0,1)$ is a fixed damping parameter. Suppose $z_i^{kn} \to z_i^\star$, the above recursion will guarantee $w^k(i) \to \|z_i^0 - z_i^\star\|^2$. In other words, the order decided by $w^k$ will gradually adapt to the optimal cyclic order as $k$ increases. Since the order decided by importance changes from epoch to epoch, we call this approach {\em adaptive importance reshuffling} and list it in Algorithm \ref{alg2}. We provide the convergence guarantees of the adaptive importance reshuffling method in Appendix \ref{app:adaptive}.

\section{Numerical Experiments}
\subsection{Comparison with SVRG and SAGA under without-replacement sampling orders}\label{expri:theory}
In this experiment, we compare DFinito with SVRG  \cite{johnson2013accelerating} and SAGA \cite{defazio2014finito} under without-replacement sampling (RR, cyclic sampling). We consider a similar setting as in \cite[Figure 2]{malinovsky2021random}, where all step sizes  are chosen as the {\em theoretically} optimal one, see Table \ref{tab:theo-step-size} in Appendix \ref{app:step-size}.
We run experiments for the regularized logistic regression problem, i.e. problem \eqref{general-prob} with $f_i(x) = \log\left(1+\exp(-y_i \langle w_i,x\rangle)\right)+\frac{\lambda}{2}\|x\|^2$ with three widely-used datasets: CIFAR-10 \cite{Krizhevsky2009LearningML}, MNIST \cite{deng2012mnist}, and COVTYPE \cite{nr}. This problem is $L$-smooth and $\mu$-strongly convex with  $L=\frac{1}{4n}\lambda_{\max}(W^TW)+\lambda$ and $\mu=\lambda$.
From Figure \ref{fig:vr-theory-compare}, it is observed that DFinito outperforms SVRG and SAGA in terms of gradient complexity under without-replacement sampling orders with their best-known theoretical rates. {The comparison with SVRG and SAGA with the {\em practically} optimal step sizes is  in Appendix \ref{app:more-experi}.}
\begin{figure*}[t!]
	\centering
	\subfigbottomskip=2pt 
	\subfigcapskip=-5pt 
	\subfigure{
		\includegraphics[width=0.3\linewidth]{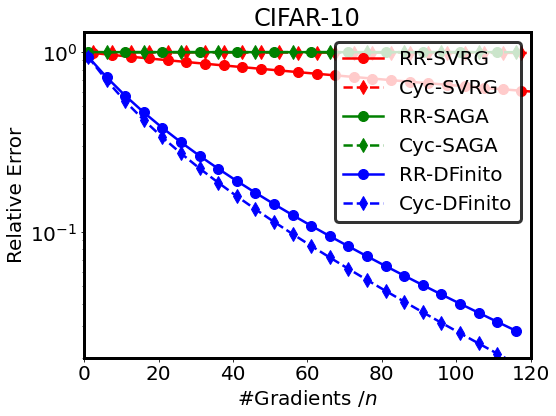}}
	\subfigure{
		\includegraphics[width=0.3\linewidth]{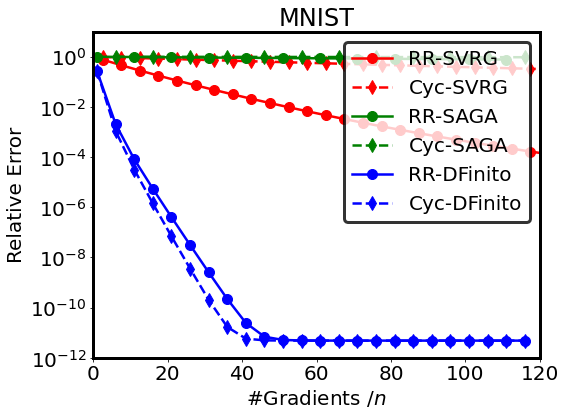}}
	\subfigure{
		\includegraphics[width=0.3\linewidth]{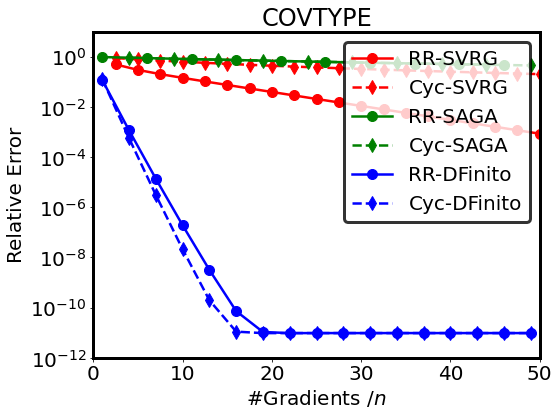}}
		\vskip -0.1in
\caption{\small Comparison with SVRG and SAGA under without-replacement sampling orders using theoretical step sizes on Cifar-10  ($\lambda =0.005$), MNIST  ($\lambda =0.008$), and Covtype ($\lambda =0.05$). The $y$-axis indicates the relative mean-square error $(\EE)\|x-x^\star\|^2/\|x^0-x^\star\|^2$ versus $\#$gradient evaluations$/n$.}
	\vskip -0.15in
	    \label{fig:vr-theory-compare}
\end{figure*}

\subsection{DFinito with cyclic sampling}

\textbf{Justification of the  optimal cyclic sampling order.} 
To justify the optimal cyclic sampling order $\pi^\star$ suggested in  Proposition \ref{thm-opt-ordering}, we test DFinito with eight arbitrarily-selected cyclic orders, and compare them with the optimal cyclic ordering $\pi^\star$ as well as the adaptive importance reshuffling method (Algorithm \ref{alg-adaptive}). To make the comparison distinguishable, we construct a least square problem with heterogeneous data samples with $n=200$, $d=50$, $L=100$, $\mu = 10^{-2}$ (see Appendix \ref{app:construct} for the constructed problem). The constructed problem is with $\rho = \|\z^0 - \z^\star\|^2_{\pi^*}/\|\z^0 - \z^\star\|^2_{2} = 0.006$ when $z_i^0=0$, $x^0=0$, and $\alpha = \frac{1}{3L}$, which is close to $1/n = 0.005$. In the left plot in Fig.~\ref{fig:cyc-experiments}, it is observed that the optimal cyclic sampling achieves the fastest convergence rate. 
Furthermore, the adaptive shuffling method can match with the optimal cyclic ordering. These observations are consistent with our theoretical results derived in
Sec.~\ref{opt-cyc-1/n} and \ref{sec:adaptive-reshuffling}. 

\begin{figure*}[h!]
	\centering
	\vskip -0.1in
	\subfigbottomskip=2pt 
	\subfigcapskip=-5pt 
	\subfigure{
		\includegraphics[width=0.33\linewidth]{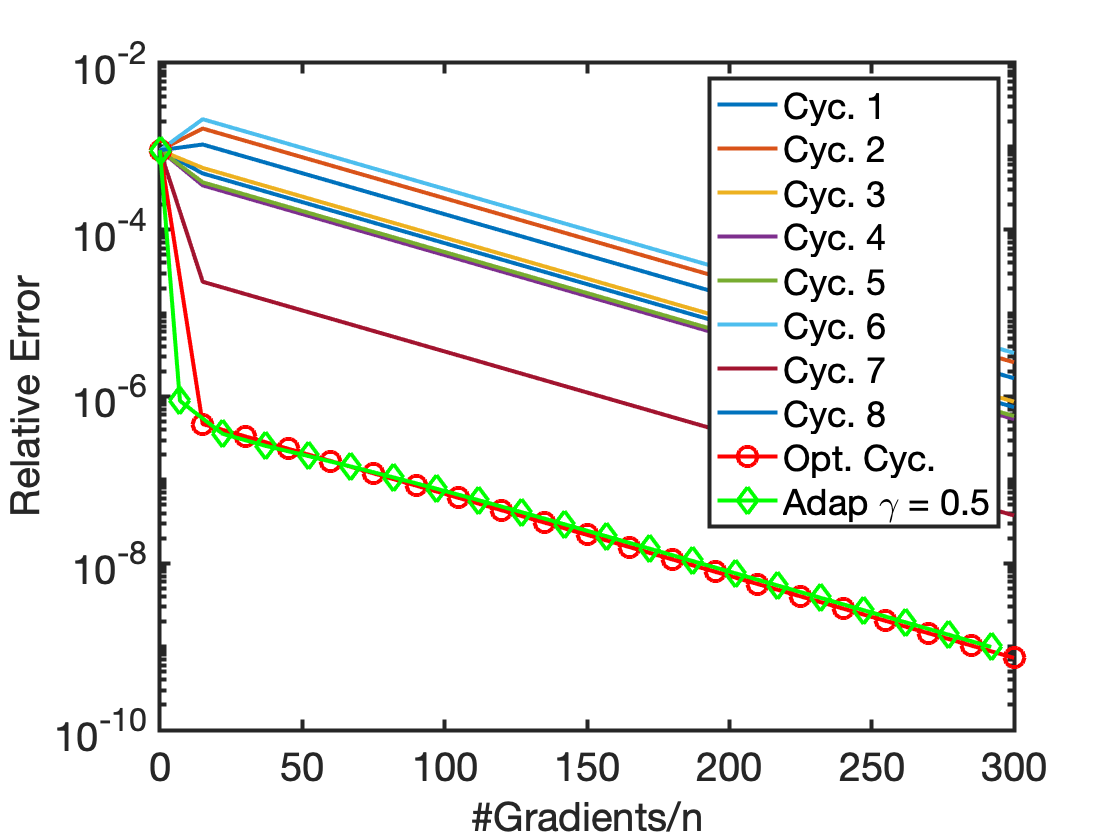}}
	\subfigure{
		\includegraphics[width=0.33\linewidth]{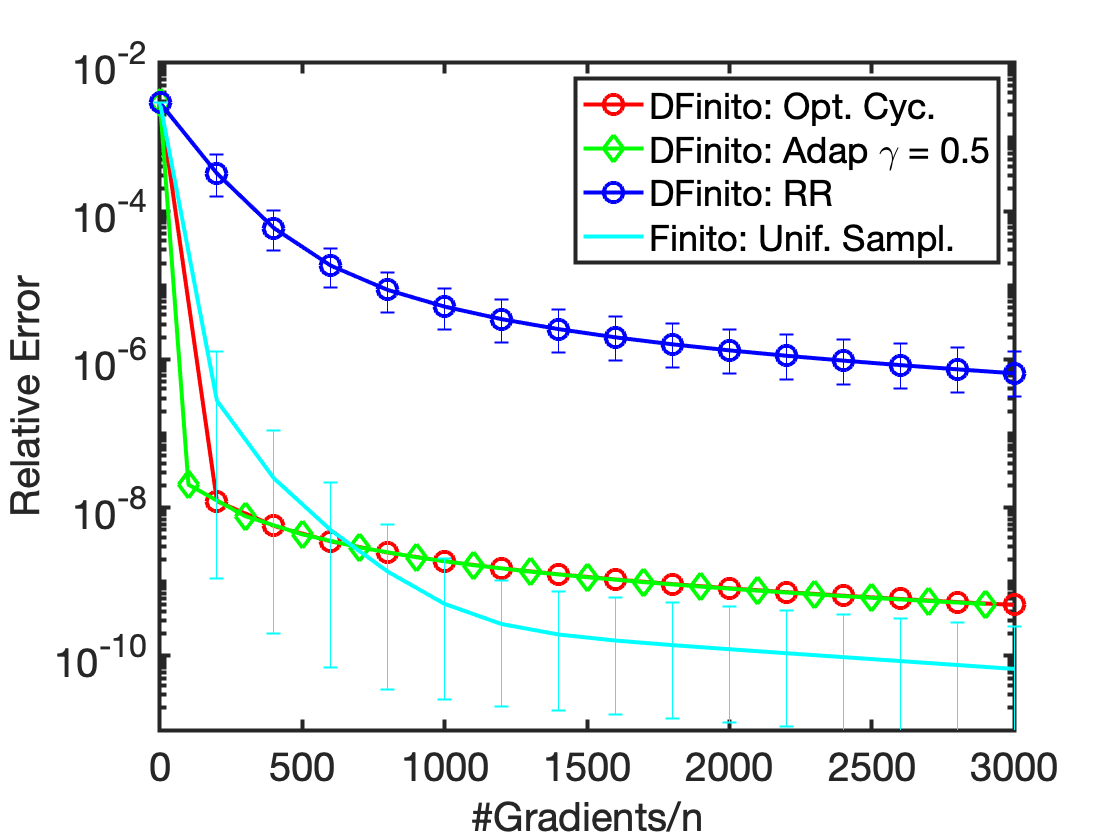}}
\caption{\small Left: Performance of DFinito under different sampling orders.  Error metric: $\|x-x^\star\|^2/\|x^0-x^\star\|^2$. Right: Comparison of DFinito under $\pi^\star$-cyclic sampling and uniform sampling in a highly heterogeneous scenario. Error metric: $\|\nabla F(x)\|^2/\|\nabla F(x^0)\|^2$.}
	\vskip -0.1in
	    \label{fig:cyc-experiments}
\end{figure*}

\textbf{Optimal cyclic sampling can achieve sample-size-independent complexity.}  It is established in \cite{qian2019miso} that Finito with uniform-iid-sampling can achieve $n$-independent gradient complexity with $\alpha=\frac{n}{8L}$. In this experiment, we compare DFinito ($\alpha=\frac{2}{L}$) with Finito under uniform sampling (8 runs, $\alpha=\frac{n}{8L}$) in a  convex and highly heterogeneous scenario ($\rho=\cO(\frac{1}{n})$). The constructed problem is with  $n=500$, $d=20$, $L=0.3$, $\theta=0.5$ and  $\|z_i^0-z_i^\star\|=10000*0.1^{i-1}$, $1\leq i\leq n$ (see detailed initialization in  Appendix \ref{app:more-experi}). We also depict DFinito with random-reshuffling (8 runs) as another  baseline. In the right plot of Figure \ref{fig:cyc-experiments}, it is observed that the convergence curve of DFinito with $\pi^\star$-cyclic sampling matches  with Finito with uniform sampling. This implies DFinito can achieve the same $n$-independent gradient complexity as Finito with uniform sampling. 



\subsection{More experiments}
We conduct more experiments in Appendix \ref{app:more-experi}. First, we compare DFinito with GD/SGD  to justify its empirical superiority to these methods. Second, we validate how different data heterogeneity will influence optimal cyclic sampling. Third, 
we examine the performance of SVRG, SAGA, and DFinito under without/with-replacement sampling using grid-search (not theoretical) step sizes.

\section{Conclusion and Discussion}
This paper develops Prox-DFinito and analyzes its convergence rate under without-replacement sampling in both convex and strongly convex scenarios. Our derived rates are state-of-the-art compared to existing results. In particular, this paper derives the best-case convergence rate for Prox-DFinito with cyclic sampling, which can be sample-size-independent in the highly data-heterogeneous scenario. A future direction is to close the gap in gradient complexity between variance reduction under without-replacement and uniform-iid-sampling in the more general setting.

\bibliography{example_paper}

\begin{thebibliography}{10}

\bibitem{bottou2009curiously}
L.~Bottou.
\newblock Curiously fast convergence of some stochastic gradient descent
  algorithms.
\newblock In {\em Proceedings of the symposium on learning and data science,
  Paris}, volume~8, pages 2624--2633, 2009.

\bibitem{bottou2018optimization}
L.~Bottou, F.~E. Curtis, and J.~Nocedal.
\newblock Optimization methods for large-scale machine learning.
\newblock {\em Siam Review}, 60(2):223--311, 2018.

\bibitem{boyd2011distributed}
S.~Boyd, N.~Parikh, and E.~Chu.
\newblock {\em Distributed optimization and statistical learning via the
  alternating direction method of multipliers}.
\newblock Now Publishers Inc, 2011.

\bibitem{chen2001atomic}
S.~S. Chen, D.~L. Donoho, and M.~A. Saunders.
\newblock Atomic decomposition by basis pursuit.
\newblock {\em SIAM review}, 43(1):129--159, 2001.

\bibitem{chow2017cyclic}
Y.~T. Chow, T.~Wu, and W.~Yin.
\newblock Cyclic coordinate-update algorithms for fixed-point problems:
  Analysis and applications.
\newblock {\em SIAM Journal on Scientific Computing}, 39(4):A1280--A1300, 2017.

\bibitem{defazio2014saga}
A.~Defazio, F.~Bach, and S.~Lacoste-Julien.
\newblock {SAGA}: A fast incremental gradient method with support for
  non-strongly convex composite objectives.
\newblock {\em arXiv preprint arXiv:1407.0202}, 2014.

\bibitem{defazio2014finito}
A.~Defazio, J.~Domke, et~al.
\newblock Finito: A faster, permutable incremental gradient method for big data
  problems.
\newblock In {\em International Conference on Machine Learning}, pages
  1125--1133. PMLR, 2014.

\bibitem{deng2012mnist}
L.~Deng.
\newblock The mnist database of handwritten digit images for machine learning
  research.
\newblock {\em IEEE Signal Processing Magazine}, 29(6):141--142, 2012.

\bibitem{donoho2006compressed}
D.~L. Donoho.
\newblock Compressed sensing.
\newblock {\em IEEE Transactions on information theory}, 52(4):1289--1306,
  2006.

\bibitem{gurbuzbalaban2017convergence}
M.~Gurbuzbalaban, A.~Ozdaglar, and P.~A. Parrilo.
\newblock On the convergence rate of incremental aggregated gradient
  algorithms.
\newblock {\em SIAM Journal on Optimization}, 27(2):1035--1048, 2017.

\bibitem{gurbuzbalaban2019random}
M.~Gurbuzbalaban, A.~Ozdaglar, and P.~A. Parrilo.
\newblock Why random reshuffling beats stochastic gradient descent.
\newblock {\em Mathematical Programming}, pages 1--36, 2019.

\bibitem{haochen2019random}
J.~Haochen and S.~Sra.
\newblock Random shuffling beats sgd after finite epochs.
\newblock In {\em International Conference on Machine Learning}, pages
  2624--2633. PMLR, 2019.

\bibitem{HUANG2020124211}
X.~Huang, E.~K. Ryu, and W.~Yin.
\newblock Tight coefficients of averaged operators via scaled relative graph.
\newblock {\em Journal of Mathematical Analysis and Applications},
  490(1):124211, 2020.

\bibitem{johnson2013accelerating}
R.~Johnson and T.~Zhang.
\newblock Accelerating stochastic gradient descent using predictive variance
  reduction.
\newblock {\em Advances in neural information processing systems}, 26:315--323,
  2013.

\bibitem{Krizhevsky2009LearningML}
A.~Krizhevsky.
\newblock Learning multiple layers of features from tiny images.
\newblock 2009.

\bibitem{ma2020understanding}
S.~Ma and Y.~Zhou.
\newblock Understanding the impact of model incoherence on convergence of
  incremental sgd with random reshuffle.
\newblock In {\em International Conference on Machine Learning}, pages
  6565--6574. PMLR, 2020.

\bibitem{mairal2015incremental}
J.~Mairal.
\newblock Incremental majorization-minimization optimization with application
  to large-scale machine learning.
\newblock {\em SIAM Journal on Optimization}, 25(2):829--855, 2015.

\bibitem{malinovsky2021random}
G.~Malinovsky, A.~Sailanbayev, and P.~Richt{\'a}rik.
\newblock Random reshuffling with variance reduction: New analysis and better
  rates.
\newblock {\em arXiv preprint arXiv:2104.09342}, 2021.

\bibitem{mao2018walk}
X.~Mao, Y.~Gu, and W.~Yin.
\newblock Walk proximal gradient: An energy-efficient algorithm for consensus
  optimization.
\newblock {\em IEEE Internet of Things Journal}, 6(2):2048--2060, 2018.

\bibitem{mateos2010distributed}
G.~Mateos, J.~A. Bazerque, and G.~B. Giannakis.
\newblock Distributed sparse linear regression.
\newblock {\em IEEE Transactions on Signal Processing}, 58(10):5262--5276,
  2010.

\bibitem{mishchenko2021proximal}
K.~Mishchenko, A.~Khaled, and P.~Richt{\'a}rik.
\newblock Proximal and federated random reshuffling.
\newblock {\em arXiv preprint arXiv:2102.06704}, 2021.

\bibitem{mishchenko2020random}
K.~Mishchenko, A.~Khaled Ragab~Bayoumi, and P.~Richt{\'a}rik.
\newblock Random reshuffling: Simple analysis with vast improvements.
\newblock {\em Advances in Neural Information Processing Systems}, 33, 2020.

\bibitem{mokhtari2018surpassing}
A.~Mokhtari, M.~Gurbuzbalaban, and A.~Ribeiro.
\newblock Surpassing gradient descent provably: A cyclic incremental method
  with linear convergence rate.
\newblock {\em SIAM Journal on Optimization}, 28(2):1420--1447, 2018.

\bibitem{nagaraj2019sgd}
D.~Nagaraj, P.~Jain, and P.~Netrapalli.
\newblock Sgd without replacement: Sharper rates for general smooth convex
  functions.
\newblock In {\em International Conference on Machine Learning}, pages
  4703--4711. PMLR, 2019.

\bibitem{park2020linear}
Y.~Park and E.~K. Ryu.
\newblock Linear convergence of cyclic saga.
\newblock {\em Optimization Letters}, 14(6):1583--1598, 2020.

\bibitem{qian2019miso}
X.~Qian, A.~Sailanbayev, K.~Mishchenko, and P.~Richt{\'a}rik.
\newblock Miso is making a comeback with better proofs and rates.
\newblock {\em arXiv preprint arXiv:1906.01474}, 2019.

\bibitem{rajput2020closing}
S.~Rajput, A.~Gupta, and D.~Papailiopoulos.
\newblock Closing the convergence gap of sgd without replacement.
\newblock In {\em International Conference on Machine Learning}, pages
  7964--7973. PMLR, 2020.

\bibitem{robbins1951stochastic}
H.~Robbins and S.~Monro.
\newblock A stochastic approximation method.
\newblock {\em The annals of mathematical statistics}, pages 400--407, 1951.

\bibitem{nr}
R.~A. Rossi and N.~K. Ahmed.
\newblock The network data repository with interactive graph analytics and
  visualization.
\newblock In {\em AAAI}, 2015.

\bibitem{Ryu2019ScaledRG}
E.~K. Ryu, R.~Hannah, and W.~Yin.
\newblock Scaled relative graph: Nonexpansive operators via 2d euclidean
  geometry.
\newblock {\em arXiv: Optimization and Control}, 2019.

\bibitem{safran2020good}
I.~Safran and O.~Shamir.
\newblock How good is sgd with random shuffling?
\newblock In {\em Conference on Learning Theory}, pages 3250--3284. PMLR, 2020.

\bibitem{schmidt2017minimizing}
M.~Schmidt, N.~Le~Roux, and F.~Bach.
\newblock Minimizing finite sums with the stochastic average gradient.
\newblock {\em Mathematical Programming}, 162(1-2):83--112, 2017.

\bibitem{sun2019general}
T.~Sun, Y.~Sun, D.~Li, and Q.~Liao.
\newblock General proximal incremental aggregated gradient algorithms: Better
  and novel results under general scheme.
\newblock {\em Advances in Neural Information Processing Systems},
  32:996--1006, 2019.

\bibitem{tibshirani1996regression}
R.~Tibshirani.
\newblock Regression shrinkage and selection via the lasso.
\newblock {\em Journal of the Royal Statistical Society: Series B
  (Methodological)}, 58(1):267--288, 1996.

\bibitem{Vanli2016ASC}
N.~D. Vanli, M.~G{\"u}rb{\"u}zbalaban, and A.~Ozdaglar.
\newblock A stronger convergence result on the proximal incremental aggregated
  gradient method.
\newblock {\em arXiv: Optimization and Control}, 2016.

\bibitem{vanli2018global}
N.~D. Vanli, M.~Gurbuzbalaban, and A.~Ozdaglar.
\newblock Global convergence rate of proximal incremental aggregated gradient
  methods.
\newblock {\em SIAM Journal on Optimization}, 28(2):1282--1300, 2018.

\bibitem{ying2020variance}
B.~Ying, K.~Yuan, and A.~H. Sayed.
\newblock Variance-reduced stochastic learning under random reshuffling.
\newblock {\em IEEE Transactions on Signal Processing}, 68:1390--1408, 2020.

\bibitem{ying2018stochastic}
B.~Ying, K.~Yuan, S.~Vlaski, and A.~H. Sayed.
\newblock Stochastic learning under random reshuffling with constant
  step-sizes.
\newblock {\em IEEE Transactions on Signal Processing}, 67(2):474--489, 2018.

\bibitem{yu2011dual}
H.-F. Yu, F.-L. Huang, and C.-J. Lin.
\newblock Dual coordinate descent methods for logistic regression and maximum
  entropy models.
\newblock {\em Machine Learning}, 85(1-2):41--75, 2011.

\bibitem{Yuan2016StochasticGD}
K.~Yuan, B.~Ying, S.~Vlaski, and A.~Sayed.
\newblock Stochastic gradient descent with finite samples sizes.
\newblock {\em 2016 IEEE 26th International Workshop on Machine Learning for
  Signal Processing (MLSP)}, pages 1--6, 2016.

\bibitem{zhao2015stochastic}
P.~Zhao and T.~Zhang.
\newblock Stochastic optimization with importance sampling for regularized loss
  minimization.
\newblock In {\em international conference on machine learning}, pages 1--9.
  PMLR, 2015.

\end{thebibliography}
\bibliographystyle{abbrv}

\newpage

\newpage
\appendix

\textbf{\LARGE Appendix}



\section{Efficient Implementation of Prox-Finito}\label{app:efficient}

\begin{center}
    \begin{algorithm}[h!]
\caption{Prox-Finito: Efficient Implementation}
\label{alg:efficient}
\begin{algorithmic}
\STATE \noindent {\bfseries Input:} $\bar{z}^0=\frac{1}{n}\sum\limits_{i=1}^nz_i^0$,  step-size $\alpha$, and $\theta\in(0,1)$;
\FOR{epoch $k=0,1,2,\cdots$}
\FOR{iteration $t=kn+1, kn+2, \cdots, (k+1)n$}
\STATE $x^{t-1}=\prox_{\alpha r}(\bar{z}^{t-1})$;
\STATE Pick $i_t$ with some rule;
\STATE Compute $d_{i_t}^{t}=x^{t-1}-\alpha \nabla f_i(x^{t-1})-z_{i_t}^{t-1}$;
\STATE Update $\bar{z}^t=\bar{z}^{t-1}+d_{i_t}^{t}/n$;
\STATE Update $z_{i_t}^t=z_{i_t}^{t-1}+\theta d_{i_t}^t$ and delete $d_{i_t}^t$;
\ENDFOR
\STATE $\bar{z}^{(k+1)n} \leftarrow (1-\theta)\bar{z}^{kn}+\theta\bar{z}^{(k+1)n}$;
\ENDFOR
\end{algorithmic}
\end{algorithm}
\end{center}

\section{Operator's Form}\label{app:operator}
\subsection{Proof of Proposition \ref{prop:pi}}\label{app:operator-1}
\begin{proof}
In fact, it suffices to notice that 
\begin{align}
\z^{kn+\ell} \hspace{-0.5mm}=\hspace{-0.5mm}
\begin{cases}
\cT_{\pi(\ell)} \z^{kn+\ell-1} & \mbox{if $\ell\in [n-1]$}\\
(1\hspace{-0.5mm}-\hspace{-0.5mm}\theta) \z^{kn} \hspace{-0.5mm}+\hspace{-0.5mm} \cT_{\pi(n)} \z^{kn+n-1} & \mbox{if $\ell=n$},
\end{cases}  \nonumber 
\end{align}
and the $x$-update in \eqref{alg:operator-x} directly follows \eqref{diag-2}.
\end{proof}

\subsection{Proof of Proposition \ref{prop:opt}}\label{app:operator-2}
\begin{proof}
With definition \eqref{prox-definition}, we can reach the following important relation:
\begin{equation}
    x=\prox_{\alpha r}(y)\,\Longleftrightarrow\,0\in \alpha\,\partial\,r(x)+x-y.
\end{equation}

\textbf{Sufficiency.} Assuming $x^\star$ minimizes $F(z)+r(x)$, it holds that $0\in \nabla F(x^\star)+\partial\,r(x^\star)$. Let $z_i^\star=(I-\alpha\nabla f_i)(x^\star)$ and $\z^\star=\mbox{col}\{z_1^\star,\dots,z_n^\star\}$, we now prove   $\z^\star$ satisfies \eqref{opt-2} and \eqref{opt-1}.

Note $\cA\z^\star=\frac{1}{n}\sum\limits_{i=1}^n(I-\alpha\nabla f_i)(x^\star)=x^\star-\alpha \nabla F(x^\star)$ and $0\in x^\star-(x^\star-\alpha\nabla F(x^\star))+\alpha \,\partial\,r(x^\star)$, it holds that 
\begin{equation}
    x^\star=\prox_{\alpha r}(x^\star-\alpha\nabla F(x^\star))=\prox_{\alpha r}(\cA\z^\star)
\end{equation}
and hence
\begin{equation}
    (I-\alpha \nabla f_i)\circ\prox_{\alpha r}(\cA\z^\star)=(I-\alpha \nabla f_i)(x^\star)=z_i^\star,\quad \forall\,i\in[n].
\end{equation}
Therefore, $\z^\star$ satisfies \eqref{opt-2} and \eqref{opt-1}.

\textbf{Necessity.} Assuming $\z^\star=\cT_i\z^\star,\,\forall\,i\in[n]$, we have  $z_i^\star=(I-\alpha\nabla f_i)\circ\prox_{\alpha r}(\cA\z^\star)$. By averaging all $z_i^\star$, we have
\begin{equation}\label{eqn:28}
    \cA\z^\star=(I-\alpha \nabla F)\circ\prox_{\alpha r}(\cA\z^\star).
\end{equation}
Let $x^\star=\prox_{\alpha r}(\cA\z^\star)$ and apply $\prox_{\alpha r}$ to \eqref{eqn:28}, we reach
\begin{equation}
    x^\star=\prox_{\alpha r}(x^\star-\alpha\nabla F(x^\star)),
\end{equation}
which indicates $0\in \alpha\,\partial\, r(x^\star)+x^\star-(x^\star-\alpha F(x^\star))\,\Longleftrightarrow\,0\in \nabla F(x^\star)+\partial\,r(x^\star)$, i.e. $x^\star$ is a minimizer.
\end{proof}

\section{Cyclic--Convex}
\subsection{Proof of Lemma \ref{lem:cyc-non-expan}}\label{app:lem1}
\begin{proof}
Without loss of generality, we only prove the case in which  $\pi=(1,2,\dots,n)$ where $\cT_\pi=\cT_n\circ\cdots\circ\cT_2\circ\cT_1$.

To ease the notation, for $\z\in\RR^{nd}$, we define $h_i$-norm as 
\begin{equation}\label{eqn:hi-norm}
    \|\z\|_{h_i}^2=\frac{1}{n}\sum\limits_{j=1}^n \big(\mbox{mod}_n (j-i-1) + 1\big) \|z_j\|^2 .
\end{equation}
Note $\|\z\|_{h_n}^2=\|\z\|_{h_0}^2=\|\z\|_\pi^2$ when $\pi=(1,2,\dots,n)$.

To begin with, we introduce the non-expansiveness of operator $(I-\alpha \nabla f_i)\circ \prox_{\alpha r}$, i.e. 
\begin{equation}\label{eqn:non-expan-compo}
    \|(I-\alpha \nabla f_i)\circ \prox_{\alpha r}(x)-(I-\alpha \nabla f_i)\circ \prox_{\alpha r}(y)\|^2\leq \|x-y\|^2,\,\forall\, x,y\in\RR^d\mbox{ and }i\in[n].
\end{equation}
Note that $\prox_{\alpha r}$ is  non-expansive by itself; see \cite{Ryu2019ScaledRG,HUANG2020124211}. $I-\alpha \nabla f_i$ is non-expansive because \begin{align}\label{23hdbd8-0}
	&\ \|x - \alpha \nabla f_i(x) - y + \alpha \nabla f_i(y)\|^2 \nonumber \\
	=&\ \|x - y\|^2 - 2\alpha\langle x - y, \nabla f_i(x) - \nabla f_i(y)\rangle + \alpha^2\|\nabla f_i(x) - \nabla f_i(y)\|^2 \nonumber \\
	\le &\  \|x - y\|^2 - \Big( \frac{2\alpha}{L} - \alpha^2 \Big) \|\nabla f_{i}(x) - \nabla f_{i}(y)\|^2 \nonumber \\ 
	\le &\  \|x - y\|^2 \quad \forall x\in \RR^d, y\in \RR^d\nonumber
	\end{align}
	where the last inequality holds when $\alpha \le \frac{2}{L}$. Therefore, the non-expansiveness of $I-\alpha \nabla f_i$ and $\prox_{\alpha r}$ imply that the composition $(I-\alpha \nabla f_i)\circ\prox_{\alpha r}$ is also non-expansive.

We then check the operator $\cT_i$. Suppose $\u\in\RR^{nd}$ and $\bsv\in\RR^{nd}$,
\begin{equation}\label{eqn:hi-to-hi-1}
\begin{split}
	\|\cT_i \u - \cT_i \bsv\|^2_{h_i} =&\ \frac{1}{n}\sum_{j\neq i} \big(\mbox{mod}_n (j-i-1) + 1\big) \|u_j - v_j\|^2  \\
	&\quad + \|(I - \alpha \nabla f_{i}) \circ\prox_{\alpha r}(\cA \u) - (I - \alpha \nabla f_{i})\circ\prox_{\alpha r} (\cA \bsv)\|^2  \\
	\overset{(a)}{\le}&\ \frac{1}{n}\sum_{j\neq i} \big(\mbox{mod}_n (j-i-1) + 1\big) \|u_j - v_j\|^2  + \|\cA \u - \cA \bsv\|^2 \\
	\overset{(b)}{\le}&\ \frac{1}{n}\sum_{j\neq i} \big(\mbox{mod}_n (j-i-1) + 1\big) \|u_j - v_j\|^2  + \frac{1}{n}\sum_{j=1}^n\|u_j - v_j\|^2 \\
	=&\ \frac{1}{n} \sum_{j=1}^n \big(\mbox{mod}_n (j-i) + 1\big) \|u_j - v_j\|^2  = \|\u - \bsv\|_{h_{i-1}}^2 .
\end{split}
\end{equation}
In the above inequalities, the inequality (a) holds due to \eqref{eqn:non-expan-compo} and (b) holds because
	\begin{align}
	\|\cA \u - \cA \bsv\|^2 = \|\frac{1}{n}\sum_{i=1}^n (u_i - v_i)\|^2 \le \frac{1}{n}\sum_{i=1}^n\|u_i - v_i\|^2.
	\end{align}
With inequality \eqref{eqn:hi-to-hi-1}, we have that
	\begin{equation}
	\begin{split}
	\|\cT_\pi \u - \cT_\pi \bsv\|^2_{\pi} = &\ \|\cT_n \cT_{n-1}\cdots \cT_1 \u - \cT_n \cT_{n-1}\cdots \cT_1 \bsv\|^2_{h_n} \\
	\le&\ \|\cT_{n-1}\cdots \cT_1 \u - \cT_{n-1}\cdots \cT_1 \bsv\|^2_{h_{n-1}}  \\
	\le&\ \|\cT_{n-2}\cdots \cT_1 \u - \cT_{n-2}\cdots \cT_1 \bsv\|^2_{h_{n-2}}  \\
	\le&\ \cdots \\
	\le&\ \|\cT_1 \u - \cT_1 \bsv\|^2_{h_{1}}\\
	\le&\ \|\u - \bsv\|^2_{h_0}=\|\u - \bsv\|^2_{\pi}.
	\end{split}
	\end{equation}
\end{proof}

\subsection{Proof of Lemma \ref{lem:cyc-1/k}}\label{app:lem2}
\begin{proof}
We define $\cS_\pi=(1-\theta)I+\theta\cT_\pi$ to ease the notation. Then $\z^{(k+1)n}=\cS_\pi \z^{kn}$ by Proposition \ref{prop:pi}.
To prove Lemma \ref{lem:2}, notice that $\forall\ k=1, 2, \cdots$
\begin{align}
\|\z^{(k+1)n} - \z^{kn}\|^2_{\pi}& = \|\cS_\pi \z^{kn} - \cS_\pi \z^{(k-1)n}\|^2_{\pi}\nonumber\\ &\le(1-\theta)\|\z^{kn}-\z^{(k-1)n}\|^2_{\pi} +\theta\|\cT_\pi \z^{kn} - \cT_\pi \z^{(k-1)n}\|^2_{\pi}\nonumber\\
&\overset{\eqref{xnsdbbb}}{\leq}  \|\z^{kn} - \z^{(k-1)n}\|^2_{\pi}
\end{align}
The above relation implies that $\|\z^{(k+1)n} - \z^{kn}\|^2_{\pi} $ is non-increasing. Next, 
\begin{align}
\|\z^{(k+1)n} - \z^\star\|^2_{\pi} &\overset{}{=} \|(1-\theta)\z^{kn} + \theta\cT_\pi(\z^{kn}) - \z^\star\|^2_{\pi} \nonumber \\
& = (1-\theta)\|\z^{kn} -\z^\star \|^2_{\pi} + \theta\|\cT_\pi(\z^{kn}) - \z^\star \|^2_{\pi} -\theta(1-\theta)\|\z^{kn} - \cT(\z^{kn})\|^2_{\pi} \nonumber \\
& \overset{(c)}{\leq}  \|\z^{kn} -\z^\star \|^2_{\pi}  - \theta(1-\theta)\|\z^{kn} - \cT_\pi(\z^{kn})\|^2_{\pi} \nonumber \\
& = \|\z^{kn} -\z^\star \|^2_{\pi}  - \frac{1-\theta}{\theta}\|\z^{kn} - \cS_\pi(\z^{kn})\|^2_{\pi} \nonumber \\
& = \|\z^{kn} -\z^\star \|^2_{\pi}  - \frac{1-\theta}{\theta}\|\z^{kn} - \z^{(k+1)n}\|^2_{\pi}.
\end{align}
where equality (c) holds because Proposition \ref{prop:3} implies $\cT_\pi\z^\star=\z^\star$.

Summing the above inequality from $0$ to $k$ we have
\begin{align}
\|\z^{(k+1)n} - \z^\star\|^2_{\pi} \le \|\z^{0} -\z^\star \|^2_{\pi} - \frac{1-\theta}{\theta}\sum_{\ell=0}^{k}\|\z^{\ell n} - \z^{(\ell+1)n}\|^2_{\pi}.
\end{align}
Since $\|\z^{(k+1)n} - \z^{kn}\|^2_{\pi} $ is non-increasing, we reach the conclusion.
\end{proof}

\subsection{Proof of Theorem \ref{thm:conv-cyc}}\label{app:thm1}
\begin{proof}
Since $z_{\pi(j)}^{kn+j-1}=z_{\pi(j)}^{kn}$ for $1\leq j\leq n$, it holds that 
	\begin{equation}
	\begin{split}
	    \bar{z}^{(k+1)n}&=(1-\theta)\bar{z}^{kn}+\theta\left(\bar{z}^{kn}+\sum\limits_{j=1}^n\frac{1}{n}\left((I-\alpha\nabla f_{\pi(j)})(x^{kn+j-1})-z_{\pi(j)}^{kn+j-1}\right)\right)\\
	    &=(1-\theta)\bar{z}^{kn}+\theta\left(\bar{z}^{kn}+\sum\limits_{j=1}^n\frac{1}{n}\left((I-\alpha\nabla f_{\pi(j)})(x^{kn+j-1})-z_{\pi(j)}^{kn}\right)\right)\\
	    &=(1-\theta)\bar{z}^{kn}+\theta\sum\limits_{j=1}^n\frac{1}{n}(I-\alpha\nabla f_{\pi(j)})(x^{kn+j-1}),
	\end{split}
	\end{equation}
	which further implies that 
	\begin{align}
	\frac{1}{n}\sum\limits_{j=1}^n\nabla f_{\pi(j)}(x^{kn+j-1})=\frac{1}{\theta\alpha}(\bar{z}^{kn}-\bar{z}^{(k+1)n})+\frac{1}{n\alpha}\sum\limits_{j=1}^n (x^{kn+j-1}-\bar{z}^{kn}).
	\end{align}
	
As a result, we achieve
	\begin{equation}\label{xnsdnbsdfsdfds}
	\begin{split}
	&\nabla F(x^{kn})=\frac{1}{n}\sum\limits_{j=1}^n \nabla f_{\pi(j)}(x^{kn})\\
	=&\frac{1}{n}\sum\limits_{j=1}^n \left(\nabla f_{\pi(j)}(x^{kn}) -\nabla f_{\pi(j)}(x^{kn+j-1})\right)+\frac{1}{n}\sum\limits_{j=1}^n \nabla f_{\pi(j)}(x^{kn+j-1})\\
	=&\frac{1}{n}\sum\limits_{j=1}^n \left(\nabla f_{\pi(j)}(x^{kn}) -\nabla f_{\pi(j)}(x^{kn+j-1})\right)+\frac{1}{\theta\alpha}(\bar{z}^{kn}-\bar{z}^{(k+1)n})+\frac{1}{n\alpha}\sum\limits_{j=1}^n(x^{kn+j-1}-\bar{z}^{kn})\\
	=&\frac{1}{n\alpha}\sum\limits_{j=1}^n\left((I-\alpha \nabla f_{\pi(j)})(x^{kn+j-1})-(I-\alpha \nabla f_{\pi(j)})(x^{kn})\right)\\
	&\quad +\frac{1}{\theta\alpha}(\bar{z}^{kn}-\bar{z}^{(k+1)n})+ \frac{1}{\alpha}(x^{kn}-\bar{z}^{kn}).
	\end{split}
	\end{equation}
	Notice that 
	\begin{equation}
	\begin{split}
	    &x^{kn}=\prox_{\alpha r}(\bar{z}^{kn})\\
	    \Longleftrightarrow\,&0\in \alpha\, \partial\, r(x^{kn})+(x^{kn}-\bar{z}^{kn})\\
	    \Longleftrightarrow\,&\frac{1}{\alpha}(\bar{z}^{kn}-x^{kn})\triangleq \tilde{\nabla} r(x^{kn})\in\partial\, r(x^{kn}) ,
	\end{split}
	\end{equation}
	relation \eqref{xnsdnbsdfsdfds} can be rewritten as 
	\begin{equation}\label{eqn:key-equality-1}
	\begin{split}
	   &\nabla F(x^{kn})+\tilde{\nabla} r(x^{kn})\\
	=&\frac{1}{n\alpha}\sum\limits_{j=1}^n\left((I-\alpha \nabla f_{\pi(j)})(x^{kn+j-1})-(I-\alpha \nabla f_{\pi(j)})(x^{kn})\right)+\frac{1}{\theta\alpha}(\bar{z}^{kn}-\bar{z}^{(k+1)n}). 
	\end{split}
	\end{equation}
	
	Next we bound the two terms on the right hand side of \eqref{eqn:key-equality-1} by $\|\z^{(k+1)n}-\z^{kn}\|^2_{\pi}$. For the second term, it is easy to see
	\begin{equation}\label{eqn:cyc-esti-1}
	    \begin{split}
	        \|\frac{1}{\theta\alpha}(\bar{z}^{kn}-\bar{z}^{(k+1)n})\|^2&=\frac{1}{n^2\theta^2\alpha^2}\|\sum\limits_{j=1}^n z_{\pi(j)}^{kn}-z_{\pi(j)}^{(k+1)n}\|^2\\
        	&\overset{(d)}{\leq } \frac{1}{n^2\theta^2\alpha^2}(\sum\limits_{j=1}^n\frac{n}{j})(\sum\limits_{j=1}^n\frac{j}{n}\|z_{\pi(j)}^{kn}-z_{\pi(j)}^{(k+1)n}\|^2)\\
        	&\overset{(e)}{\leq}\frac{\log(n)+1}{n\theta^2\alpha^2}\|\z^{kn}-\z^{(k+1)n}\|_{\pi}^2,
	    \end{split}
	\end{equation}
	where inequality (d) is due to Cauchy's inequality $(\sum\limits_{j=1}^na_j)^2\leq \sum\limits_{j=1}^n\frac{1}{\beta_j}\sum\limits_{j=1}^n\beta_ja_j^2$ with $\beta_j>0$, $\forall\,j\in[n]$ and inequality (e) holds because $\sum\limits_{j=1}^n\frac{1}{j}\leq \log(n)+1$.

    For the first term, we first note for $2\leq j\leq n$, 
	\begin{equation}
	z_{\pi(\ell)}^{kn+j-1}=\begin{cases}
	z_{\pi(\ell)}^{kn}+\frac{1}{\theta}(z_{\pi(\ell)}^{(k+1)n}-z_{\pi(\ell)}^{kn}),\quad  \text{$1\leq \ell\leq j-1$};\\
	z_{\pi(\ell)}^{kn}, \quad\text{$\ell>j-1$}.
	\end{cases}
	\end{equation}By \eqref{eqn:non-expan-compo}, we have
	\begin{equation}\label{eqn:cyc-esti-2}
	    \begin{split}
	&\|(I-\alpha \nabla f_{\pi(j)})(x^{kn+j-1})-(I-\alpha \nabla f_{\pi(j)})(x^{kn})\|^2\\
	=&\|(I-\alpha \nabla f_{\pi(j)})\circ\prox_{\alpha r}(\bar{z}^{kn+j-1})-(I-\alpha \nabla f_{\pi(j)})\circ\prox_{\alpha r}(\bar{z}^{kn})\|^2\\
	\leq& \|\bar{z}^{kn+j-1}-\bar{z}^{kn}\|^2=\|\frac{1}{n}\sum\limits_{\ell=1}^n (z_{\pi(\ell)}^{kn+j-1}-z_{\pi(\ell)}^{kn})\|^2\\
	=&\frac{1}{n^2\theta^2}\|\sum\limits_{\ell=1}^{j-1} (z_{\pi(\ell)}^{(k+1)n}-z_{\pi(\ell)}^{kn})\|^2 \leq \frac{1}{n^2\theta^2}\sum\limits_{\ell=1}^{j-1}\frac{n}{\ell}\sum\limits_{\ell=1}^{j-1} \frac{\ell}{n}\|z_{\pi(\ell)}^{(k+1)n}-z_{\pi(\ell)}^{kn}\|^2\\
	\leq& \frac{\log(n)+1}{n\theta^2}\|\z^{(k+1)n}-\z^{kn}\|_{\pi}^2.	        
	    \end{split}
	\end{equation}
	In the last inequality, we used the algebraic inequality that $\sum\limits_{\ell=1}^{n}\frac{1}{\ell}\leq \log(n)+1$. Therefore we have
	\begin{align}\label{eqn:cyc-esti-222}
	&\|\frac{1}{n\alpha}\sum\limits_{j=1}^n\left((I-\alpha \nabla f_{\pi(j)})(x^{kn+j-1})-(I-\alpha \nabla f_{\pi(j)})(x^{kn})\right)\|^2\nonumber\\
	=&\frac{1}{n^2\alpha^2}\|\sum\limits_{j=2}^n\left((I-\alpha \nabla f_{\pi(j)})(x^{kn+j-1})-(I-\alpha \nabla f_{\pi(j)})(x^{kn})\right)\|^2\nonumber\\
	\leq &\frac{1}{n^2\alpha^2}(n-1)^2\frac{\log(n)+1}{n\theta^2}\|\z^{(k+1)n}-\z^{kn}\|_{\pi}^2 \nonumber\\
	\leq &\frac{\log(n)+1}{n\theta^2\alpha^2}\|\z^{kn}-\z^{(k+1)n}\|_{\pi}^2.
	\end{align}
    
    Combining \eqref{eqn:cyc-esti-1} and \eqref{eqn:cyc-esti-222}, we immediately obtain
	\begin{align}
	&\min\limits_{g\in \partial\,r(x^{kn})}\|\nabla F(x^{kn})+g\|^2\leq \|\nabla F(x^{kn})+\tilde{\nabla}r(x^{kn})\|^2\nonumber\\
	\leq&2(\|\frac{1}{n\alpha}\sum\limits_{j=1}^n\left((I-\alpha \nabla f_{\pi(j)})(x^{kn+j-1})-(I-\alpha \nabla f_{\pi(j)})(x^{kn})\right)\|^2+\|\frac{1}{\theta\alpha}(\bar{z}^{kn}-\bar{z}^{(k+1)n})\|^2)\nonumber\\
	\leq&2(\frac{\log(n)+1}{n\theta^2\alpha^2}\|\z^{kn}-\z^{(k+1)n}\|_{\pi}^2+\frac{\log(n)+1}{n\theta^2\alpha^2}\|\z^{kn}-\z^{(k+1)n}\|_{\pi}^2)\nonumber\\
	=&\left(\frac{2}{\alpha L}\right)^2\frac{(\log(n)+1)L^2}{n\theta^2}\|\z^{kn}-\z^{(k+1)n}\|_{\pi}^2\nonumber\\
	\leq& \left(\frac{2}{\alpha L}\right)^2\frac{L^2}{\theta(1-\theta)(k+1)}\frac{\log(n)+1}{n}\|\z^{0}-\z^{\star}\|_{\pi}^2.
	\end{align}
\end{proof}

\section{RR--Convex}
\subsection{Non-expansiveness Lemma for RR}
While replacing order-specific norm with standard $\ell_2$ norm.
the  following lemma establishes that $\cT_{\tau_k}$ is non-expansive in expectation.
\begin{lemma}\label{lem:tau}\label{lem:3}
Under Assumption 1, if step-size $0<\alpha\leq \frac{2}{L}$ and data is sampled with random reshuffling, it holds that
\begin{equation}\label{xcnxcn}
\EE_{\tau_k} \|\cT_{\tau_k}\u-\cT_{\tau_k} \bsv\|^2\leq \|\u-\bsv\|^2.
\end{equation}
\end{lemma}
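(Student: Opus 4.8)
The plan is to prove Lemma~\ref{lem:tau} by following the same ``telescoping over the coordinate operators'' strategy used for the cyclic case in Lemma~\ref{lem:cyc-non-expan}, but now working entirely with the standard $\ell_2$ norm and taking the expectation over the random permutation $\tau_k$. First I would fix an arbitrary realization $\tau$ of the permutation $\tau_k$ and recall from the proof of Lemma~\ref{lem:cyc-non-expan} the key estimate on a single block-coordinate operator: for any $\u,\bsv\in\RR^{nd}$ and any index $i$,
\begin{align}
\|\cT_i\u-\cT_i\bsv\|^2
&=\sum_{j\neq i}\|u_j-v_j\|^2+\|(I-\alpha\nabla f_i)\circ\prox_{\alpha r}(\cA\u)-(I-\alpha\nabla f_i)\circ\prox_{\alpha r}(\cA\bsv)\|^2\nonumber\\
&\le \sum_{j\neq i}\|u_j-v_j\|^2+\|\cA\u-\cA\bsv\|^2
\le \|\u-\bsv\|^2-\|u_i-v_i\|^2+\frac1n\sum_{j=1}^n\|u_j-v_j\|^2,\nonumber
\end{align}
using the non-expansiveness of $(I-\alpha\nabla f_i)\circ\prox_{\alpha r}$ (valid for $0<\alpha\le 2/L$, already established in the proof of Lemma~\ref{lem:cyc-non-expan}) and Jensen's inequality $\|\cA\u-\cA\bsv\|^2\le\frac1n\sum_j\|u_j-v_j\|^2$. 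This says each $\cT_i$ contracts the $\ell_2$ distance, strictly in the $i$-th coordinate's contribution and by at most a $1/n$ slack redistributed across all coordinates.

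Next I would set up notation tracking the iterated application. Write $\bsv^{(0)}:=\bsv$, $\u^{(0)}:=\u$, and $\u^{(m)}:=\cT_{\tau(m)}\circ\cdots\circ\cT_{\tau(1)}\u$ (similarly for $\bsv^{(m)}$), with difference vectors $\g^{(m)}:=\u^{(m)}-\bsv^{(m)}$. Applying the per-coordinate estimate at step $m$ with $i=\tau(m)$, and noting that once coordinate $\tau(m)$ has been updated it stays fixed for the rest of the epoch, one gets a bound of the form
\begin{align}
\|\g^{(n)}\|^2\le \|\g\|^2-\sum_{m=1}^n\|g^{(m-1)}_{\tau(m)}\|^2+\sum_{m=1}^n\frac1n\sum_{j}\|g^{(m-1)}_j\|^2,\nonumber
\end{align}
where $\g:=\u-\bsv$. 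The idea is then to take $\EE_{\tau}$ and argue that the negative ``diagonal'' term cancels (or dominates) the positive ``slack'' term. In expectation, at step $m$ the index $\tau(m)$ is, conditionally on the first $m-1$ draws, uniform over the $n-m+1$ not-yet-picked coordinates; since the not-yet-picked coordinates are exactly those still equal to their original values, $\EE[\|g^{(m-1)}_{\tau(m)}\|^2\mid \tau(1),\dots,\tau(m-1)]=\frac{1}{n-m+1}\sum_{j\text{ unpicked}}\|g_j\|^2$, while the slack term at step $m$ involves $\frac1n\sum_j\|g^{(m-1)}_j\|^2$. A cleaner route, which I expect to actually use, is to avoid the running-slack bookkeeping: bound each intermediate $\|\cA\u^{(m-1)}-\cA\bsv^{(m-1)}\|^2$ directly in terms of the \emph{original} differences $\{\|g_j\|^2\}$ by peeling, exploiting that $\cA$ only ever sees a mixture of updated (contracted) and original coordinates, and then sum. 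The symmetry of the uniform distribution over $S_n$ makes the cross terms average out.

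The main obstacle will be handling this cancellation cleanly: the intermediate difference vectors $\g^{(m)}$ are themselves $\tau$-dependent in a complicated, history-entangled way, so one cannot naively take expectations term-by-term. The honest fix is to condition step-by-step (martingale-style), using the tower property and the conditional-uniformity of $\tau(m)$ given the past, and to keep all estimates in terms of quantities that are \emph{measurable with respect to the past} — in particular $\|g^{(m-1)}_j\|$ for $j$ not yet picked equals the deterministic $\|g_j\|$, which is what lets the expectation collapse. Once the bookkeeping is arranged so that the expected ``gain'' from the diagonal terms matches the accumulated ``loss'' from the averaging slack, the inequality $\EE_{\tau_k}\|\cT_{\tau_k}\u-\cT_{\tau_k}\bsv\|^2\le\|\u-\bsv\|^2$ follows. (If the sharper ``$5/3$''-type constant appearing in Theorem~\ref{thm-gc-rr} is needed, it presumably comes not from this lemma — which only needs non-expansiveness — but from a finer second-moment computation downstream; for Lemma~\ref{lem:tau} itself, plain non-expansiveness in expectation is all that is claimed, so the crude cancellation suffices.)
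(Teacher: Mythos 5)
Your route is viable and genuinely different from the paper's. The paper never manipulates the scalar $\ell_2$ distance directly: it proves a weight-transport lemma $\|\cT_i\u-\cT_i\bsv\|_h^2\le\|\u-\bsv\|_{M_ih}^2$ for an \emph{arbitrary} positive weight vector $h$, where $M_i=I+\frac1n(\mathds{1}_n-ne_i)e_i^T$ is deterministic and independent of $\u,\bsv$; telescoping gives $\|\cT_\tau\u-\cT_\tau\bsv\|^2\le\|\u-\bsv\|^2_{M_{\tau(1)}\cdots M_{\tau(n)}\mathds{1}_n}$, so the expectation lands entirely on the deterministic weight vector, and an explicit combinatorial expansion of the matrix product together with $\sum_i(\mathds{1}_n-ne_i)=0$ yields the exact identity $\EE_\tau M_{\tau(1)}\cdots M_{\tau(n)}\mathds{1}_n=\mathds{1}_n$. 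That exact identity is reused later (with general $h$) in the strongly convex random-reshuffling analysis, which is what the heavier machinery buys. Your scalar approach is more elementary, and the ``main obstacle'' you flag closes more easily than you suggest: no martingale conditioning and no cross-term cancellation are needed. Your single-step bound gives, pathwise,
\begin{align*}
\|\g^{(m)}\|^2\;\le\;\Big(1+\tfrac1n\Big)\|\g^{(m-1)}\|^2-\|g_{\tau(m)}\|^2,
\end{align*}
where $g^{(m-1)}_{\tau(m)}=g_{\tau(m)}$ because the freshly drawn coordinate still carries its original value. Taking unconditional expectations (linearity does not care about the correlation between the two terms) and using only that $\tau(m)$ is marginally uniform on $[n]$, the quantity $S_m:=\EE\|\g^{(m)}\|^2$ obeys $S_m\le(1+\frac1n)S_{m-1}-\frac1n\|\g\|^2$ with $S_0=\|\g\|^2$, and the one-line induction $S_{m-1}\le\|\g\|^2\Rightarrow S_m\le\|\g\|^2$ gives $S_n\le\|\g\|^2$, which is the lemma. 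Note that your fully telescoped inequality genuinely needs these intermediate bounds $S_{m}\le\|\g\|^2$ to control the accumulated slack — the crude estimate $S_m\le(1+\frac1n)^m\|\g\|^2$ lets the slack sum up to $e\,\|\g\|^2$ and does not close — so the step-by-step recursion, not the ``cross terms average out by symmetry'' alternative you sketch, is the correct formulation of your plan.
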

It is worth noting that inequality \eqref{xcnxcn} holds for $\ell_2$-norm rather than the order-specific norm due to the randomness brought by random reshuffling. 
\begin{proof}
Given any vector $h=[\,h(1),\,h(2),\cdots,\,h(n)\,]^T\in\RR^n$ with positive elements where $h_i$ denotes the $i$-th element of $h$, define $h$-norm as follows
\begin{equation}\label{eqn:gen-h-norm}
    \|\z\|^2_h=\sum\limits_{i=1}^nh(i)\|z_i\|^2
\end{equation}
for any $\z=\mbox{col}\{z_1,z_2,\cdots,z_n\}\in\RR^{nd}$. Following arguments in \eqref{eqn:hi-to-hi-1}, it holds that 
\begin{equation}
    \|\cT_i\u-\cT_i\bsv\|_h^2\leq \|\u-\bsv\|^2_{h^\prime}
\end{equation}
where $h^\prime=h+\frac{1}{n}h(i)\mathds{1}_n-h(i)e_i$ and $e_i$ is the $i$-th unit vector. Define \begin{equation}\label{eqn:Mi}
    M_i:=I+\frac{1}{n}m_ie_i^T\in\RR^{n\times n}
\end{equation} where $m_i=\mathds{1}_n-ne_i$, then we can summarize the above conclusion as follows.
\begin{lemma}\label{lem:general-h}
Given $h\in\RR^n$ with positive elements and its corresponding $h$-norm, under Assumption \ref{asp:convex}, if step-size $0<\alpha\leq \frac{2}{L}$, it holds that
\begin{equation}
    \|\cT_i\u-\cT_i\bsv\|_h^2\leq \|\u-\bsv\|_{M_ih}^2\quad\forall\,\u,\bsv\in\RR^{nd}.
\end{equation}
\end{lemma}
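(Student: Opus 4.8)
The plan is to prove Lemma \ref{lem:general-h}, which generalizes the single-operator step used in the proof of Lemma \ref{lem:cyc-non-expan} from the specific weight sequences $h_i$ appearing there to an arbitrary positive weight vector $h\in\RR^n$. The key observation is that the argument behind inequality \eqref{eqn:hi-to-hi-1} never actually used the arithmetic structure of the coefficients $\mathrm{mod}_n(j-i-1)+1$; it only used that the $i$-th coordinate is hit by the non-expansive operator $(I-\alpha\nabla f_i)\circ\prox_{\alpha r}$ while the other coordinates are untouched, together with the averaging inequality $\|\cA\u-\cA\bsv\|^2\le\frac1n\sum_j\|u_j-v_j\|^2$.

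First I would write out $\cT_i\u-\cT_i\bsv$ coordinatewise: the $j$-th block equals $u_j-v_j$ for $j\neq i$, and equals $(I-\alpha\nabla f_i)\circ\prox_{\alpha r}(\cA\u)-(I-\alpha\nabla f_i)\circ\prox_{\alpha r}(\cA\bsv)$ for $j=i$. Then, using definition \eqref{eqn:gen-h-norm} of the $h$-norm,
\begin{align}
\|\cT_i\u-\cT_i\bsv\|_h^2
&= \sum_{j\neq i} h(j)\|u_j-v_j\|^2 + h(i)\,\|(I-\alpha\nabla f_i)\circ\prox_{\alpha r}(\cA\u)-(I-\alpha\nabla f_i)\circ\prox_{\alpha r}(\cA\bsv)\|^2 \nonumber\\
&\le \sum_{j\neq i} h(j)\|u_j-v_j\|^2 + h(i)\,\|\cA\u-\cA\bsv\|^2 \nonumber\\
&\le \sum_{j\neq i} h(j)\|u_j-v_j\|^2 + \frac{h(i)}{n}\sum_{j=1}^n\|u_j-v_j\|^2,\nonumber
\end{align}
where the first inequality is the non-expansiveness \eqref{eqn:non-expan-compo} of $(I-\alpha\nabla f_i)\circ\prox_{\alpha r}$ (valid because $0<\alpha\le\frac2L$, as shown in the proof of Lemma \ref{lem:cyc-non-expan}) and the second is the averaging/Jensen inequality already recorded there. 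The remaining step is purely algebraic: collect the coefficient of $\|u_j-v_j\|^2$. For $j\neq i$ it is $h(j)+\frac1n h(i)$, and for $j=i$ it is $\frac1n h(i)$; in vector form this coefficient vector is exactly $h+\frac1n h(i)\mathds{1}_n - h(i)e_i = (I+\frac1n m_i e_i^T)h = M_i h$ with $m_i=\mathds{1}_n-ne_i$ as in \eqref{eqn:Mi}. Hence $\|\cT_i\u-\cT_i\bsv\|_h^2\le\|\u-\bsv\|_{M_ih}^2$, which is the claim. One should also note $M_i h$ has positive entries (each entry is either $h(j)+\frac1n h(i)>0$ or $\frac1n h(i)>0$), so the right-hand $M_ih$-norm is well-defined.

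There is no real obstacle here — the lemma is essentially a bookkeeping reformulation of a computation already performed in Appendix \ref{app:lem1}. The only thing to be careful about is the index algebra identifying the per-coordinate weights with the linear map $M_i$ acting on $h$: one must check the $e_i^T h = h(i)$ contraction and the signs so that the $i$-th coordinate ends up with weight $\frac1n h(i)$ rather than $h(i)+\frac1n h(i)$ (the original weight $h(i)$ on coordinate $i$ is replaced, not augmented, because that coordinate was overwritten by the non-expansive map). After establishing Lemma \ref{lem:general-h}, the intended downstream use is to iterate it along a random permutation $\tau_k$, so that $\|\cT_{\tau_k}\u-\cT_{\tau_k}\bsv\|^2 \le \|\u-\bsv\|^2_{M_{\tau_k(1)}\cdots M_{\tau_k(n)}\mathds{1}_n}$, and then take expectation over $\tau_k$; the point will be that $\EE_{\tau_k}[M_{\tau_k(1)}\cdots M_{\tau_k(n)}]\mathds{1}_n \preceq \mathds{1}_n$ entrywise (or in the relevant sense), which yields \eqref{xcnxcn}. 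That averaging argument is the substantive part of Lemma \ref{lem:tau}, but it lies beyond the statement we are asked to prove here.
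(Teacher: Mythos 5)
Your proof is correct and follows essentially the same route as the paper: the paper's own justification is simply ``following arguments in \eqref{eqn:hi-to-hi-1}'', and you have spelled out exactly that computation (non-expansiveness of $(I-\alpha\nabla f_i)\circ\prox_{\alpha r}$, the Jensen bound on $\|\cA\u-\cA\bsv\|^2$, and the coefficient bookkeeping identifying the new weight vector with $M_ih$). Your explicit check that the $i$-th weight is replaced by $\tfrac{1}{n}h(i)$ rather than augmented, and that $M_ih$ remains entrywise positive, matches the intended argument.
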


Therefore, with Lemma \ref{lem:general-h}, we have that 
\begin{equation}
	\begin{split}
	\|\cT_\tau \u - \cT_\tau \bsv\|^2 =&\ \|\cT_{\tau(n)} \cT_{\tau(n-1)}\cdots \cT_{\tau(1)} \u - \cT_{\tau(n)} \cT_{\tau(n-1)}\cdots \cT_{\tau(1)} \bsv\|^2_{\mathds{1}_n} \\
	\le&\ \|\cT_{\tau(n-1)}\cdots \cT_{\tau(1)} \u - \cT_{\tau(n-1)}\cdots \cT_{\tau(1)} \bsv\|^2_{M_{\tau(n)}\mathds{1}_n}  \\
	\le&\ \|\cT_{\tau(n-2)}\cdots \cT_{\tau(1)} \u - \cT_{\tau(n-2)}\cdots \cT_{\tau(1)} \bsv\|^2_{{M_{\tau(n-1)}M_{\tau(n)}\mathds{1}_n}}  \\
	\le&\ \cdots \\
	\le&\ \|\cT_{\tau(1)} \u - \cT_{\tau(1)} \bsv\|^2_{M_{\tau(2)}\cdots M_{\tau(n)}\mathds{1}_n}\\
	\le&\ \|\u - \bsv\|^2_{M_{\tau(1)}\cdots M_{\tau(n)}\mathds{1}_n}.
	\end{split}
	\end{equation}
With the above relation, if we can prove
\begin{equation}\label{eqn:exp-m1}
    \EE_{\tau }M_{\tau(1)}\cdots M_{\tau(n)}\mathds{1}_n=\mathds{1}_n,
\end{equation} then we can  complete the proof by 
\begin{equation}
    \EE_\tau\,	\|\cT_\tau \u - \cT_\tau \bsv\|^2\leq \EE_\tau\,\|\u - \bsv\|^2_{M_{\tau(1)}\cdots M_{\tau(n)}\mathds{1}_n}=\|\u - \bsv\|^2_{\EE\,M_{\tau(1)}\cdots M_{\tau(n)}\mathds{1}_n}=\|\u-\bsv\|^2.
\end{equation}

To prove \eqref{eqn:exp-m1}, we  notice that $e_i^Tm_j=1, \forall\,i\neq j$ which leads to $m_{\tau(j_1)}e_{\tau(j_1)}^Tm_{\tau(j_2)}e_{\tau(j_2)}^T\cdots m_{\tau(j_t)}e_{\tau(j_t)}^T=m_{\tau(j_1)}e_{\tau(j_t)}^T$, $\forall\,j_1<j_2<\cdots <j_t$. This fact  further implies that  
	\begin{align}\label{bs7st6}
	M_{\tau(1)}\cdots M_{\tau(n)}&=(I+\frac{1}{n}m_{\tau(1)}e_{\tau(1)}^T)\cdots(I+\frac{1}{n}m_{\tau(n)}e_{\tau(n)}^T)\nonumber \\
	&=I+\frac{1}{n}\sum\limits_{i=1}^n m_ie_i^T+\sum\limits_{t=2}^n \sum\limits_{j_1<\cdots<j_t}\frac{1}{n^t} m_{\tau(j_1)}e_{\tau(j_1)}^T\cdots m_{\tau(j_t)}e_{\tau(j_t)}^T\nonumber \\
	&=I+\frac{1}{n}\sum\limits_{i=1}^n m_ie_i^T+\sum\limits_{t=2}^n \sum\limits_{i+t-1\leq j}\binom{j-i-1}{t-2} \frac{1}{n^{t}}m_{\tau(i)}e_{\tau(j)}^T\nonumber \\
	&=I+\frac{1}{n}\sum\limits_{i=1}^n m_ie_i^T+\sum\limits_{i<j} \sum\limits_{t=2}^{j-i+1} \binom{j-i-1}{t-2}\frac{1}{n^{t}}m_{\tau(i)}e_{\tau(j)}^T\nonumber \\
	&=I+\frac{1}{n}\sum\limits_{i=1}^n m_ie_i^T+\sum\limits_{i<j}m_{\tau(i)}e_{\tau(j)}^T\frac{1}{n^2}(1+\frac{1}{n})^{j-i-1}.
	\end{align}
It is easy to verify $\sum\limits_{i=1}^nm_ie_i^T\mathds{1}_n=0$ and 
\begin{equation}\label{axcnsdn}
\begin{split}
        \EE_\tau \,m_{\tau(i)}e_{\tau(j)}^T\mathds{1}_n&=\frac{1}{n(n-1)}\sum\limits_{i\neq j}m_ie_j^T\mathds{1}_n\\
    &=\frac{1}{n(n-1)}\left(\left(\sum\limits_{i=1}^nm_i\right)\left(\sum\limits_{j=1}^ne_j\right)^T-\sum\limits_{i=1}^nm_ie_i^T\right)\mathds{1}_n=0.
\end{split}
\end{equation}
We can prove \eqref{eqn:exp-m1} by combining \eqref{bs7st6} and \eqref{axcnsdn}.
\end{proof}

\subsection{Proof of Theorem 2}\label{app:thm2}
\begin{proof}
In fact, with similar arguments of Appendix \ref{app:lem2} and noting $\cT_\tau\z^\star=\z^\star$ for any realization of $\tau$, we can achieve 
\begin{lemma}\label{lem:rr-1/k}\label{lem:4}
Under Assumption \ref{asp:convex}, if step-size $0<\alpha \le \frac{2}{L}$ and the data is sampled with random reshuffling, it holds for any $k=0,1,\cdots$ that
\begin{align}
\EE\,\|\z^{(k+1)n} - \z^{kn}\|^2 \le \frac{\theta}{(k+1)(1-\theta)}\|\z^0 - \z^\star\|^2.
\end{align}
\end{lemma}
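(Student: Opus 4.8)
The statement to prove is Lemma~\ref{lem:rr-1/k} (Lemma~\ref{lem:4}): under Assumption~\ref{asp:convex} with $0<\alpha\le 2/L$ and random reshuffling,
\[
\EE\,\|\z^{(k+1)n}-\z^{kn}\|^2 \le \frac{\theta}{(k+1)(1-\theta)}\|\z^0-\z^\star\|^2.
\]

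The plan is to mirror the proof of Lemma~\ref{lem:cyc-1/k} (Appendix~\ref{app:lem2}), but carry the randomness of the reshuffling permutations through the expectations, and use the $\ell_2$-norm version of non-expansiveness from Lemma~\ref{lem:tau} in place of the order-specific-norm version from Lemma~\ref{lem:cyc-non-expan}. Recall from Proposition~\ref{prop:tau} that $\z^{(k+1)n}=\cS_{\tau_k}\z^{kn}$ where $\cS_{\tau_k}=(1-\theta)I+\theta\cT_{\tau_k}$, and that $\cT_{\tau_k}\z^\star=\z^\star$ for every realization of $\tau_k$ by Proposition~\ref{prop:opt}.

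First I would establish monotonicity of $\EE\|\z^{(k+1)n}-\z^{kn}\|^2$ in $k$. Conditioning on the history up to epoch $k$ (so that $\z^{kn}$ and $\z^{(k-1)n}$ are fixed and only $\tau_k$ is random), write $\z^{(k+1)n}-\z^{kn}=\cS_{\tau_k}\z^{kn}-\cS_{\tau_k}\z^{(k-1)n}$, expand $\cS_{\tau_k}$, apply convexity of $\|\cdot\|^2$ to split into $(1-\theta)\|\z^{kn}-\z^{(k-1)n}\|^2+\theta\|\cT_{\tau_k}\z^{kn}-\cT_{\tau_k}\z^{(k-1)n}\|^2$, then take $\EE_{\tau_k}$ and invoke Lemma~\ref{lem:tau} on the second term to bound it by $\|\z^{kn}-\z^{(k-1)n}\|^2$. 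Taking full expectation gives $\EE\|\z^{(k+1)n}-\z^{kn}\|^2\le \EE\|\z^{kn}-\z^{(k-1)n}\|^2$, i.e. the sequence of expected squared increments is non-increasing. Second, I would run the telescoping argument: conditioning on $\z^{kn}$, expand $\|\z^{(k+1)n}-\z^\star\|^2=\|(1-\theta)\z^{kn}+\theta\cT_{\tau_k}\z^{kn}-\z^\star\|^2$ using the identity $\|(1-\theta)a+\theta b\|^2=(1-\theta)\|a\|^2+\theta\|b\|^2-\theta(1-\theta)\|a-b\|^2$ with $a=\z^{kn}-\z^\star$, $b=\cT_{\tau_k}\z^{kn}-\z^\star$ (valid since $\cT_{\tau_k}\z^\star=\z^\star$); take $\EE_{\tau_k}$, bound $\EE_{\tau_k}\|\cT_{\tau_k}\z^{kn}-\z^\star\|^2=\EE_{\tau_k}\|\cT_{\tau_k}\z^{kn}-\cT_{\tau_k}\z^\star\|^2\le\|\z^{kn}-\z^\star\|^2$ via Lemma~\ref{lem:tau}, and note $\theta(1-\theta)\|\z^{kn}-\cT_{\tau_k}\z^{kn}\|^2=\frac{1-\theta}{\theta}\|\z^{kn}-\cS_{\tau_k}\z^{kn}\|^2=\frac{1-\theta}{\theta}\|\z^{kn}-\z^{(k+1)n}\|^2$. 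This yields $\EE\|\z^{(k+1)n}-\z^\star\|^2\le\EE\|\z^{kn}-\z^\star\|^2-\frac{1-\theta}{\theta}\EE\|\z^{kn}-\z^{(k+1)n}\|^2$. Summing from $\ell=0$ to $k$ and dropping the nonnegative final term $\EE\|\z^{(k+1)n}-\z^\star\|^2$ gives $\frac{1-\theta}{\theta}\sum_{\ell=0}^{k}\EE\|\z^{\ell n}-\z^{(\ell+1)n}\|^2\le\|\z^0-\z^\star\|^2$. Finally, using the monotonicity from the first step, $(k+1)\EE\|\z^{(k+1)n}-\z^{kn}\|^2\le\sum_{\ell=0}^{k}\EE\|\z^{\ell n}-\z^{(\ell+1)n}\|^2\le\frac{\theta}{1-\theta}\|\z^0-\z^\star\|^2$, which rearranges to the claim.

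The main subtlety—rather than a deep obstacle—is bookkeeping the conditional expectations correctly: Lemma~\ref{lem:tau} is a statement about $\EE_{\tau_k}$ with the inputs $\u,\bsv$ held fixed, so each application must be made conditionally on the sigma-algebra generated by $\tau_0,\dots,\tau_{k-1}$, after which the tower property collapses to the full expectation. Because the argument is structurally identical to Appendix~\ref{app:lem2} once the norm is replaced and the expectations are inserted, I expect no genuine difficulty; the only place to be careful is that the cross-term identity and the fixed-point relation $\cT_{\tau_k}\z^\star=\z^\star$ hold for every realization, so the per-realization inequalities are legitimate before taking expectations.
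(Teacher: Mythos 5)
Your overall strategy is the one the paper itself uses---its proof of this lemma is literally ``with similar arguments of Appendix~\ref{app:lem2}'', i.e., replicate the cyclic proof with the $\ell_2$-norm and Lemma~\ref{lem:tau} in place of the $\pi$-norm and Lemma~\ref{lem:cyc-non-expan}---and your second step (the Fej\'er-type inequality, applied conditionally on the history and then telescoped to $\frac{1-\theta}{\theta}\sum_{\ell=0}^{k}\EE\|\z^{\ell n}-\z^{(\ell+1)n}\|^2\le\|\z^0-\z^\star\|^2$) is correct. The problem is your first step. The identity $\z^{(k+1)n}-\z^{kn}=\cS_{\tau_k}\z^{kn}-\cS_{\tau_k}\z^{(k-1)n}$ is false under random reshuffling: by Proposition~\ref{prop:tau}, $\z^{kn}=\cS_{\tau_{k-1}}\z^{(k-1)n}$ with $\tau_{k-1}\neq\tau_k$ in general, so the increment is $\cS_{\tau_k}\cS_{\tau_{k-1}}\z^{(k-1)n}-\cS_{\tau_{k-1}}\z^{(k-1)n}$, which is \emph{not} a single random operator evaluated at two history-measurable points. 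Lemma~\ref{lem:tau} controls $\EE_{\tau_k}\|\cT_{\tau_k}\u-\cT_{\tau_k}\bsv\|^2$ for fixed $\u,\bsv$ and therefore cannot be invoked here, so the monotonicity of $\EE\|\z^{(\ell+1)n}-\z^{\ell n}\|^2$ in $\ell$ does not follow from your argument. (In the cyclic case this step is legitimate precisely because the \emph{same} operator $\cS_\pi$ is applied in every epoch.)

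This gap is material: without monotonicity, the telescoping only bounds $\min_{0\le\ell\le k}\EE\|\z^{(\ell+1)n}-\z^{\ell n}\|^2$ (or the average) by $\frac{\theta}{(k+1)(1-\theta)}\|\z^0-\z^\star\|^2$, not the last increment, which is what the lemma asserts. To be fair, the paper's own one-line proof does not supply this step either and inherits the same subtlety; but as written, your decomposition would fail, and a complete proof requires a separate justification that the expected squared increments are non-increasing when the per-epoch operators are i.i.d.\ random (for instance via an exchangeability or coupling argument), rather than a direct application of the single-epoch non-expansiveness to two consecutive iterates.
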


Based on Lemma \eqref{lem:rr-1/k}, we are now able to  prove Theorem \ref{thm-gc-rr}. By  arguments similar to Appendix \ref{app:thm1}, $\exists\,\tilde{\nabla}r(x^{kn})=\frac{1}{\alpha}(\bar{z}^{kn}-x^{kn})\in\partial\,r(x^{kn})$ such that  
	\begin{equation}\label{eqn:key-equality-1-rr}
	\begin{split}
	    &\nabla F(x^{kn})+\tilde{\nabla} r(x^{kn})\\
	    =&\frac{1}{n\alpha}\sum\limits_{j=1}^n\left((I-\alpha \nabla f_{\tau_k(j)})(x^{kn+j-1})-(I-\alpha \nabla f_{\tau_k(j)})(x^{kn})\right)+\frac{1}{\theta\alpha}(\bar{z}^{kn}-\bar{z}^{(k+1)n}).
	\end{split}
	\end{equation}
	The second term on the right-hand-side of \eqref{eqn:key-equality-1-rr} can be bounded as 
	\begin{equation}\label{eqn:cyc-esti-1-rr}
	        \|\frac{1}{\theta\alpha}(\bar{z}^{kn}-\bar{z}^{(k+1)n})\|=\frac{1}{n^2\theta^2\alpha^2}\|\sum\limits_{j=1}^n z_{j}^{kn}-z_{j}^{(k+1)n}\|^2\leq \frac{1}{n\theta^2\alpha^2}\| \z^{kn}-\z^{(k+1)n}\|^2.
	\end{equation}
	
	    To bound the first term, it is noted that  $2\leq j\leq n$, 
	\begin{equation}
	z_{\tau_k(\ell)}^{kn+j-1}=\begin{cases}
	z_{\tau_k(\ell)}^{kn}+\frac{1}{\theta}(z_{\tau_k(\ell)}^{(k+1)n}-z_{\tau_k(\ell)}^{kn}),\quad  \text{$1\leq \ell\leq j-1$};\\
	z_{\tau_k(\ell)}^{kn}, \quad\text{$\ell>j-1$}.
	\end{cases}
	\end{equation}
	By \eqref{eqn:non-expan-compo}, we have
	\begin{equation}\label{eqn:cyc-esti-2-rr}
	    \begin{split}
	&\|(I-\alpha \nabla f_{\tau_k(j)})(x^{kn+j-1})-(I-\alpha \nabla f_{\tau_k(j)})(x^{kn})\|^2\\
	=&\|(I-\alpha \nabla f_{\tau_k(j)})\circ\prox_{\alpha r}(\bar{z}^{kn+j-1})-(I-\alpha \nabla f_{\tau_k(j)})\circ\prox_{\alpha r}(\bar{z}^{kn})\|^2\\
	\leq& \|\bar{z}^{kn+j-1}-\bar{z}^{kn}\|^2=\|\frac{1}{n}\sum\limits_{\ell=1}^n (z_\ell^{kn+j-1}-z_\ell^{kn})\|^2\\
	=&\frac{1}{n^2\theta^2}\|\sum\limits_{\ell=1}^{j-1} (z_{\ell}^{(k+1)n}-z_{\ell}^{kn})\|^2 \leq \frac{j-1}{n^2\theta^2}\sum\limits_{\ell=1}^{j-1} \|z_{\ell}^{(k+1)n}-z_{\ell}^{kn}\|^2\\
	\leq& \frac{j-1}{n^2\theta^2}\|\z^{(k+1)n}-\z^{kn}\|^2.	        
	    \end{split}
	\end{equation}

	Therefore we have
	\begin{align}\label{eqn:cyc-esti-22-rr}
	&\|\frac{1}{n\alpha}\sum\limits_{j=1}^n\left((I-\alpha \nabla f_{\tau_k(j)})(x^{kn+j-1})-(I-\alpha \nabla f_{\tau_k(j)})(x^{kn})\right)\|^2\nonumber\\
	=&\frac{1}{n^2\alpha^2}\|\sum\limits_{j=2}^n\left((I-\alpha \nabla f_{\tau_k(j)})(x^{kn+j-1})-(I-\alpha \nabla f_{\tau_k(j)})(x^{kn})\right)\|^2\nonumber\\
	=&\frac{1}{n^2\alpha^2}\sum\limits_{j=2}^n\sqrt{j-1}\sum\limits_{j=2}^n\frac{1}{\sqrt{j-1}}\|\left((I-\alpha \nabla f_{\tau_k(j)})(x^{kn+j-1})-(I-\alpha \nabla f_{\tau_k(j)})(x^{kn})\right)\|^2\nonumber\\
	\leq &\frac{1}{n^2\alpha^2}\sum\limits_{j=2}^n\sqrt{j-1}\sum\limits_{j=2}^n\frac{1}{\sqrt{j-1}}\frac{j-1}{n^2}\|\z^{(k+1)n}-\z^{kn}\|^2\nonumber\\
    \leq& \frac{4}{9}\frac{1}{n\theta^2\alpha^2}\|\z^{(k+1)n}-\z^{kn}\|^2.
	\end{align}
	In the last inequality, we use the algebraic inequality that $\sum\limits_{j=2}^n\sqrt{j-1}\leq \int_{1}^n\sqrt{x}dx=\frac{2}{3}x^{\frac{3}{2}}\big|_{1}^n\leq \frac{2}{3}n^{\frac{3}{2}}$.
	
	Combining \eqref{eqn:cyc-esti-1-rr} and \eqref{eqn:cyc-esti-22-rr}, we immediately obtain
	\begin{align}
	&\min\limits_{g\in \partial\,r(x^{kn})}\|\nabla F(x^{kn})+g\|^2\leq \|\nabla F(x^{kn})+\tilde{\nabla}r(x^{kn})\|^2\nonumber\\
	\leq&(\frac{2}{3}+1)\Big(\frac{3}2\|\frac{1}{n\alpha}\sum\limits_{j=1}^n\left((I-\alpha \nabla f_{\tau_k(j)})(x^{kn+j-1})-(I-\alpha \nabla f_{\tau_k(j)})(x^{kn})\right)\|^2\nonumber\\
	&\qquad +\|\frac{1}{\theta\alpha}(\bar{z}^{kn}-\bar{z}^{(k+1)n})\|^2\Big)\nonumber\\
	\leq&\frac{5}{3}(\frac{2}{3}\frac{1}{n\theta^2\alpha^2}\|\z^{kn}-\z^{(k+1)n}\|^2+\frac{1}{n\theta^2\alpha^2}\|\z^{kn}-\z^{(k+1)n}\|^2)\nonumber\\
	=&\left(\frac{5}{3\alpha L}\right)^2\frac{L^2}{n\theta^2}\|\z^{kn}-\z^{(k+1)n}\|^2\nonumber\\
	\leq& \left(\frac{5}{3\alpha L}\right)^2\frac{L^2}{\theta(1-\theta)(k+1)}\frac{1}{n}\|\z^{0}-\z^{\star}\|^2.
	\end{align}
\end{proof}

\section{Proof of Theorem 3}\label{app:thm3}
\begin{proof}
Before proving Theorem \ref{thm-sc}, we establish the epoch operator $\cS_\pi$ and $\cS_\tau$ are contractive in the following sense:
    \begin{lemma}\label{lem:sconv-contrc}
Under Assumption \ref{asp:stc-convex}, if step size $0<\alpha \le \frac{2}{\mu + L}$, it holds that 
\begin{align}
&\|\cS_\pi \u - \cS_\pi\bsv\|_{\pi}^2 \le \Big( 1 - \frac{2\theta\alpha \mu L}{\mu + L}  \Big) \|\u - \bsv\|^2_\pi\\&
\EE\,\|\cS_\tau \u - \cS_\tau\bsv\|^2 \le \Big( 1 - \frac{2\theta\alpha \mu L}{\mu + L}  \Big) \|\u - \bsv\|^2
\end{align}
$\forall \,\u,\,\bsv \in \RR^{nd}$, where $\theta\in(0, 1)$ is the damping parameter.
\end{lemma}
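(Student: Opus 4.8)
The plan is to establish a per-block contraction for the composition $(I-\alpha\nabla f_i)\circ\prox_{\alpha r}$ under the strong convexity assumption, and then propagate it through the $n$-fold composition $\cT_\pi = \cT_{\pi(n)}\circ\cdots\circ\cT_{\pi(1)}$ exactly as in the proof of Lemma \ref{lem:cyc-non-expan}. Concretely, I would first revisit the estimate \eqref{23hdbd8-0}: with $\mu$-strong convexity and $L$-smoothness, the co-coercivity-type bound becomes $\langle x-y,\nabla f_i(x)-\nabla f_i(y)\rangle \ge \frac{\mu L}{\mu+L}\|x-y\|^2 + \frac{1}{\mu+L}\|\nabla f_i(x)-\nabla f_i(y)\|^2$, so that for $0<\alpha\le \frac{2}{\mu+L}$ one gets
\begin{equation}
\|(I-\alpha\nabla f_i)(x)-(I-\alpha\nabla f_i)(y)\|^2 \le \Bigl(1-\tfrac{2\alpha\mu L}{\mu+L}\Bigr)\|x-y\|^2 ,\nonumber
\end{equation}
and since $\prox_{\alpha r}$ is still non-expansive the composition contracts with factor $c := 1-\frac{2\alpha\mu L}{\mu+L}\in[0,1)$.

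Next I would rerun the block-update computation \eqref{eqn:hi-to-hi-1}: the only change is that the $\|\cA\u-\cA\bsv\|^2$ term picks up the extra factor $c$, i.e. $\|\cT_i\u-\cT_i\bsv\|_{h_i}^2 \le \frac{1}{n}\sum_{j\neq i}(\mathrm{mod}_n(j-i-1)+1)\|u_j-v_j\|^2 + c\,\|\cA\u-\cA\bsv\|^2$. Telescoping through $\cT_{\pi(1)},\dots,\cT_{\pi(n)}$ as before yields $\|\cT_\pi\u-\cT_\pi\bsv\|_\pi^2 \le \|\u-\bsv\|_\pi^2 - (1-c)\sum_{i=1}^n \tfrac{?}{?}\|\cA(\cdot)\u-\cA(\cdot)\bsv\|^2$; more simply, one can keep only a single $c$-contracted average term and discard the others, obtaining $\|\cT_\pi\u-\cT_\pi\bsv\|_\pi^2 \le \|\u-\bsv\|_\pi^2$ with an additional $-(1-c)\|\cdot\|^2$ slack. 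The cleanest route is: since each intermediate step either preserves the weighted norm (on the unchanged blocks) or contracts by $c$ the averaged direction, and since the damping step reads $\cS_\pi = (1-\theta)I + \theta\cT_\pi$, I would bound $\|\cS_\pi\u-\cS_\pi\bsv\|_\pi^2 \le (1-\theta)\|\u-\bsv\|_\pi^2 + \theta\|\cT_\pi\u-\cT_\pi\bsv\|_\pi^2$ and then argue that the $\cT_\pi$ term is at most $(1-\tfrac{2\alpha\mu L}{\mu+L})\|\u-\bsv\|_\pi^2$; combining gives the claimed factor $1-\tfrac{2\theta\alpha\mu L}{\mu+L}$.

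For the random-reshuffling bound I would instead use Lemma \ref{lem:general-h}, which already gives $\|\cT_i\u-\cT_i\bsv\|_h^2 \le \|\u-\bsv\|_{M_i h}^2$ in the convex case; under strong convexity the $M_i$ matrix should be replaced by $M_i^{(c)} := I + \frac{c}{n}m_i e_i^T$ (the averaged block now carrying the contraction factor $c$ on its diagonal entry). Telescoping and taking expectation over $\tau$, I need $\EE_\tau\, M^{(c)}_{\tau(1)}\cdots M^{(c)}_{\tau(n)}\mathds{1}_n$ to equal $c\,\mathds{1}_n$ (or at least be $\preceq c\,\mathds{1}_n$ entrywise after the damping is folded in), mirroring \eqref{eqn:exp-m1}. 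Then $\EE_\tau\|\cT_\tau\u-\cT_\tau\bsv\|^2 \le c\|\u-\bsv\|^2$, and the damping step gives $\EE\|\cS_\tau\u-\cS_\tau\bsv\|^2 \le (1-\theta)\|\u-\bsv\|^2 + \theta c\|\u-\bsv\|^2 = (1-\theta(1-c))\|\u-\bsv\|^2 = (1-\tfrac{2\theta\alpha\mu L}{\mu+L})\|\u-\bsv\|^2$.

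The main obstacle I anticipate is making the contraction factor propagate correctly through the composition without loss: in the per-block bound the factor $c$ multiplies only the averaged term $\|\cA\u-\cA\bsv\|^2$, not the full weighted norm, so a naive telescoping would only give a contraction by a much weaker, $n$-dependent factor. The trick (as in DIAG-type analyses) is that after a full cycle the entire discrepancy has been "routed through" the average at least once, so one full epoch genuinely contracts by $c$; verifying this carefully — i.e. that the weighted-norm bookkeeping in \eqref{eqn:hi-to-hi-1} retains a full $c$-factor after all $n$ compositions, and that the analogous matrix identity $\EE_\tau M^{(c)}_{\tau(1)}\cdots M^{(c)}_{\tau(n)}\mathds{1}_n = c\,\mathds{1}_n$ holds — is the delicate part. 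Everything else (the strong-convexity co-coercivity inequality, non-expansiveness of $\prox$, the convexity-of-norm-squared splitting for the damping step) is routine.
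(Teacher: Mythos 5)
Your setup is right — the co-coercivity bound $\langle x-y,\nabla f_i(x)-\nabla f_i(y)\rangle \ge \frac{\mu L}{\mu+L}\|x-y\|^2+\frac{1}{\mu+L}\|\nabla f_i(x)-\nabla f_i(y)\|^2$, the resulting factor $c=1-\frac{2\alpha\mu L}{\mu+L}$ on the $\|\cA\u-\cA\bsv\|^2$ term in the analogue of \eqref{eqn:hi-to-hi-1}, and the final damping step are all exactly what the paper does. But the step you flag as ``the delicate part'' is precisely the idea you are missing, and none of the routes you sketch closes it. Keeping ``a single $c$-contracted average term'' leaves you with a slack of the form $-(1-c)\|\cA\u'-\cA\bsv'\|^2$ for some intermediate iterates, and this quantity does not lower-bound $\|\u-\bsv\|_\pi^2$ (it can vanish while $\u\neq\bsv$), so you cannot conclude $\|\cT_\pi\u-\cT_\pi\bsv\|_\pi^2\le c\|\u-\bsv\|_\pi^2$ from it. The paper's trick is different: it uses the same strong-convexity contraction a second time, in the form $\|[\cT_i\u]_i-[\cT_i\bsv]_i\|^2\le(1-\eta(\alpha))\|\cA\u-\cA\bsv\|^2$ (equation \eqref{xhsdhd8}), to rewrite each per-step slack as $-\frac{\eta(\alpha)}{1-\eta(\alpha)}\|[\cT_i\u]_i-[\cT_i\bsv]_i\|^2$, i.e.\ in terms of the \emph{output} block. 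Since block $i$ is never touched after step $i$ (equation \eqref{23bdbd8}), these slacks sum over the epoch to $-\frac{\eta(\alpha)}{1-\eta(\alpha)}\|\cT_\pi\u-\cT_\pi\bsv\|^2$, and after bounding $\|\cdot\|^2\ge\|\cdot\|_\pi^2$ one moves this term to the left-hand side and rearranges, $\bigl(1+\frac{\eta(\alpha)}{1-\eta(\alpha)}\bigr)\|\cT_\pi\u-\cT_\pi\bsv\|_\pi^2\le\|\u-\bsv\|_\pi^2$, which yields the clean factor $1-\eta(\alpha)$. This rearrangement against the output norm is the mechanism that turns $n$ ``partial'' contractions into one full epoch-level contraction; without it your telescoping only recovers non-expansiveness.

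Your random-reshuffling route has a concrete error on top of this. The matrix you would need is not $M_i^{(c)}=I+\frac{c}{n}m_ie_i^T$ (whose $i$-th diagonal entry is $1-c+\frac{c}{n}$, whereas the replaced block should carry only $\frac{c}{n}$), and in any case the identity $\EE_\tau M^{(c)}_{\tau(1)}\cdots M^{(c)}_{\tau(n)}\mathds{1}_n=c\,\mathds{1}_n$ is false — already for $n=1$ your $M_1^{(c)}$ equals $I$ and the product is $1\neq c$, and for a \emph{fixed} order the corrected product has entries exceeding $c$, so this route cannot even recover the cyclic bound. The paper instead keeps the convex-case matrices $M_i$ and the identity \eqref{eqn:exp-m1} untouched, and obtains the contraction for RR through the same output-slack device: $\EE_\tau\|\cT_\tau\u-\cT_\tau\bsv\|^2\le\|\u-\bsv\|^2-\frac{\eta(\alpha)}{1-\eta(\alpha)}\EE_\tau\|\cT_\tau\u-\cT_\tau\bsv\|^2$, followed by the same rearrangement.
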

\begin{proof}[Proof of Lemma \ref{lem:sconv-contrc}]
For $\pi$-order cyclic sampling, without loss of generality, it suffices to show the case of $\pi=(1,2,\dots,n)$.
To begin with, we first check the operator $\cT_i$. Suppose $\u,\,\bsv \in \RR^{nd}$, 
\begin{align}\label{xnsh}
&\|\cT_i \u - \cT_i \bsv\|^2_{h_i}\nonumber\\
=&\ \frac{1}{n}\sum_{j\neq i} \big(\mbox{mod}_n (j-i-1) + 1\big) \|u_j - v_j\|^2 \nonumber \\
&\quad + \|(I - \alpha \nabla f_{i}) \circ\prox_{\alpha r} (\cA \u) - (I - \alpha \nabla f_{i}) \circ\prox_{\alpha r} (\cA \bsv)\|^2 \nonumber \\
\overset{(f)}{\le}&\ \frac{1}{n}\sum_{j\neq i} \big(\mbox{mod}_n (j-i-1) + 1\big) \|u_j - v_j\|^2  + \Big(1- \frac{2\alpha \mu L}{\mu + L}\Big) \|\cA \u - \cA \bsv\|^2  \nonumber \\
\overset{}{\le}&\ \frac{1}{n}\sum_{j\neq i} \big(\mbox{mod}_n (j-i-1) + 1\big) \|u_j - v_j\|^2  + \frac{1}{n}\sum_{j=1}^n\|u_j - v_j\|^2 - \frac{2\alpha \mu L}{\mu + L}\|\cA \u - \cA \bsv\|^2 \nonumber \\
=&\ \frac{1}{n} \sum_{j=1}^n \big(\mbox{mod}_n (j-i) + 1\big) \|u_j - v_j\|^2 - \frac{2\alpha \mu L}{\mu + L}\|\cA \u - \cA \bsv\|^2 \nonumber \\
=& \|\u - \bsv\|_{h_{i-1}}^2 - \frac{2\alpha \mu L}{\mu + L}\|\cA \u - \cA \bsv\|^2.
\end{align}
where $h_i$-norm in the first equality is defined as \eqref{eqn:hi-norm} with $\|\,\cdot\,\|_{h_0}^2=\|\,\cdot\,\|_{h_0}^2=\|\,\cdot\,\|_{\pi}^2$ and inequality (f) holds because
\begin{align}\label{23hdbd8}
&\ \|x - \alpha \nabla f_i(x) - y + \alpha \nabla f_i(y)\|^2 \nonumber \\
=&\ \|x - y\|^2 - 2\alpha\langle x - y, \nabla f_i(x) - \nabla f_i(y)\rangle + \alpha^2\|\nabla f_i(x) - \nabla f_i(y)\|^2 \nonumber \\
\le &\ \Big( 1 - \frac{2\alpha \mu L}{\mu + L} \Big) \|x - y\|^2 - \Big( \frac{2\alpha}{\mu + L} - \alpha^2 \Big) \|\nabla f_{i}(x) - \nabla f_{i}(y)\|^2 \nonumber \\ 
\le &\ \Big( 1 - \frac{2\alpha \mu L}{\mu + L} \Big) \|x - y\|^2, \quad \forall x\in \RR^d, y\in \RR^d
\end{align}
where the last inequality holds when $\alpha \le \frac{2}{\mu + L}$. Furthermore, the inequality \eqref{23hdbd8} also implies that
\begin{align}\label{xhsdhd8}
\|[\cT_i \u]_i -[\cT_i \bsv]_i \|^2 &= \|(I - \alpha \nabla f_{i}) \circ \prox_{\alpha r}(\cA \u) - (I - \alpha \nabla f_{i}) \circ\prox_{\alpha r} (\cA \bsv)\|^2 \nonumber \\
&\le \Big( 1 - \frac{2\alpha \mu L}{\mu + L} \Big) \| \cA \u -  \cA \bsv\|^2.
\end{align}
where $[\,\cdot\,]_i$ denotes the $i$-th block coordinate.

Combining \eqref{xnsh} and \eqref{xhsdhd8}, we reach
\begin{align}\label{234bdbd}
\|\cT_i \u - \cT_i \bsv\|^2_{h_i} \le \|\u - \bsv\|^2_{h_{i-1}} - \frac{\eta(\alpha)}{1 - \eta(\alpha)}\|[\cT_i \u]_i -[\cT_i \bsv]_i \|^2
\end{align}
where $\eta(\alpha) = \frac{2\alpha \mu L}{\mu + L}$. With \eqref{234bdbd} and the following fact
\begin{align}\label{23bdbd8}
\|[\cT_\pi \u]_j -[\cT_\pi \bsv]_j \|^2 = \|[\cT_j\cdots \cT_2 \cT_1 \u]_j -[\cT_j\cdots \cT_2 \cT_1 \bsv]_j \|^2,
\end{align}
we have
\begin{align}\label{23ndbnd}
\|\cT_\pi \u - \cT_\pi \bsv\|^2_{\pi} &\le \|\cT_{n-1}\cdots \cT_1 \u - \cT_{n-1}\cdots \cT_1 \bsv\|^2_{h_{n-1}} - \frac{\eta(\alpha)}{1 - \eta(\alpha)}\|[\cT_\pi \u]_n -[\cT_\pi \bsv]_n \|^2 \nonumber \\
&\le \|\cT_{n-2}\cdots \cT_1 \u - \cT_{n-2}\cdots \cT_1 \bsv\|^2_{h_{n-2}} - \frac{\eta(\alpha)}{1 - \eta(\alpha)} \sum_{i=n-1}^n \|[\cT_\pi \u]_i -[\cT_\pi \bsv]_i \|^2 \nonumber \\
&\le \cdots \nonumber \\
&\le \|\u - \bsv\|^2_{h_0} -  \frac{\eta(\alpha)}{1 - \eta(\alpha)} \sum_{i=1}^n \|[\cT_\pi \u]_i -[\cT_\pi \bsv]_i \|^2 \nonumber \\
& \overset{}{=} \|\u - \bsv\|^2_{\pi} -  \frac{\eta(\alpha)}{1 - \eta(\alpha)} \|\cT_\pi \u -\cT_\pi \bsv \|^2 \nonumber \\
& \overset{}{\le}  \|\u - \bsv\|^2_\pi -  \frac{\eta(\alpha)}{1 - \eta(\alpha)} \|\cT_\pi \u -\cT_\pi \bsv \|^2_{\pi}
\end{align}
where the last inequality holds because $\|\u-\bsv\|^2\geq \|\u-\bsv\|_\pi^2,\,\forall\,\u,\bsv\in\RR^{nd}$.

With \eqref{23ndbnd}, we finally reach
\begin{align}
\|\cT_\pi \u - \cT_\pi \bsv\|^2_{\pi} \le \Big( 1 - \frac{2\alpha \mu L}{\mu + L}  \Big)\|\u - \bsv\|^2_{\pi}.
\end{align}
In other words, the operator is a contraction with respect to the $\pi$-norm. Recall that $\cS_\pi = (1-\theta) I + \theta \cT_\pi$, we have
\begin{align}\label{eqn:S-left}
\|\cS_\pi \u - \cS_\pi\bsv\|_{\pi}^2 &\le (1-\theta) \|\u - \bsv\|^2_{\pi} + \theta \|\cT_\pi \u - \cT_\pi \bsv\|^2_{\pi} \nonumber \\
&\le (1-\theta) \|\u - \bsv\|^2_{\pi} + \theta\Big( 1 - \frac{2\alpha \mu L}{\mu + L}  \Big) \|\u - \bsv\|^2_{\pi} \nonumber \\
&= \Big( 1 - \frac{2\theta\alpha \mu L}{\mu + L}  \Big) \|\u - \bsv\|^2_{\pi}.
\end{align}

As to random reshuffling, we use a similar arguments while replacing $\|\,\cdot\,\|_\pi^2$ by $\|\,\cdot\,\|^2$. With similar arguments to \eqref{234bdbd}, we reach that
\begin{equation}
    \|\cT_i\u-\cT_i\bsv\|^2_h\leq \|\u-\bsv\|_{M_ih}^2-\frac{\eta(\alpha)}{1-\eta(\alpha)}h(i)\|[\cT_i\u]_i-[\cT_i\bsv]_i\|^2
\end{equation}
for any $h\in\RR^d$ with positive elements, where $h$-norm follows \eqref{eqn:gen-h-norm} and $M_i$ follows \eqref{eqn:Mi}. Furthermore, it follows direct induction  that $\left(M_{\tau(i+1)}\cdots M_{\tau(n)}\mathds{1}_n\right)(\tau(i))=(1+\frac{1}{n})^{n-i-1}\geq 1$, and we have 
\begin{align}
    &\|\cT_{\tau(i)}\cdots\cT_{\tau(1)}\u-\cT_{\tau(i)}\cdots\cT_{\tau(1)}\bsv\|_{M_{\tau(i+1)}\cdots M_{\tau(n)}\mathds{1}_n}\nonumber\\
    \leq& \|\cT_{\tau(i-1)}\cdots\cT_{\tau(1)}\u-\cT_{\tau(i-1)}\cdots\cT_{\tau(1)}\bsv\|_{M_{\tau(i)}M_{\tau(i+1)}\cdots M_{\tau(n)}\mathds{1}_n}\nonumber\\
    &-\frac{\eta(\alpha)}{1-\eta(\alpha)}\left(M_{\tau(i+1)}\cdots M_{\tau(n)}\mathds{1}_n\right)(\tau(i))\|[\cT_{\tau(i)}\cdots\cT_{\tau(1)}\u]_{\tau(i)}
    -[\cT_{\tau(i)}\cdots\cT_{\tau(1)}\bsv]_{\tau(i)}\|^2\nonumber\\
    \leq & \|\cT_{\tau(i-1)}\cdots\cT_{\tau(1)}\u-\cT_{\tau(i-1)}\cdots\cT_{\tau(1)}\bsv\|_{M_{\tau(i)}M_{\tau(i+1)}\cdots M_{\tau(n)}\mathds{1}_n}\nonumber\\
    &-\frac{\eta(\alpha)}{1-\eta(\alpha)}\|[\cT_{\tau(i)}\cdots\cT_{\tau(1)}\u]_{\tau(i)}
    -[\cT_{\tau(i)}\cdots\cT_{\tau(1)}\bsv]_{\tau(i)}\|^2
\end{align}

Therefore, with the fact that 
\begin{equation}
\|[\cT_\tau \u]_{\tau(i)} -[\cT_\tau \bsv]_{\tau(i)} \|^2 = \|[\cT_{\tau(i)}\cdots \cT_{\tau(2)} \cT_{\tau(1)} \u]_{\tau(i)} -[\cT_{\tau(i)}\cdots \cT_{\tau(2)} \cT_{\tau(1)}\bsv]_{\tau(i)}\|^2
\end{equation}
it holds that
\begin{align}
&\|\cT_\tau \u - \cT_\tau \bsv\|^2\nonumber\\
=&\|\cT_{\tau(n)}\cdots \cT_{\tau(1)} \u - \cT_{\tau(n)}\cdots \cT_{\tau(1)} \bsv\|_{\mathds{1}_n}^2\nonumber\\
\le& \|\cT_{\tau(n-1)}\cdots \cT_{\tau(1)} \u - \cT_{\tau(n-1)}\cdots \cT_{\tau(1)} \bsv\|^2_{M_{\tau(n)}\mathds{1}_n}- \frac{\eta(\alpha)}{1 - \eta(\alpha)}\|[\cT_\tau \u]_n -[\cT_\tau \bsv]_{\tau(n)}\|^2 \nonumber \\
\le& \|\cT_{\tau(n-2)}\cdots \cT_{\tau(1)} \u - \cT_{\tau(n-2)}\cdots \cT_{\tau(1)} \bsv\|^2_{M_{\tau(n-1)}M_{\tau(n)}\mathds{1}_n}\nonumber \\
&\quad \quad-\frac{\eta(\alpha)}{1 - \eta(\alpha)} \sum_{i=n-1}^n   \|[\cT_\tau \u]_{\tau(i)} -[\cT_\tau \bsv]_{\tau(i)}\|^2  \\
\le& \cdots \nonumber \\
\le& \|\u - \bsv\|^2_{M_{\tau(1)}\cdots M_{\tau(n)}\mathds{1}_n} -  \frac{\eta(\alpha)}{1 - \eta(\alpha)} \sum_{i=1}^n \|[\cT_\tau \u]_{\tau(i)} -[\cT_\tau \bsv]_{\tau(i)} \|^2 \nonumber \\
=&\|\u - \bsv\|^2_{{M_{\tau(1)}\cdots M_{\tau(n)}\mathds{1}_n}} -  \frac{\eta(\alpha)}{1 - \eta(\alpha)} \|\cT_\tau \u -\cT_\tau \bsv \|^2 .\nonumber
\end{align}

Taking expectation on both sides and use the fact \eqref{eqn:exp-m1}, we reach 
\begin{equation}
    \EE_{\tau}\,\|\cT_\tau\u-\cT_\tau\bsv\|^2\leq \|\u-\bsv\|^2-\frac{\eta(\alpha)}{1-\eta(\alpha)}\EE_{\tau}\,\|\cT_\tau\u-\cT_\tau\bsv\|^2.
\end{equation}
The left part to show contraction of $S_\tau$ in expectation is the same as \eqref{eqn:S-left}.
\end{proof}

Based on Lemma \ref{lem:sconv-contrc}, we  are able to prove Theorem \ref{thm-sc}.
When samples are drawn via $\pi$-order cyclic sampling, recall that $\z^{kn} = \cS_\pi \z^{(k-1)n}$ and $\z^{\star} = \cS_\pi \z^{\star}$, we have 
\begin{align}\label{eqn:contract-pi}
\|\z^{kn} - \z^{\star}\|_{\pi}^2  \le \Big( 1 - \frac{2\theta\alpha \mu L}{\mu + L}  \Big)^k \|\z^{0} - \z^{\star}\|^2_{\pi}.
\end{align}
The corresponding inequality for random reshuffling is
\begin{equation}\label{eqn:conrtact-tau}
    \EE\,\|\z^{kn} - \z^{\star}\|^2 \le \Big( 1 - \frac{2\theta\alpha \mu L}{\mu + L}  \Big)^k \|\z^{0} - \z^{\star}\|^2.
\end{equation}
 Notice that 
\begin{align}\label{eqn:contrac-x}
\|x^{kn} - x^\star\|^2=&\|\prox_{\alpha r}(\cA\z^{kn}) - \prox_{\alpha r}(\cA\z^{\star})\|^2\nonumber\\
\leq&\|\cA\z^{kn} - \cA\z^{\star}\|^2\nonumber\\
\leq &\begin{cases}
&\frac{\log(n)+1}{n}\|\z^{kn}-\z^{\star}\|^2_\pi\quad\mbox{for $\pi$-order cyclic sampling}\\
&\frac{1}{n}\|\z^{kn}-\z^{\star}\|^2\quad\mbox{for random reshuffling}.\\
\end{cases}
\end{align}
Combining \eqref{eqn:contrac-x} with \eqref{eqn:contract-pi} and \eqref{eqn:conrtact-tau}, we reach
\begin{align}
(\EE)\,\|x^{kn} - x^\star\|^2 \le \Big( 1 - \frac{2\theta\alpha \mu L}{\mu + L}  \Big)^k C
\end{align}
where 
$$C = \begin{cases}
&\frac{\log(n)+1}{n}\|\z^{kn}-\z^{\star}\|^2_\pi\quad\mbox{for $\pi$-order cyclic sampling}\\
&\frac{1}{n}\|\z^{kn}-\z^{\star}\|^2\quad\mbox{for random reshuffling}.\\
\end{cases}$$
\end{proof}
\begin{remark}
One can taking expectation over cyclic order $\pi$ in \ref{eqn:sc-rate} to obtain the convergence rate of Prox-DFinito shuffled once before training begins
\[
\EE\|x^{kn}-x^\star\|^2\le \big(1-\frac{2\theta\alpha\mu L}{\mu +L}\big)^kC
\]
where $C=\frac{(n+1)(\log(n)+1)}{2n^2}\|\z^0-\z^\star\|^2$.
\end{remark}

\section{Optimal Cyclic Order}\label{app:opt-cyc-order}
\begin{proof}
We sort all $\{\|z_i^0 - z_i^\star\|^2\}_{i=1}^n$ and denote the index of the  $\ell$-th largest term $\|z_i^0-z_i^\star\|^2$ as $i_\ell$. The optimal cyclic order $\pi^\star$ can be represented by $\pi^\star = (i_1, i_2, \cdots, i_{n-1}, i_n).$
Indeed, due to sorting inequality, it holds for any arbitrary fixed order $\pi$ that 
$\|\z^0 \hspace{-0.6mm}-\hspace{-0.6mm} \z^\star\|^2_{\pi^\star} = \sum_{\ell=1}^n \frac{\ell}{n}\|z_{i_\ell} \hspace{-0.6mm}-\hspace{-0.6mm} z^\star_{i_{\ell}}\|^2 
\le \sum_{\ell=1}^n \frac{\ell}{n}\|z_{\pi(\ell)} \hspace{-0.8mm}-\hspace{-0.8mm} z_{\pi(\ell)}^\star\|^2 = \|\z^0 \hspace{-0.8mm}-\hspace{-0.8mm} \z^\star\|^2_{\pi}.
$
\end{proof}

\section{Adaptive importance reshuffling}\label{app:adaptive}
\begin{proposition}
Suppose $\z^{kn}$ converges to $\z^\star$ and $\{\|z_i^0-z_i^\star\|^2:\,1\leq i\leq n\}$ are distinct, then there exists $k_0$ such that 
\[
\min\limits_{g\in \partial\,r(x^{kn})}\|\nabla F(x^{kn})+g\|^2\leq \frac{CL^2}{(k+1-k_0)\theta (1-\theta)}
\]
where $\theta\in(0,1)$ and $C=\big(\frac{2}{\alpha L}\big)^2\frac{\log(n)+1}{n}\|\z^{k_0}-\z^\star\|^2_{\pi^{\star}}$.
\end{proposition}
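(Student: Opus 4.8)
The plan is to show that the adaptive reshuffling order becomes \emph{frozen} at the optimal cyclic order $\pi^\star$ after finitely many epochs, and then to invoke Theorem~\ref{thm:conv-cyc} with its time origin shifted to that epoch. Throughout, $\pi^\star$ denotes the reverse sorting of $\{\|z_i^0-z_i^\star\|^2\}_{i=1}^n$ as in Proposition~\ref{thm-opt-ordering}, and I write $\z^{k_0n}$ for the iterate the statement calls $\z^{k_0}$.

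\textbf{Step 1: the weights converge.} I would first show $w^k(i)\to\|z_i^0-z_i^\star\|^2$ for every $i$. Since $\z^{kn}\to\z^\star$ by hypothesis, the quantity $a_k(i):=\|z_i^0-z_i^{(k+1)n}\|^2$ appearing in \eqref{eqn:w-update} satisfies $a_k(i)\to\|z_i^0-z_i^\star\|^2$. The recursion $w^{k+1}(i)=(1-\gamma)w^k(i)+\gamma a_k(i)$ with $\gamma\in(0,1)$ is a geometrically weighted running average of the convergent sequence $\{a_k(i)\}$, and such an average converges to the same limit; hence $w^k(i)\to\|z_i^0-z_i^\star\|^2$.

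\textbf{Step 2: the order freezes.} By hypothesis the numbers $\{\|z_i^0-z_i^\star\|^2\}_{i=1}^n$ are pairwise distinct, so there is a gap $\delta>0$ with $\bigl|\,\|z_i^0-z_i^\star\|^2-\|z_j^0-z_j^\star\|^2\,\bigr|\ge\delta$ for all $i\neq j$. By Step~1 I can pick $k_0$ so large that $\bigl|w^k(i)-\|z_i^0-z_i^\star\|^2\bigr|<\delta/2$ for all $i$ and all $k\ge k_0$; then for every $k\ge k_0$ the sorting of $\{w^k(i)\}$ agrees with the sorting of $\{\|z_i^0-z_i^\star\|^2\}$, so the reshuffling order chosen in epoch $k$ equals $\pi^\star$.

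\textbf{Step 3: shift the origin and re-apply Theorem~\ref{thm:conv-cyc}.} For $k\ge k_0$, Algorithm~\ref{alg2} is literally Prox-DFinito run with the fixed cyclic order $\pi^\star$; the dynamics from then on depend only on the state $\z^{k_0n}$ (the weight vector no longer influences the iterates), so by Proposition~\ref{prop:pi} we have $\z^{(k+1)n}=(1-\theta)\z^{kn}+\theta\,\cT_{\pi^\star}\z^{kn}$ for all $k\ge k_0$, with $\cT_{\pi^\star}\z^\star=\z^\star$ by Proposition~\ref{prop:opt} and $x^{kn}=\prox_{\alpha r}\circ\cA\z^{kn}$ by \eqref{alg:operator-x}. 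Re-running the arguments of Lemma~\ref{lem:cyc-1/k} and Theorem~\ref{thm:conv-cyc} verbatim, but telescoping from $k_0$ rather than $0$, replaces every factor $k+1$ by $k+1-k_0$ and replaces $\|\z^0-\z^\star\|^2_{\pi^\star}$ by $\|\z^{k_0n}-\z^\star\|^2_{\pi^\star}$, delivering exactly the claimed bound with $C=\bigl(\tfrac{2}{\alpha L}\bigr)^2\tfrac{\log(n)+1}{n}\|\z^{k_0n}-\z^\star\|^2_{\pi^\star}$.

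\textbf{Main obstacle.} The only non-cosmetic point is Step~2: upgrading ``$w^k(i)$ converges for each $i$'' to ``the induced permutation is eventually constant.'' This is precisely where the distinctness hypothesis is used, producing the uniform margin $\delta$; without it the ranking could oscillate indefinitely and no finite $k_0$ would exist. Everything after that is a mechanical restart of the cyclic-sampling analysis already carried out in Appendix~\ref{app:thm1}.
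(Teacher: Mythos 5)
Your proposal is correct and follows essentially the same route as the paper's own proof: establish that the weights $w^k(i)$ converge to $\|z_i^0-z_i^\star\|^2$, use the distinctness hypothesis to extract a margin so the induced ordering freezes at $\pi^\star$ for all $k\ge k_0$, and then rerun the cyclic-sampling analysis of Theorem~\ref{thm:conv-cyc} with the time origin shifted to $k_0$. Your write-up is in fact somewhat more careful than the paper's (which only sketches the weight-convergence step and cites the reshuffling appendix where the cyclic one is clearly intended), but there is no substantive difference in approach.
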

\begin{proof}
Since $\z^{kn}$ converges to $\z^\star$, we have $w^{k+1}$ converges to $\|z^0-\z^\star\|^2$. Let $\epsilon = \frac{1}{2}\min\{\big|\|z_i^0-z_i^\star\|^2-\|z_j^0-z_j^\star\|^2\big|:1\leq i\neq j\leq n\}$, then there exists $k_0$ such that $\forall\,k\geq k_0$, it holds that $|w^k(i)-\|z_i^0-z_i^\star\|^2|< \epsilon$ and hence the order of $\{w^k(i):1\leq i\leq n\}$ are the same as $\pi^\star$ for all $k\geq k_0$. Further more by the same argument as Appendix \ref{app:thm2}, we reach the conclusion.
\end{proof}

\section{Best known guaranteed step sizes of variance reduction methods under without-replacement sampling}\label{app:step-size}
\begin{table}[h]
    \centering
        \caption{Step sizes suggested by best known analysis}
    \begin{tabular}{cccc}
    \toprule
         Step size& DFinito & SVRG & SAGA \vspace{1mm} \\\midrule
         RR & $\frac{2}{L+\mu}$ & $\frac{1}{\sqrt{2}Ln}$ if $n\geq \frac{2L}{\mu}\frac{1}{1-\frac{\mu}{\sqrt{2}L}}$ else $\frac{1}{2\sqrt{2}Ln}\sqrt{\frac{\mu}{L}}$  \cite{malinovsky2021random}& $\frac{\mu}{11L^2n}$  \cite{ying2020variance} \vspace{1mm}\\
         Cyc. sampling & $\frac{2}{L+\mu}$ & $\frac{1}{4Ln}\sqrt{\frac{\mu}{L}} $ \cite{malinovsky2021random}& $\frac{\mu}{65L^2\sqrt{n(n+1)}}$ \cite{park2020linear}\\
         \bottomrule
    \end{tabular}
    \label{tab:theo-step-size}
\end{table}

\section{Existence of highly heterogeneous instance}\label{app:construct}
\begin{proposition}
Given sample size $n$, strong convexity $\mu$, smoothness parameter $L\,(L>\mu)$, step size $\alpha$ and initialization $\{z_i^0\}_{i=1}^n$, there exist $\{f_i\}_{i=1}^n$ such that $f_i(x)$ is $\mu$-strongly convex and $L$-smooth with fixed-point $\z^\star$ satisfying  $\|\z^0-\z^\star\|^2_{\pi^\star}=\cO(\frac{1}{n})\|\z^0-\z^\star\|^2$.
\end{proposition}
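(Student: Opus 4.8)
The plan is to construct the $f_i$ explicitly as perturbed isotropic quadratics, chosen so that the residual vectors $r_i := z_i^0 - z_i^\star$ have a geometrically decaying norm profile; the heterogeneity ratio then collapses to $\cO(1/n)$ by a direct geometric-series estimate. Throughout I will take $r\equiv 0$, so that $\prox_{\alpha r}=I$ and, by Proposition~\ref{prop:opt}, a point $\z^\star$ is a fixed point of the recursion if and only if $x^\star:=\cA\z^\star$ minimizes $F=\frac{1}{n}\sum_i f_i$ and $z_i^\star = x^\star-\alpha\nabla f_i(x^\star)$ (cf.\ Remark~\ref{remark-z_i-star}).

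The key step is to run this characterization backwards. Fix a constant $\beta\in(0,1)$ (say $\beta=\tfrac12$) and a unit vector $e\in\RR^d$, and prescribe the target residuals $r_i:=\beta^{(i-1)/2}e$, so that $\|r_i\|^2=\beta^{i-1}$. Setting $\bar{r}:=\frac{1}{n}\sum_i r_i$, $x^\star := \bar{z}^0 - \bar{r}$ with $\bar{z}^0=\frac{1}{n}\sum_i z_i^0$, and $g_i := \alpha^{-1}(r_i - z_i^0 + x^\star)$, a one-line computation yields $\sum_i g_i = 0$ and $x^\star - \alpha g_i = z_i^0 - r_i$. I then take
\[
f_i(x) := \tfrac{\mu}{2}\|x-x^\star\|^2 + \langle g_i,\, x-x^\star\rangle ,
\]
which is $\mu$-strongly convex and $\mu$-smooth, hence $L$-smooth since $L>\mu$, and has $\nabla f_i(x^\star)=g_i$. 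Because $\nabla F(x^\star)=\frac{1}{n}\sum_i g_i = 0$ and $F$ is strongly convex, $x^\star$ is its unique minimizer; by Proposition~\ref{prop:opt} the associated fixed point satisfies $z_i^\star = x^\star-\alpha g_i$, so $z_i^0 - z_i^\star = r_i$ and $\|z_i^0-z_i^\star\|^2=\beta^{i-1}$, and these values are distinct.

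To finish, since $\beta^{i-1}$ is strictly decreasing in $i$, Proposition~\ref{thm-opt-ordering} gives $\pi^\star=(1,2,\dots,n)$, so
\[
\|\z^0-\z^\star\|^2_{\pi^\star}=\frac{1}{n}\sum_{i=1}^n i\,\beta^{i-1}\le \frac{1}{n(1-\beta)^2},\qquad \|\z^0-\z^\star\|^2=\sum_{i=1}^n \beta^{i-1}\ge 1 ,
\]
and therefore $\|\z^0-\z^\star\|^2_{\pi^\star}\le \frac{1}{(1-\beta)^2}\cdot\frac{1}{n}\,\|\z^0-\z^\star\|^2=\cO(1/n)\|\z^0-\z^\star\|^2$ with an $n$-independent constant $\frac{1}{(1-\beta)^2}$. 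I expect the only delicate point to be the consistency of the backward construction: verifying simultaneously that $x^\star-\alpha g_i$ equals the prescribed $z_i^0-r_i$ and that $\sum_i g_i=0$, since these are precisely the two conditions that make $x^\star$ a genuine minimizer of $F$ and hence let Proposition~\ref{prop:opt} certify that $\z^\star$ is the fixed point; the remaining work (the smoothness and strong-convexity bookkeeping for the $f_i$ and the two geometric-series inequalities) is routine.
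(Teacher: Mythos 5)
Your construction is correct and follows essentially the same route as the paper's: both reverse-engineer a quadratic instance whose residuals $\|z_i^0-z_i^\star\|^2$ decay geometrically like $\beta^{i-1}$ and then bound the ratio $\|\z^0-\z^\star\|^2_{\pi^\star}/\|\z^0-\z^\star\|^2$ by the same geometric-series computation. Your instantiation via $f_i(x)=\tfrac{\mu}{2}\|x-x^\star\|^2+\langle g_i,\,x-x^\star\rangle$ is in fact slightly cleaner than the paper's least-squares version (which must solve $A_i^T\delta_i=\tfrac{1}{\alpha}(z_i^0-v-q^{i-1}t_i)$ for auxiliary vectors $\delta_i$, requiring $k\ge d$), at the mild cosmetic cost that your $f_i$ have gradient-Lipschitz constant exactly $\mu$ rather than anywhere in $[\mu,L]$ --- still $L$-smooth in the standard sense, so the statement is satisfied.
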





\begin{proof}
 Let $f_i(x)=\frac{1}{2}\|A_ix-b_i\|^2$, we show that one can obtain a desired instance by letting $\lambda =\mu$ and  choosing proper $A_i\in\RR^{k\times d}$, $b_i\in\RR^k$.

First we generate a positive number $\beta=q^2\in(0,1)$ and vectors $\{t_i\in\RR^d:\|t_i\|=\sqrt{n}\}_{i=1}^n$. Let $v=\frac{1}{n}\sum\limits_{i=1}^n(z_i^0-q^{i-1}t_i)$.
Then we generate $A_i\in\RR^{k\times d}$ such that $\mu I\preceq A_i^TA_i\preceq LI$, $1\leq i\leq n$, which assures $f_i$ are $\mu$-strongly convex and $L$-smooth. 

After that we solve $A_1^T\delta_1=\frac{1}{\alpha}(z_i^0-v-q^{i-1}t_i)$, $\forall\,1\leq i\leq n$. Note these $\{\delta_i\}_{i=1}^n$ exist as long as we choose $k\geq d$. Therefore we have $\sum\limits_{i=1}^nA_i^T\delta_i=\sum\limits_{i=1}^n\frac{1}{\alpha}(z_i^0-v-q^{i-1}t_i)=0$ by our definition of $v$.

Since $\nabla f_i(x)=A_i^T(A_ix-b_i)$, then by letting $b_i=A_iv+\delta_i$, it holds that 
\begin{equation}
    \begin{split}
        \nabla F(x) &=\frac{1}{n}\sum\limits_{i=1}^n\nabla f_i(x)\\
        &=\frac{1}{n}\sum\limits_{i=1}^n A_i^T(A_ix-b_i)\\
        &=\frac{1}{n}\sum\limits_{i=1}^n A_i^TA_i(x-v)-\frac{1}{n}\sum\limits_{i=1}^nA_i^T\delta_i\\
        &=\frac{1}{n}\sum\limits_{i=1}^n A_i^TA_i(x-v)
    \end{split}
\end{equation}
and hence we know $\nabla F(v)=0$, i.e., $x^\star=v$ is a global minimizer.

We finally follow Remark \ref{remark-z_i-star} to reach
\begin{align}
    z_i^\star&=(I-\alpha \nabla f_i)(x^\star)\nonumber\\
    &=v-\alpha A_i^T(A_iv-b_i)\nonumber\\
    &=v+\alpha A_i^T\delta_i\nonumber\\
    &=z_i^0-q^{i-1}t_i
\end{align}
and hence $\|z_i^0-z_i^\star\|^2=q^{2(i-1)}\|t_i\|^2=n\beta^{i-1}$. By direct computation, we can verify that $\|\z^0-\z^\star\|^2_{\pi^\star}=\cO(\frac{1}{n})\|\z^0-\z^\star\|^2$.
\end{proof}

\section{More experiments}\label{app:more-experi}
\textbf{DFinito vs SGD vs GD.}
In this experiment, we compare DFinito with full-bath gradient descent (GD) and SGD under RR and cyclic sampling (which is analyzed in \cite{mishchenko2020random}). We consider the regularized logistic regression task with  MNIST dataset (see Sec.~\ref{expri:theory}). In addition, we manipulate the regularized term $\frac{\lambda}{2}\|x\|^2$ to achieve cost functions with different condition numbers. Each algorithm under RR is averaged across 8 independent runs. We choose optimal constant step sizes for each algorithm using the grid search.
\begin{figure*}[h]
	\centering
	\vskip -0.1in
	\subfigbottomskip=2pt 
	\subfigcapskip=-5pt 
	\subfigure{
		\includegraphics[width=0.3\linewidth]{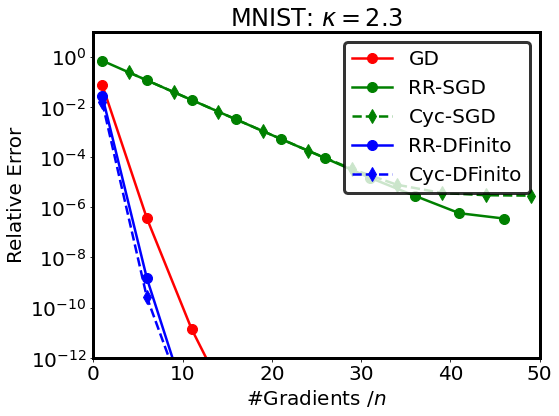}}
	\subfigure{
		\includegraphics[width=0.3\linewidth]{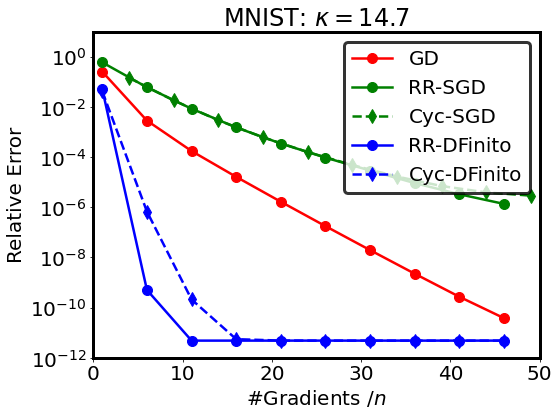}}
    \subfigure{
		\includegraphics[width=0.3\linewidth]{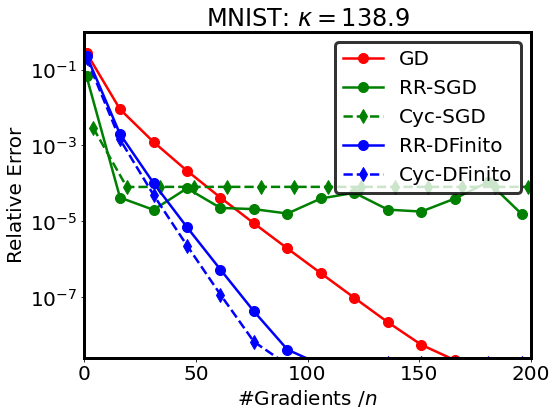}}
\caption{\small Comparison between DFinito, SGD and GD over MNIST across different conditioning scenarios. The relative error indicates $(\EE)\|\nabla x-x^\star \|^2/\|x^0-x^\star\|^2$.}
\label{fig:vs-gd-sgd}
\end{figure*}
In Fig.~\ref{fig:vs-gd-sgd}, it is observed that SGD under without-replacement sampling works, but it will get trapped to a neighborhood around the solution and cannot converge exactly. In contrast, DFinito will converge to the optimum. It is also observed that DFinito can outperform GD in all three scenarios, and the superiority can be significant for certain condition numbers. While this paper establishes that the DFinito under without-replacement sampling shares the same theoretical gradient complexity as GD (see Table \ref{table-introduction-comparison}), the empirical results in Fig.~\ref{fig:vs-gd-sgd} imply that DFinito under without-replacement sampling may be endowed with a  better (though unknown) theoretical gradient complexity than GD. We will leave it as a future work.

\textbf{Influence of various data heterogeneity on optimal cyclic sampling.} According to Proposition \ref{prop-outperform-cond}, the performance of DFinito with optimal cyclic sampling is highly influenced by the data heterogeneity. In a highly data-heterogeneous scenario, the ratio $\rho = \cO(1/n)$. In a data-homogeneous scenario, however, it holds that $\rho = \cO(1)$. In this experiment, we examine how DFinito with optimal cyclic sampling converges with varying data heterogeneity. To this end, we construct an example in which the data heterogeneity can be manipulated artificially and quantitatively. Consider a problem in which each $f_i(z)=\frac{1}{2}(a_i^Tx)^2$ and $r(x)=0$. Moreover, we generate $A=\mbox{col}\{a_i^T\}\in\RR^{n \times d}$ according to the uniform Gaussian distribution. We choose $n=100$ and $d=200$ in the experiment, generate $p_0\in \RR^d$ with each element following distribution $\cN(0,\sqrt{n})$,
and initialize $z_i^0={p_0}/{\sqrt{c}}$, $1\leq i\leq c$ otherwise $z_i^0=0$.
Since $x^\star = 0$ and $\nabla f_i(x^\star) = 0$, we have $z_i^\star = 0$. It then holds that $\|\z^0-\z^\star\|^2 = \sum_{i=1}^c \|z_i^0\|^2 =  \|p_0\|^2$ is  unchanged across different $c\in[n]$. On the other hand, since $\|\z^0-\z^\star\|^2_{\pi^\star} = \sum_{i=1}^c\frac{i}{n}\frac{\|p_0\|^2}{c}=\cO(\frac{c}{n} \|p_0\|^2)$, we have $\rho = \cO(c/n)$. Apparently, ratio $\rho$ ranges from $\cO(1/n)$ to $\cO(1)$ as $c$ increases from $1$ to $n$. For this reason, we can manipulate the data heterogeneity by simply adjusting $c$. We also depict DFinito with random-reshuffling (8 runs' average) as a baseline. Figure \ref{fig:cyc-vs-rr} illustrates that the superiority of optimal cyclic sampling vanishes gradually as the data heterogeneity decreases.
 \begin{figure*}[h]
	\centering
	\vskip -0.1in
	\subfigbottomskip=2pt 
	\subfigcapskip=-5pt 
	\subfigure{
		\includegraphics[width=0.3\linewidth]{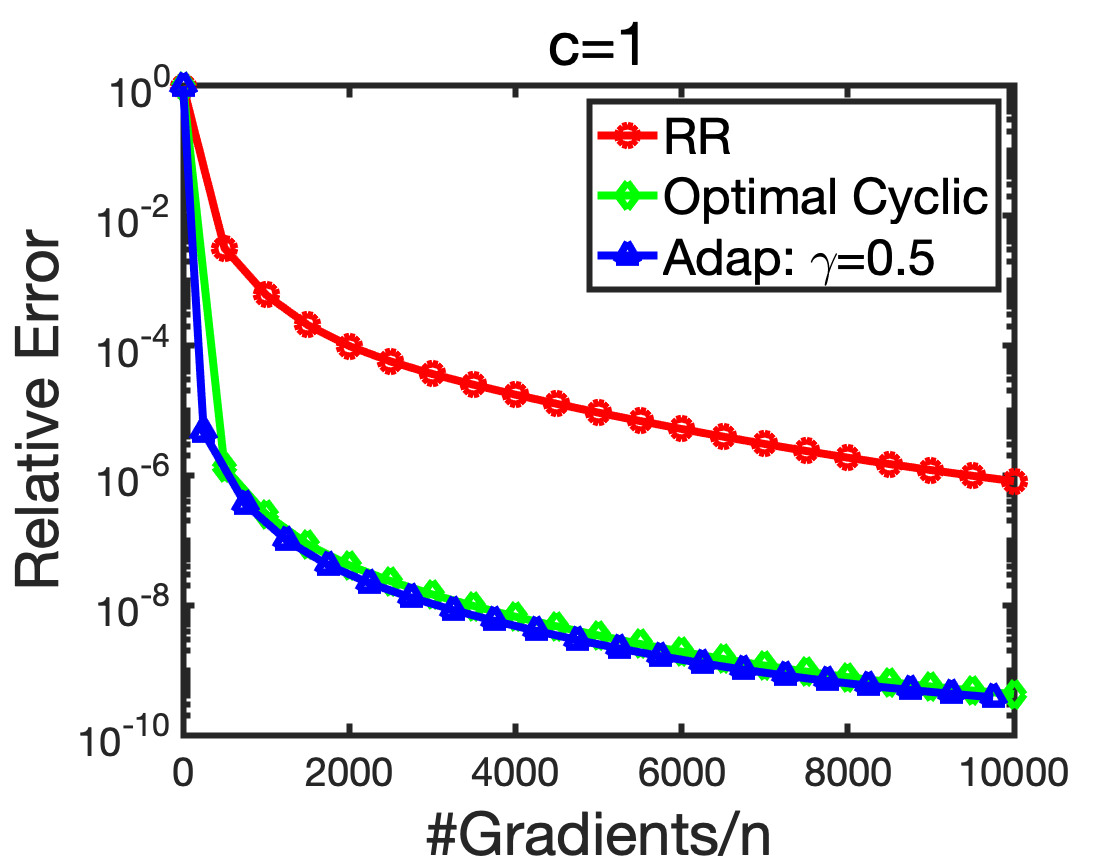}}
	\subfigure{
		\includegraphics[width=0.3\linewidth]{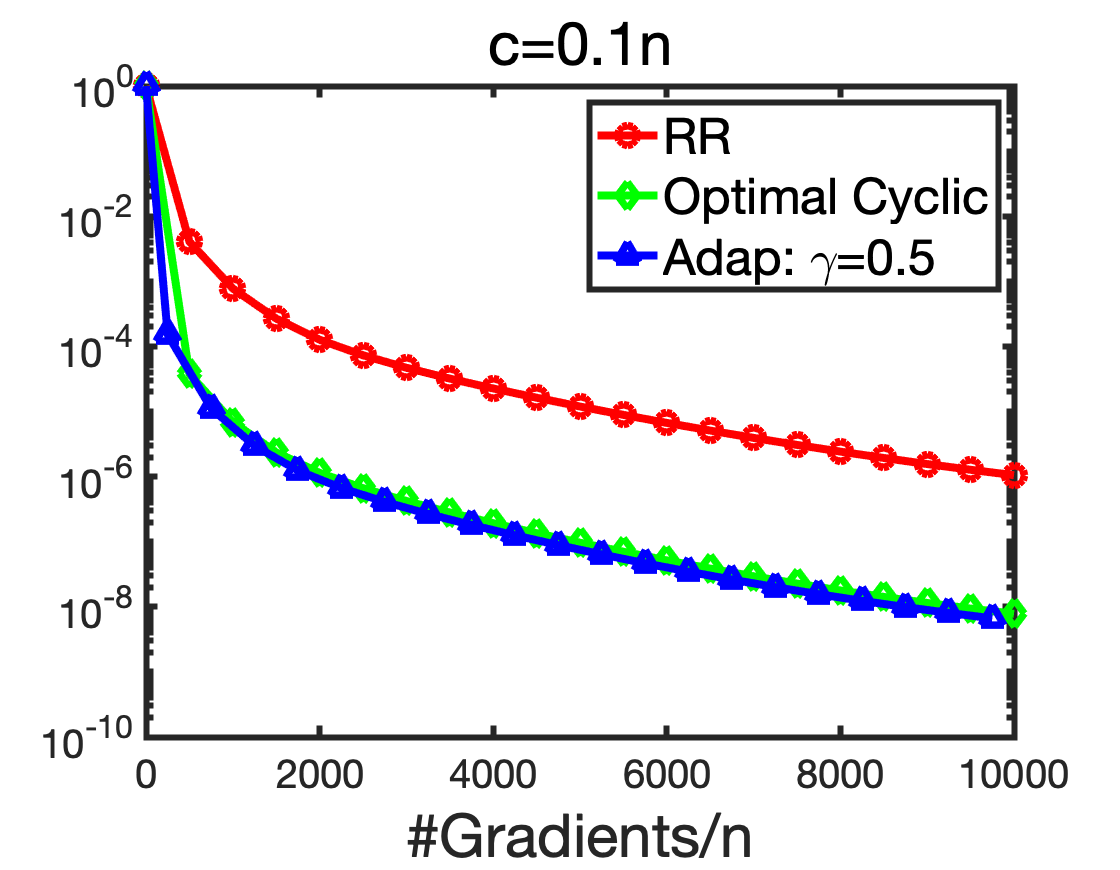}}
    \subfigure{
		\includegraphics[width=0.3\linewidth]{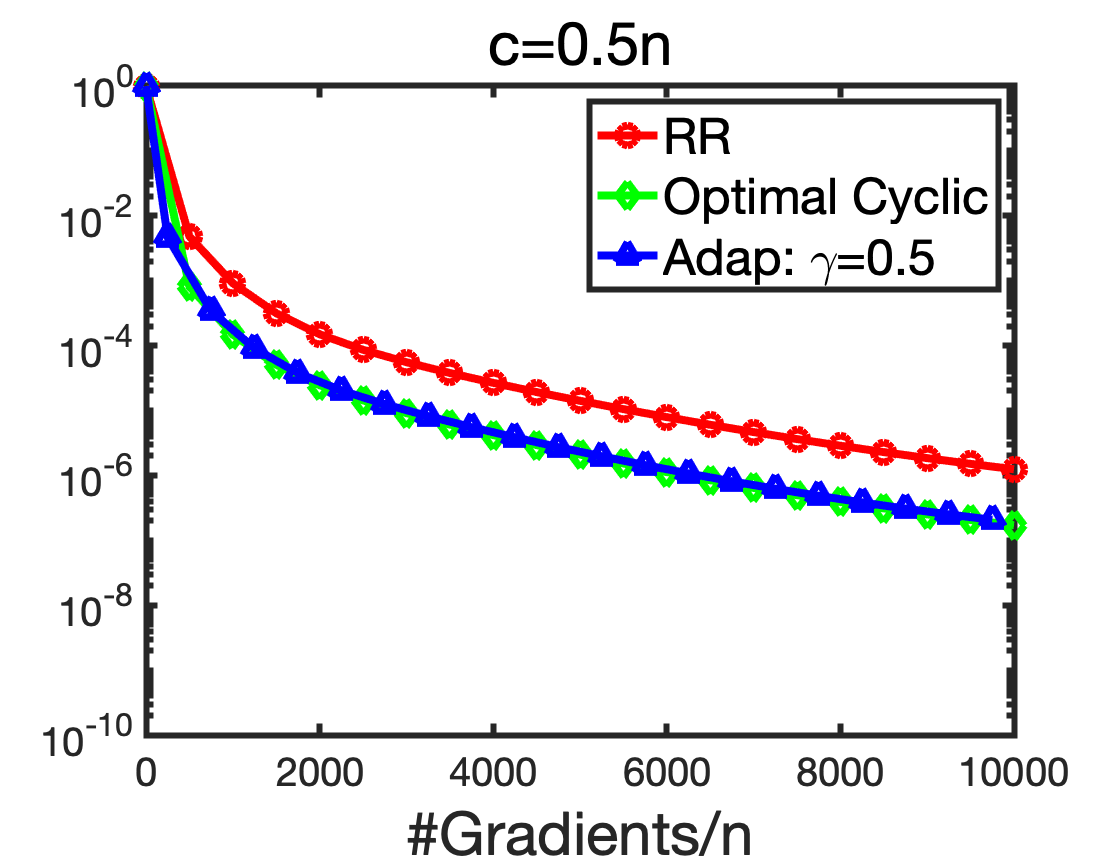}}
\caption{\small Comparison between various sampling fashions of DFinito across different data-heterogeneous scenarios. The relative error indicates $(\EE)\|\nabla F(x)\|^2/\|\nabla F(x^0)\|^2$.}
	    \label{fig:cyc-vs-rr}
\end{figure*}

\textbf{Comparison with empirically optimal step sizes.} Complementary to Sec.~\ref{expri:theory}, we also run  experiments to compare variance reduction methods under uniform sampling (US) and random reshuffling (RR) with empirically optimal step sizes by grid search, in which full gradient is computed once per two epochs for SVRG. We run experiments for regularized logistic regression task with CIFAR-10 ($\kappa=405$), MNIST ($\kappa=14.7$) and COVTYPE ($\kappa=5.5$), where $\kappa={L}/{\mu}$ is the condition number. All algorithms are averaged through 8 independent runs. 
\begin{figure*}[h]
	\centering
\vskip -0.1in
	\subfigbottomskip=2pt 
	\subfigcapskip=-5pt 
	\subfigure{
		\includegraphics[width=0.3\linewidth]{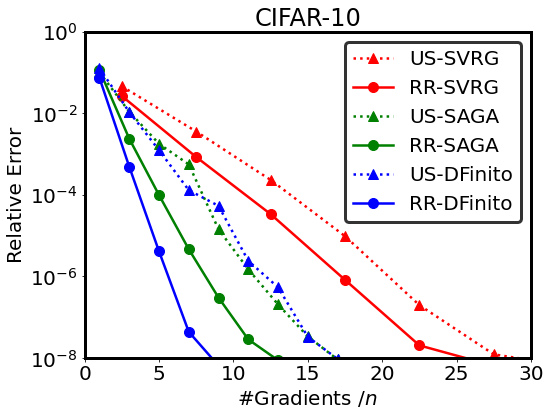}}
	\subfigure{
		\includegraphics[width=0.3\linewidth]{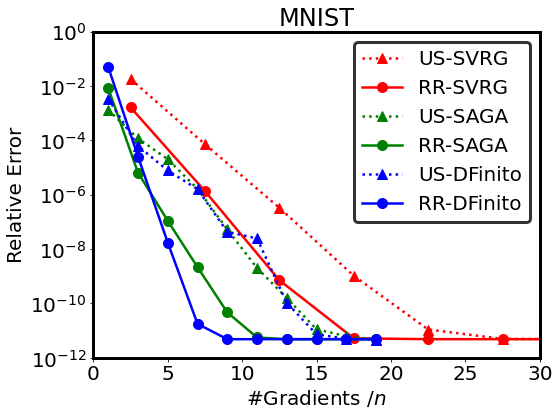}}
    \subfigure{
		\includegraphics[width=0.3\linewidth]{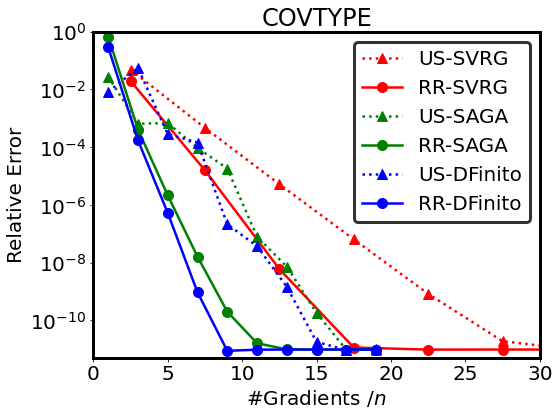}}
\caption{\small Comparison of variance reduced methods under with/without-replacement sampling. The relative error indicates $(\EE)\|x-x^\star\|^2/\|x^0-x^\star\|^2$.}
	    \label{fig:vr-empirical}
\end{figure*}
From Figure \ref{fig:vr-empirical}, it is observed that all three variance reduced algorithms achieve better performance under RR, and DFinito outperforms SAGA and SVRG under both random reshuffling and uniform sampling. While this paper and all other existing results listed in Table \ref{table-introduction-comparison} (including \cite{malinovsky2021random}) establish that the variance reduced methods with random reshuffling 
 have worse theoretical gradient complexities than uniform-iid-sampling, the empirical results in  Fig.~\ref{fig:vr-empirical} imply that variance reduced methods with random reshuffling may be endowed with a better (though unknown) theoretical gradient complexity than uniform-sampling. We will leave it as a future work.

\end{document}